\newcommand{\cmark}{\ding{51}}%
\newcommand{\xmark}{\ding{55}}%
\setlist[itemize]{leftmargin=2.5em}
\setlist[itemize]{leftmargin=2.5em}
\setlist[enumerate]{leftmargin=2.5em}
\newtheorem{theorem}{Theorem}
\newtheorem{definition}[theorem]{Definition}
\newtheorem{proposition}[theorem]{Proposition}
\newtheorem{counter-example}[theorem]{Counter example}
\newtheorem{open question}[theorem]{Open question}
\newcommand{\norm}[1]{\left \lVert #1 \right \rVert}
\DeclareMathOperator{\poly}{poly}
\def\ve{{\bm{e}}}
\def\vu{{\bm{u}}}
\def\vv{{\bm{v}}}
\def\mW{{\bm{W}}}
\def\mX{{\bm{X}}}
\def\gA{{\mathcal{A}}}
\def\gF{{\mathcal{F}}}
\newcommand{\R}{\mathbb R}
\newcommand{\softmax}{\mathrm{softmax}}
\definecolor{myGold}{RGB}{231,141,20}
\definecolor{myBlue}{rgb}{0.19,0.41,.65}
\definecolor{myPurple}{RGB}{175,0,124}
\definecolor{Gred}{RGB}{219, 50, 54}
\definecolor{Ggreen}{RGB}{60, 186, 84}
\definecolor{Gblue}{RGB}{72, 133, 237}
\definecolor{Gyellow}{RGB}{247, 178, 16}
\def \Tmax {C} %
\def \ezero {\ve^{(0)}}
\def \eone {\ve^{(1)}}
\def \etwo {\ve^{(2)}}
\definecolor{goldenrod}{RGB}{184,134,11}
\definecolor{mpl_magenta}{RGB}{194,0,120}
\definecolor{mpl_brown}{RGB}{165,42,42}
\title{Exposing Attention Glitches with Flip-Flop Language Modeling}
\date{}
\author{
Bingbin Liu$^1$
\quad Jordan T. Ash$^2$ \;
Surbhi Goel$^{3}$ \;
Akshay Krishnamurthy$^{2}$ \;
Cyril Zhang$^2$\\
\vspace{-2mm} \\
  \normalsize{$^1$Carnegie Mellon University\qquad $^2$Microsoft Research NYC \qquad $^3$University of Pennsylvania}\\
   \vspace{-2mm} \\
\normalsize{ \texttt{bingbinl@cs.cmu.edu, surbhig@cis.upenn.edu,}} \\
\normalsize{ \texttt{\{ash.jordan, akshaykr, cyrilzhang\}@microsoft.com}}
}
\begin{document}

\doparttoc %
\faketableofcontents %

\maketitle

\def\doublecolumn{0}

\begin{abstract}
Why do large language models sometimes output factual inaccuracies and exhibit erroneous reasoning? The brittleness of these models, particularly when executing long chains of reasoning, currently seems to be an inevitable price to pay for their advanced capabilities of coherently synthesizing knowledge, pragmatics, and abstract thought. Towards making sense of this fundamentally unsolved problem, this work identifies and analyzes the phenomenon of \emph{attention glitches},
in which the Transformer architecture's inductive biases intermittently fail to capture robust reasoning.
To isolate the issue, we introduce \emph{flip-flop language modeling} (FFLM), a parametric family of synthetic benchmarks designed to probe the extrapolative behavior of neural language models. This simple generative task requires a model to copy binary symbols over long-range dependencies, ignoring the tokens in between. We find that Transformer FFLMs suffer from a long tail of sporadic reasoning errors, some of which we can eliminate using various regularization techniques. Our preliminary mechanistic analyses show why the remaining errors may be very difficult to diagnose and resolve. We hypothesize that attention glitches account for (some of) the closed-domain hallucinations in natural LLMs.

\if\doublecolumn1
\vspace{-1em}
\else 
    \vspace{0em}
\fi
\end{abstract}

\section{Introduction}
\label{sec:intro}

Recent
advancements in scale have yielded large language models (LLMs) with extraordinary proficiency in nuanced
reasoning with factual knowledge. Despite these achievements, LLMs are known to produce incorrect outputs, often referred to colloquially as ``hallucinations'' or ``distractions''~\citep{ji2023survey}.
Generally, hallucinations refer to the phenomenon that a model's outputs are syntactically and grammatically accurate but factually incorrect.
There are various types of hallucinations, and the focus of this work is the ``closed-domain'' variety~\citep{saparov2022language,openai2023gpt4},
where the model predictions contain factually incorrect or made-up information \emph{according to a given context}, regardless of their correctness in the real~world.

Perhaps surprisingly, such hallucinations can be observed even on simple algorithmic reasoning tasks.
As a warmup, consider the queries shown in Figure 1 (and Appendix~\ref{subsec:app-adder-demo}), where we prompt LLMs to solve addition problems of various lengths.
The responses simultaneously illustrate the following:
\begin{enumerate}[leftmargin=1.5em]
    \item \emph{Nontrivial algorithmic generalization:} In cases where the models succeed, it is unlikely that these exact  numerical sequences appeared in the training data. To correctly output the first digit of the answer, the LLM must resolve a long dependency chain which generally depends on every digit in the input. Somewhere within these networks' internal representations, implementations of addition algorithms have~emerged.
    \item \emph{Sporadic errors (``hallucinations''):} These internal algorithms can be brittle and unreliable, especially when processing long inferential chains. Their failures can be subtle and unpredictable.
\end{enumerate}

In this work, we consider the task of processing the \emph{flip-flop language}, a minimal unit of sequential computation
which consists of memory operations on a single bit (see Definition~\ref{def:flipflop})
and underlies virtually all\footnote{More precisely, whenever the desired algorithm needs to ``store memory'' (i.e. contains a non-invertible state transformation); see Section~\ref{subsec:why-flip-flop}.} syntactic parsing and algorithmic reasoning capabilities (including implementing adders, and far more).
A \emph{flip-flop language modeling} (FFLM) task is defined on sequences of \texttt{write}, \texttt{read}, and \texttt{ignore} instructions: \texttt{write} sets the memory state to a certain value which is later retrieved by \texttt{read}, while \texttt{ignor}ing any contents in between.
We find that when trained to complete flip-flop sequences, the Transformer architecture exhibits a long tail of reasoning errors (incorrect \texttt{read} retrievals), unlike previous-generation recurrent models such as the LSTM~\citep{hochreiter1997long}.
We coin the term \emph{attention glitch} for this phenomenon,
and hypothesize that this captures a systematic failure mode of Transformer-based LLMs when internally manifesting long chains of algorithmic reasoning.

\if\doublecolumn1
    \begin{figure*}
        \centering
        \includegraphics[width=\linewidth]{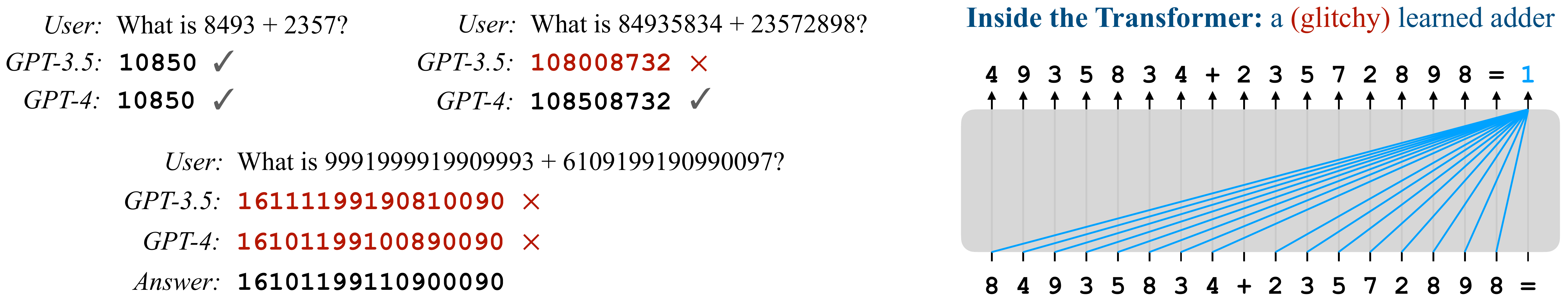}
        \caption{Cherry-picked integer addition prompts, showing how state-of-the-art LLMs can generalize non-trivially on algorithmic sequences, but sporadic reasoning errors persist. The first digit of the correct answer depends on every input; thus, an autoregressive model must propagate a ``carry'' bit across these long-range dependencies in a single pass. This (and many other algorithmic reasoning capabilities) can be implemented by a Transformer model using internal \emph{flip-flops}.}
        \label{fig:adder-demo}
    \end{figure*}
\else
    \begin{figure}
        \centering
        \includegraphics[width=\linewidth]{figures/1-adder.pdf}
        \caption{Cherry-picked integer addition prompts, showing how state-of-the-art LLMs can generalize non-trivially on algorithmic sequences, but sporadic reasoning errors persist. The first digit of the correct answer depends on every input; thus, an autoregressive model must propagate a ``carry'' bit across these long-range dependencies in a single pass. This (and many other algorithmic reasoning capabilities) can be implemented by a Transformer model using internal \emph{flip-flops}.}
        \label{fig:adder-demo}
    \end{figure}
\fi

Our contributions are as follows:

\begin{itemize}[leftmargin=1.5em]
    \item \textbf{FFLM: a minimalistic long-range dependency benchmark.} We propose \emph{flip-flop language modeling}, a parametric family of synthetic benchmarks for autoregressive sequence modeling, designed to isolate and probe reasoning errors like those demonstrated in Figure~\ref{fig:adder-demo}. We view FFLM as a robust complement to the Long Range Arena \citep{tay2020long} and some of the tests in BIG-Bench \citep{srivastava2022beyond}, and recommend measuring glitch rates as a \textbf{``stress test''} for architectural innovations in sequence modeling.\footnote{To get started, see our data release: \url{https://huggingface.co/datasets/synthseq/flipflop}.}
    \item \textbf{Main empirical result: attention attends glitchily.} We find that while Transformer models can appear to learn flip-flop languages perfectly on held-out samples from the training distribution, they make a long tail of unpredictable reasoning errors (\emph{attention glitches}), on both long-range and short-range dependencies. 
    We evaluate various direct and indirect mitigations, including commonly-used regularization techniques and \textbf{attention-sharpening regularizers} --- a plug-and-play way to sparsify self-attention architectures. We find that attention sharpening reduces reasoning errors by an order of magnitude, but none of our attempts were successful in driving the number of errors to exactly 0. Meanwhile, recurrent models work perfectly.\footnote{It could be the case that recurrent models may fail at extremely long dependencies~\citep{khandelwal2018sharp}.}
    \item \textbf{Preliminary mechanistic analyses.}
    We provide some theoretical and empirical explorations which account for some of the internal mechanisms for attention glitches, and why they are so difficult to eliminate completely.
\end{itemize}

\subsection{Related work}
\label{subsec:related}

The challenge of learning long-range dependencies is a long-standing one in the statistical modeling of sequences \citep{samorodnitsky2007long}. The Transformer architecture~\citep{vaswani2017attention}, a paradigm-shifting sequence model, enables the scalable learning of a feedforward hierarchy of meaningful long-range dependencies. Yet, factual errors over long contexts persist in these models; this is the subject of many careful studies in deep NLP  \citep{khandelwal2018sharp,tay2020long,GuoAUONSY22,ji2023survey}.

The sporadic non-factual outputs of LLMs have been popularly called ``hallucinations'', especially when there is an expectation that producing a correct answer is disproportionately ``easy''. Popular approaches for improving robustness to such errors include \emph{chain-of-thought generation} (explicitly outputting intermediate reasoning steps) \citep{nye2021show,wei2022chain} and enforcing self-consistency \citep{wang2022selfconsistency}. In the emerging taxonomy of LLM pathologies \citep{saparov2022language,ji2023survey}, the hallucinations studied in this work are of the \emph{closed-domain} variety, under a deterministic notion of factuality (namely, consistency with flip-flop transitions) which is unambiguously reflected by the training data. We provide further discussion on the connections to natural LLM hallcuinations in Section~\ref{sec:conclusion} and Appendix~\ref{subsec:app-hypothesis}.

\paragraph{Long-range dependency and reasoning benchmarks.}
Many datasets and benchmarks have been designed to isolate qualitative issues in langauge modeling
\citep{tay2020long,wu2021lime,zhang2021pointer,zhang2022unveiling,saparov2022language,shi2023large,poel2023mlregtest,eldan2023tinystories}.
Aside from being focused on the ``smallest'' and ``purest'' compositional unit of sequential reasoning (see Section~\ref{subsec:why-flip-flop}), FFLM is distinguished by a few factors:
\begin{itemize}[leftmargin=1.5em]
    \item \textbf{``$L_\infty$'' objective:} Unlike usual benchmarks, we consider any model with less than $100\%$ accuracy as exhibiting a \emph{reasoning error}. Aside from the motivation of completely eliminating hallucinations, we argue that this stringent notion of correctness is needed to avoid error amplification when flip-flops are embedded in more complex networks; see Appendix~\ref{subsec:flipflop-jargon} and \cite{liu2023transformers}.

    \item \textbf{Parametric, procedurally generated, and generalizable:} Our empirical study precisely quantifies long-tail errors via a large number of replicates over the randomness of both model initialization and data generation.
    This methodology is easily adapted and rescaled (by adjusting $T, \mathbf{p},$ and other difficulty knobs) to probe language models of any size. %
\end{itemize}

We provide an expanded discussion of related literature in Appendix~\ref{subsec:app-related}.
\section{Background and notation}
\label{sec:background}

Modern language models are powered by \emph{sequence-to-sequence (seq2seq) neural networks} $f_\theta : \mathbb{R}^{T\times d} \rightarrow \mathbb{R}^{T\times d}$, which transduce sequences of vectors according to internal computations determined by the inputs as well as trainable parameters $\theta$. When equipped with mappings to and from symbolic tokens (an ``embedding layer'' $E : [M] \rightarrow \mathbb{R}^{d}$ (here, $M \in \mathbb{N}$ is the \emph{vocabulary size}) and classification layer $W : \mathbb{R}^{d} \rightarrow \Delta([M])$, shared across positions), $W \circ f \circ E: [M]^T \rightarrow \Delta([M])^T$ can represent an autoregressive generative model of a joint distribution over tokens $x_{1:T} \in [M]^T$, where the output at the $t$-th position gives the estimated next-symbol probabilities $\widehat\Pr[ x_{t+1} = \cdot \, | \, x_{1:t}]$. The overarching challenge of statistical language modeling is to fit complex distributions such as natural language; recurrent \citep{elman1990finding,hochreiter1997long,wu2016google} and self-attention-based~\citep{vaswani2017attention} architectures have shown remarkable capabilities in fitting the seq2seq functions necessary for fluent linguistic parsing and reasoning.

\paragraph{Recurrent inductive biases, attention, and length generalization.}
To correctly process uniform (i.e. fixed-description-length) algorithmic computations on arbitrarily long sequences, it is natural to embed recurrences within a seq2seq network.
Imitating the recurrent nature of the Turing machine, one can hope for RNNs to learn representations of the desired looped computations \citep{sutskever2013importance,graves2014neural,linzen2016assessing}.
However, the key innovation in the Transformer architecture is a non-recurrent \emph{self-attention} module.\footnote{We define self-attention in Appendix~\ref{app-sec:proofs}.
For in-depth breakdowns of the architectural components of Transformers, see \citep{vaswani2017attention,phuong2022formal,edelman2022inductive}.} Various works have noted that \textbf{Transformers and RNNs learn qualitatively different solutions}, discussing potential ways to reconcile these nuanced discrepancies \citep{dehghani2018universal,abnar2020transferring,liu2023transformers}.
\section{Flip-flop automata and the FFLM task}
\label{sec:flipflop}

\subsection{Definitions}
\label{subsec:flipflop-def}

\if\doublecolumn1
    \begin{figure*}
        \centering
        \if\doublecolumn1
            \includegraphics[width=0.9\linewidth]{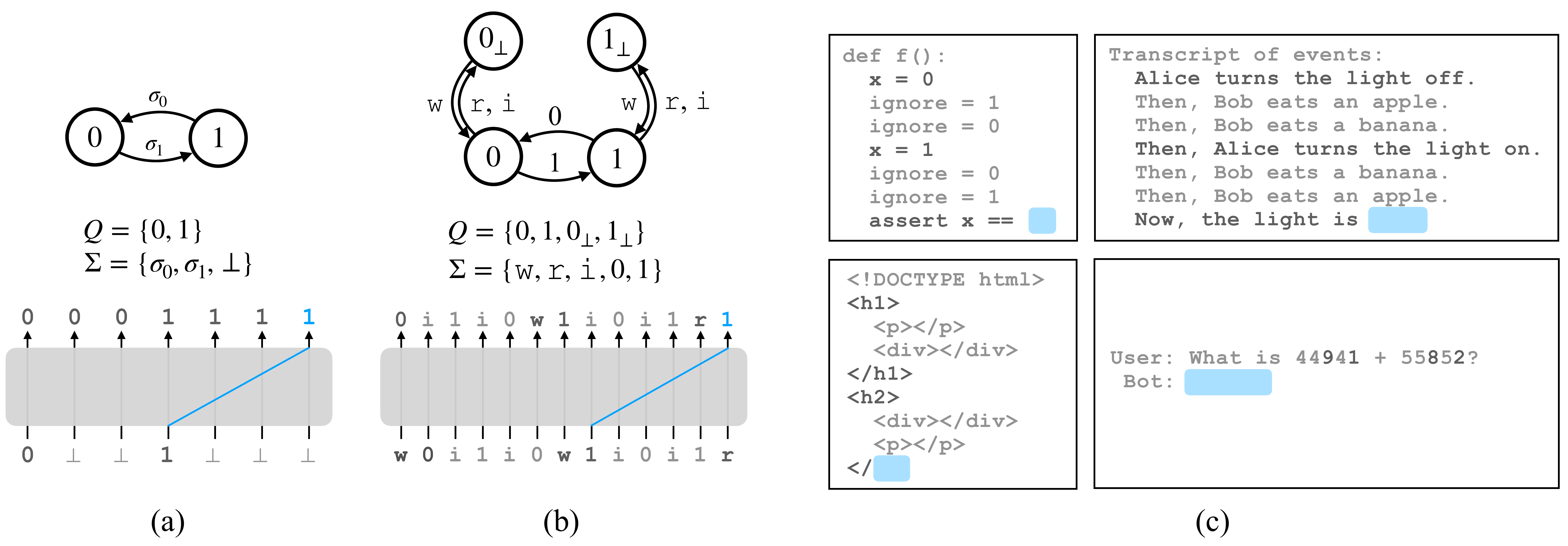}
        \else
            \includegraphics[width=\linewidth]{figures/3-ff.pdf}
        \fi
        \caption{Elementary objects and examples associated with flip-flop languages. (a) the 2-state flip-flop machine (elided transitions are self-loops). (2) A 4-state automaton which processes flip-flop languages (implying the existence of a small RNN). (c) Simple examples of sequential prediction tasks which require processing a flip-flop language.}
        \label{fig:flipflop-defs}
    \end{figure*}
\else 
    \begin{figure}
        \centering
        \includegraphics[width=\linewidth]{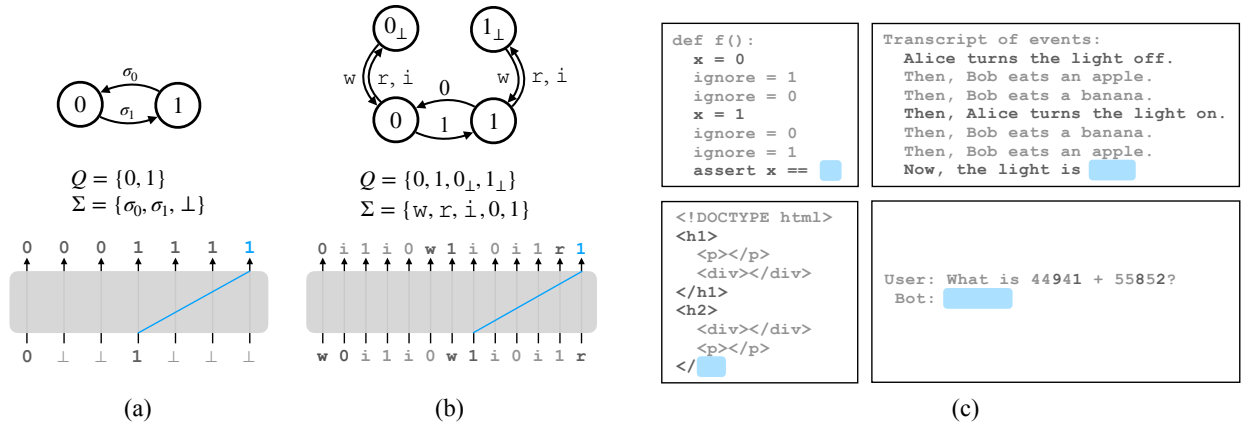}
        \caption{Elementary objects and examples associated with flip-flop languages. (a) the 2-state flip-flop machine (elided transitions are self-loops). (2) A 4-state automaton which processes flip-flop languages (implying the existence of a small RNN). (c) Simple examples of sequential prediction tasks which require processing a flip-flop language.}
        \label{fig:flipflop-defs}
    \end{figure}
\fi

For any even number $T \geq 4$, we define a flip-flop string as a sequence of symbols $\{\texttt{w}, \texttt{r}, \texttt{i}, 0, 1\}^T$, which have the semantics of \emph{instructions} (\texttt{write}, \texttt{read}, \texttt{ignore}) and \emph{data} (one bit). A valid flip-flop string consists of alternating pairs of instructions and data (e.g. ``\texttt{w \textcolor{ForestGreen}{0} i 1 i 0 r \textcolor{ForestGreen}{0}}''), for which every symbol following a \texttt{r} instruction must be equal to the symbol following the most recent \texttt{w}; thus, ``\texttt{w 0 i 1 w \textcolor{red}{1} r \textcolor{red}{0}}'' is not a legal flip-flop string. These sequences can be viewed as correct execution transcripts of a program which can (perhaps occasionally) \texttt{write} to a single bit of memory, and always correctly \texttt{read}s its contents.
All sequences are required to begin with \texttt{w} and end with \texttt{r}.

There are many possible choices of (probabilistic) \emph{flip-flop languages}, which are distributions over valid flip-flop strings.
We define a canonical family of them:
let $\mathsf{FFL}(T,\mathbf{p})$ be the distribution over length-$T$ flip-flop strings, parameterized by $\mathbf{p} = (p_\texttt{w}, p_\texttt{r}, p_\texttt{i}) \in \Delta( 
\{\texttt{w}, \texttt{r}, \texttt{i}\} )$, such that:
\begin{enumerate}[leftmargin=3em]
    \item[(i)] The first instruction $x_1$ is always \texttt{w}, and the last instruction $x_{T-1}$ is always \texttt{r}.
    \item[(ii)] The other instructions are drawn i.i.d. according to $(p_\texttt{w}, p_\texttt{r}, p_\texttt{i})$ with $p_\texttt{i} = 1 - p_\texttt{w} - p_\texttt{r}$.
    \item[(iii)] The nondeterministic data symbols (paired with \texttt{w} or \texttt{i}) are drawn i.i.d. and uniformly.
\end{enumerate}

We are interested in whether language models can learn a flip-flop language from samples, which we define as processing the \texttt{read} operations \emph{perfectly}.
Two variants of the autoregressive language modeling task can be defined on this distribution:
\begin{itemize}[leftmargin=1.5em]
    \item \textbf{Generative (``noisy'') mode:} Estimate the conditional next-token distribution $\Pr[x_{t+1} | x_{1:t}]$, for each $t = 1, \ldots, T-1$. In this mode, the sequences can be treated as drop-in replacements for natural text in GPT-style training. Generative FFLMs can be evaluated by checking their completions on prefix ``prompts'' (e.g. ``... \texttt{w 0 i 1 i 1 i [?]}'').
    \vspace{0.2em}
    
    \item \textbf{Deterministic (``clean'') mode:} Predict only the continuations which are deterministic: correctly output $x_{t+1}$ only for the prefixes $x_{1:t}$ such that $x_t = \texttt{r}$. At the cost of a slight departure from vanilla language modeling, this setting isolates the long-range memory task. It is similar to the non-autoregressive flip-flop monoid
    simulation problem discussed in \cite{liu2023transformers},
    with limited supervision.
    \footnote{We observe similar behaviors across these two settings (see Appendix~\ref{subsec:app-eval}), but we report results on the ``clean'' setting in this paper. Predicting the non-deterministic tokens is irrelevant to the memory task at hand.}
\end{itemize}

These tasks naturally embed the capability of simulating the \emph{flip-flop}, a machine which memorizes a single bit (see Figure~\ref{fig:flipflop-defs}a,b for closely related variants).\footnote{A further discussion of the rationale for this specific manifestation of flip-flop sequence processing is deferred to Appendix~\ref{subsec:why-wri01}.}
It is easy to see that
recurrent networks and 2-layer Transformers (see Proposition \ref{prop:realizability})
\emph{can} both represent FFLM parsers.
The question of whether they \emph{do}, especially from less-than-ideal data, turns out to be extremely subtle, and is the subject of the remainder of this paper.

\subsection{Why focus on the flip-flop?}
\label{subsec:why-flip-flop}

The most immediate rationale for this synthetic benchmark is that flip-flop simulation (maintaining memory in a sequence) is a direct necessity in many reasoning settings (see Figure~\ref{fig:flipflop-defs}c).
It is a special (depth-$1$) case
of Dyck language processing \citep{chomsky1959algebraic,yao2021self,zhao2023transformers}, which is necessary for parsing recursive grammars.
It also captures certain structures in code or language tasks, such as tracking semantic changes~\citep{barone2023larger} or ignoring irrelevant contexts in general~\citep{tafjord2020proofwriter,ho-etal-2020-constructing,shi2023large}. The BIG-Bench suite of reasoning benchmarks \citep{srivastava2022beyond} contains many tasks which require maintaining a discrete state over a sequence of transitions.
Thus, more than a toy model, flip-flop languages are embedded verbatim within many sequence processing tasks. We offer some additional perspectives below.

\paragraph{Algebraic properties and expressive power.} Flip-flops are the computational building blocks of memory.
The \emph{flip-flop monoid} $\gF$ (Definition \ref{def:flipflop}), an algebraic encoding of a flip-flop's dynamics, is the smallest monoid whose operation is both \emph{non-commutative} and \emph{non-invertible}. $\mathcal{F}$ plays an essential role in the Krohn-Rhodes theory of automata and semigroups \citep{rhodes2010applications}, whose central structure theorem \citep{krohn1965algebraic,zeiger1967cascade,eilenberg1974automata} implies that a constant-depth cascade of parallel flip-flops simulates \emph{all} group-free finite-state automata. Thus, in a rigorous sense, the robust learning of flip-flops is not only a \emph{necessary} condition for reasoning, but a \emph{sufficient} condition for a wide class of algorithmic capabilities.

\paragraph{Intended functionality of attention.}
One can also appeal to the origin of attention mechanisms \citep{bahdanau2014neural,luong2015effective,vaswani2017attention}: attention was specifically designed to \emph{attend} to\footnote{What this formally entails for representation and generalization is a topic of recent theoretical inquiry \citep{edelman2022inductive,wei2022statistically}. } (i.e. selectively retrieve and copy) data over long-range dependencies. Indeed, it is easy to verify that a single attention head can perform the required lookup (see Proposition \ref{prop:realizability}).
It is thus logical to ask how well a purely attention-based architecture performs this elementary operation.
\section{Attention glitches: a long tail of errors for Transformer FFLMs}
\label{sec:glitches}

\setlength{\fboxrule}{1.25pt}
\setlength{\fboxsep}{1pt}
\if\doublecolumn1
    \begin{figure*}
        \centering
        \raisebox{1em}{\includegraphics[width=0.23\linewidth]{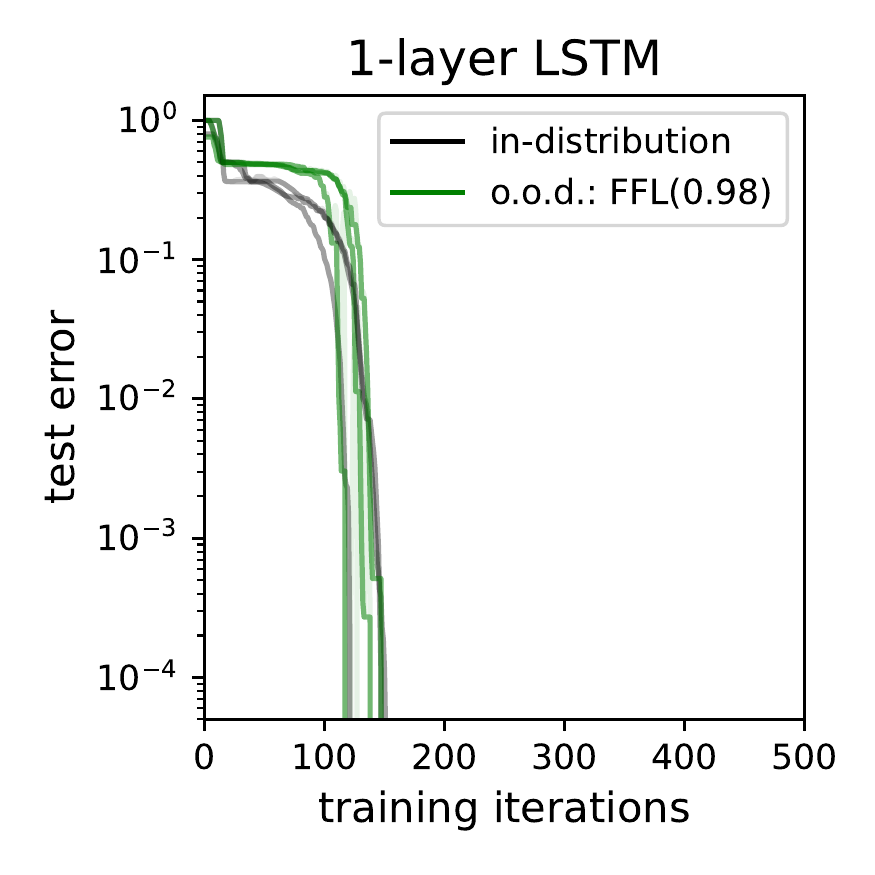}}
        \hspace{-1.1em}
        \includegraphics[width=0.48\linewidth]{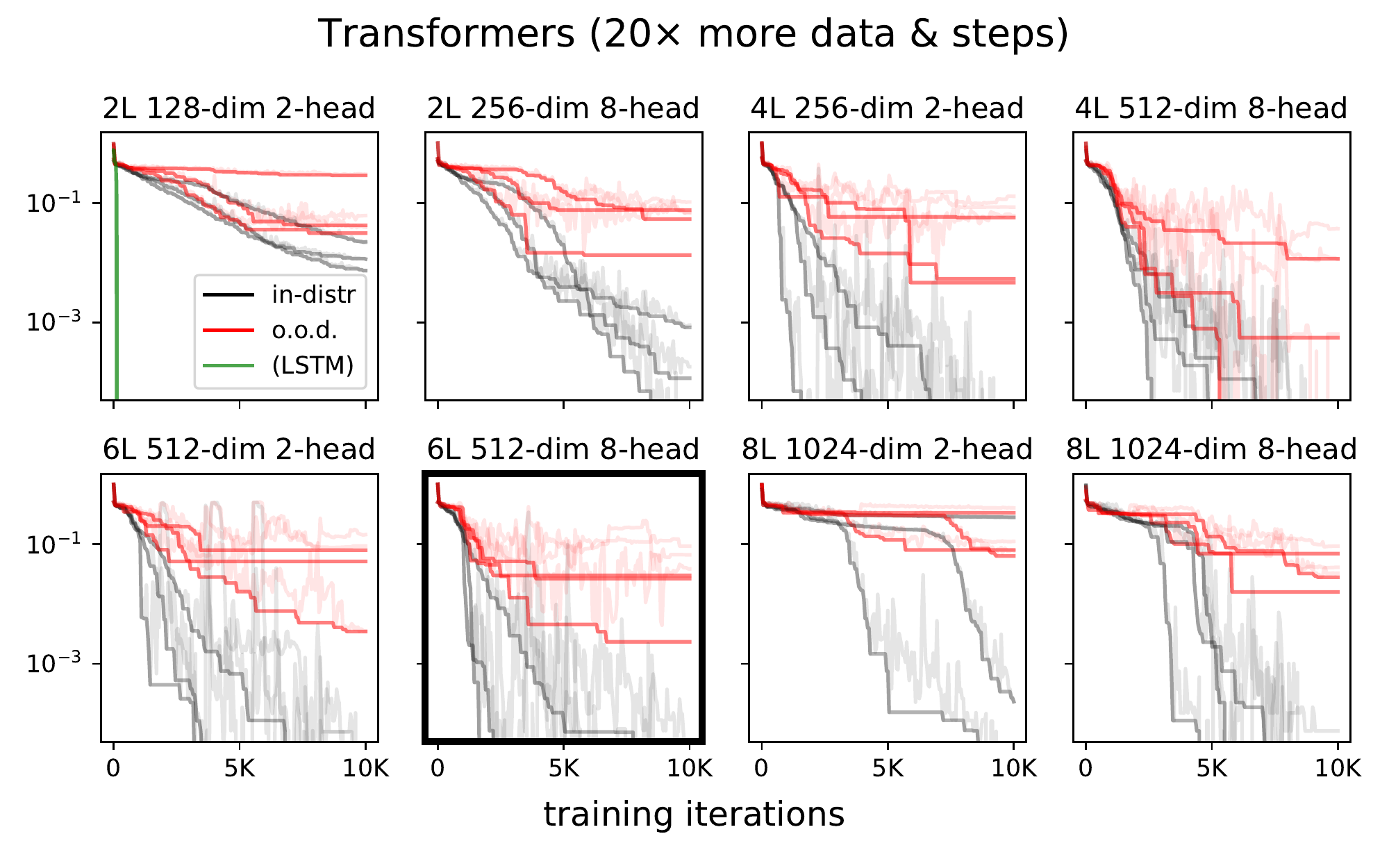}
        \hspace{-1em}
        \raisebox{0.2em}{\includegraphics[width=0.31\linewidth]{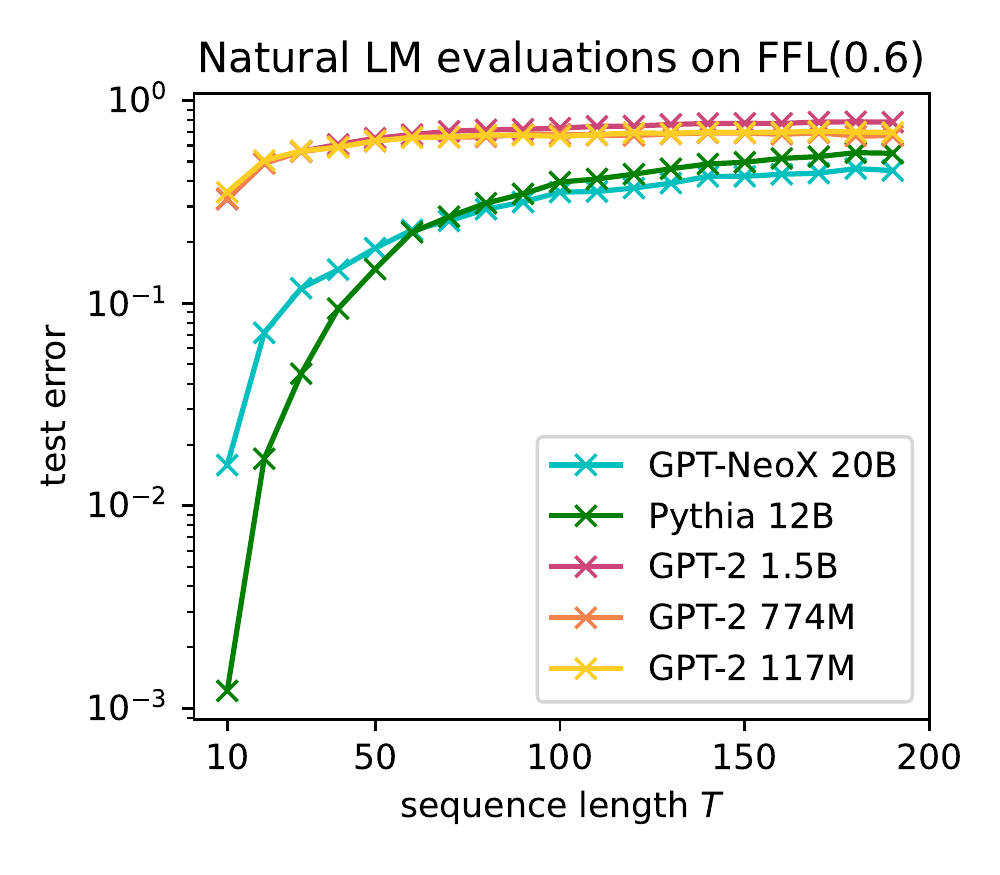}}
        \includegraphics[width=0.96\linewidth]{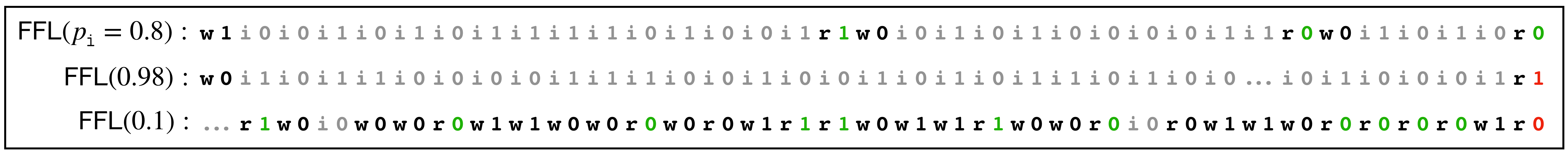}
        \caption{\emph{Top:} Training curves of recurrent (left) vs. Transformer (center) architectures on FFLM, with \emph{best-so-far} evaluation errors highlighted for clarity. \textbf{Transformers fail to extrapolate robustly} to the long tail of long-range dependencies, even on this extremely simple task of remembering one bit. The \smash{\fbox{bolded box}} contains our chosen 6-layer 19M-parameter canonical baseline model. We find that the ability to complete flip-flop language prompts emerges in natural language models, but is not robust (right). \emph{Bottom:} examples from the sparser $\mathsf{FFL}(0.98)$ and denser $\mathsf{FFL}(0.1)$ distributions, causing distinct (\emph{long-range} and \emph{short-range}) failure modes for the baseline Transformer model.}
        \label{fig:pure-flipflop-baselines}
    \end{figure*}
\else 
    \begin{figure}
        \centering
        \raisebox{1em}{\includegraphics[width=0.23\linewidth]{figures/4.1-lstm-curves.pdf}}
        \hspace{-1.1em}
        \includegraphics[width=0.48\linewidth]{figures/4.1-transformer-curves.pdf}
        \hspace{-1.1em}
        \raisebox{0.2em}{\includegraphics[width=0.31\linewidth]{figures/4.1-nleval.pdf}}
        \includegraphics[width=0.96\linewidth]{figures/4-ff-error.pdf}
        \caption{\emph{Top:} Training curves of recurrent (left) vs. Transformer (center) architectures on FFLM, with \emph{best-so-far} evaluation errors highlighted for clarity. \textbf{Transformers fail to extrapolate robustly} to the long tail of long-range dependencies, even on this extremely simple task of remembering one bit. The \smash{\fbox{bolded box}} contains our chosen 6-layer 19M-parameter canonical baseline model. We find that the ability to complete flip-flop language prompts emerges in natural language models, but is not robust (right). \emph{Bottom:} examples from the sparser $\mathsf{FFL}(0.98)$ and denser $\mathsf{FFL}(0.1)$ distributions, causing distinct (\emph{long-range} and \emph{short-range}) failure modes for the baseline Transformer model.}
        \label{fig:pure-flipflop-baselines}
    \end{figure}
\fi

In our main set of synthetic experiments, we train neural language models to generate strings from the flip-flop language $\mathsf{FFL}(T = 512, \mathbf{p} = (0.1, 0.1, 0.8))$ (for short, $\mathsf{FFL}(p_\texttt{i}=0.8)$),
\footnote{In the remaining of the paper, we will use $\mathsf{FFL}(p_\texttt{i})$ as a shorthand, with $T=512$ and $p_w = p_r = \frac{1-p_i}{2}$.}
and probe whether the networks robustly learn the language.
Although every valid flip-flop string is supported in this distribution, some sequences are far rarer than others;
we measure tail behavior via probes of extrapolation, defined here as out-of-distribution evaluations which amplify the probabilities of the rare sequences.
To create these ``challenging'' sequences, we sample $> 3 \times 10^5$ sequences from $\mathsf{FFL}(0.98)$ (containing unusually many ``sparse'' sequences with mostly \texttt{ignore} instructions), as well as $\mathsf{FFL}(0.1)$ (many ``dense'' sequences).
Training and evaluating the \texttt{read} accuracies of Transformer models of various sizes, as well as a recurrent LSTM model, we find the following (see Figure~\ref{fig:pure-flipflop-baselines}):
\begin{itemize}
    \item [(R1)] \textbf{Transformers exhibit a long, irregular tail of errors.} Such errors occur on both sparse and dense sequences.
    Further, a model's out-of-distribution test error varies significantly between random seeds (initializations as well as stochastic minibatches), and even between iterates within the same training run.
    \item [(R2)] \textbf{LSTMs extrapolate perfectly.} In stark contrast, with 20 times fewer training samples and iterations, a 1-layer recurrent model achieves $100\%$ accuracy, on $100$ out of $100$ runs.
\end{itemize}

\paragraph{Data release.} For reproducibility, we publish this synthetic data at \url{https://huggingface.co/datasets/synthseq/flipflop}: 16M $\mathsf{FFL}(0.8)$ training sequences, 16K $\mathsf{FFL}(0.8)$ in-distribution test sequences, 160K sparse o.o.d.~sequences from $\mathsf{FFL}(0.98)$, and 4K $\mathsf{FFL}(0.1)$) dense o.o.d.~sequences from $\mathsf{FFL}(0.98)$. Our data pipelines can be replicated \emph{exactly} by taking the appropriate prefix slices of these datasets.

As a counterpart to these findings, we observe similar anomalies in real LLMs, when prompted to complete natural textual embeddings (Figure~\ref{fig:flipflop-defs}, top right) of flip-flop tasks:

\begin{itemize}
\item [(R3)] \textbf{10B-scale natural LMs can correctly process flip-flop languages, but not robustly.} Beyond a certain scale, natural language models can learn to process (natural embeddings of) flip-flop languages from in-context demonstrations. However, this emergent capability is not robust: there exist rare \texttt{read} errors, whose probabilities amplify as the sequence length $T$ grows. We provide details for the few-shot evaluation protocol in Appendix~\ref{subsubsec:app-natural-eval}.
\end{itemize}

\subsection{Multiplicity of mechanisms for attention glitches}
\label{subsec:mechanisms}

What failure mechanisms account for these reasoning errors, which occur for both short- and long-range dependencies?
The model capability is not a concern as discussed earlier (see Proposition \ref{prop:realizability}).
In this section, we discuss how Transformer self-attention modules, when tasked with representing flip-flops, can exhibit multiple (perhaps mutually entangled) failure mechanisms. The accompanying propositions are proven in Appendices~\ref{subsec:app-dilution} and \ref{subsec:app-tiebreaking}.

\paragraph{An insufficient explanation: implicit $n$-gram models.} As a warmup, consider a language model 
$\widehat \Pr [x_{t+1} | x_{\leq t}]$ which only depends on the $n$ most recent tokens in the context. Then, if $n \ll \frac{1}{1-p}$, the bulk of $\widehat \Pr$'s predictions on $\mathsf{FFL}(p_\texttt{i} = p)$ can be no more accurate than random guessing. This recovers one qualitative trend (degradation of accuracy with dependency length) observed in the experiments.
However, this cannot fully explain our findings: it fails to account for the incorrect predictions on dense sequences.
Furthermore, the Transformers' outputs on $\mathsf{FFL}(0.98)$ are \emph{mostly} correct; their accuracies on very long-range dependencies are nontrivial, despite not being perfect. There must therefore be subtler explanations for these errors.

\paragraph{Lipschitz limitations of soft attention.} Moving to finer-grained failure mechanisms, a known \citep{hahn2020theoretical,chiang2022overcoming} drawback of soft attention is that its softmax operation can be ``too soft''---for any weight matrices with fixed norms, the attention gets ``diluted'' across positions as the sequence length $T$ increases, and can fail to perform an intended ``hard selection'' operation. We provide a formal statement and proof (Proposition~\ref{prop:soft-dilution}) in Appendix~\ref{subsec:app-dilution}.

\paragraph{Difficulty of non-commutative tiebreaking.}
Can we simply robustify soft attention by replacing it with hard attention? We present a brief analysis which suggests that even hard attention can be brittle. In a stylized setting (one-layer models with linear position encodings), we show that self-attention can \emph{confidently attend to the wrong index}, unless the weight matrices precisely satisfy an orthogonality condition (Proposition~\ref{prop:argmax}). This suggests the existence of \emph{spurious local optima}, which we do not attempt to prove end-to-end; however, we provide supporting empirical evidence in the experiments in Appendix~\ref{subsec:app-tiebreaking}.

\section{Mitigations for attention glitches}

In this section, we investigate various approaches towards eliminating the long tail of reasoning errors exhibited by Transformer FFLMs. We select the 19M-parameter model (which has $L=6$ layers, $d=512$ embedding dimensions, and $H=8$ heads) from Section~\ref{sec:glitches} as a canonical baseline, and conduct precise evaluations of various direct and indirect interventions.

\subsection{Effects of training data and scale}
\label{subsec:cheats}

\paragraph{Ideal solution: improving data coverage.}
Prior work has made clear that data significantly impacts the performance~\citep{schuhmann2022laion,eldan2023tinystories}.
Hence, we begin by examining what is perhaps the most obvious solution: removing the need for out-of-distribution extrapolation, by training directly on more diverse examples. Indeed, we verify that this works near-perfectly:
\begin{itemize}
    \item [(R4)] \textbf{Training on rare sequences works best, by a wide margin.} By training on a uniform mixture of $\mathsf{FFL}$ distributions with $p_\texttt{i} = \{ 0.9, 0.98, 0.1\}$,
    the baseline architecture reliably converges to solutions with significantly fewer errors on each of these 3 distributions (\textcolor{teal}{teal violins} in Figure~\ref{fig:violins}). In 6 out of 25 runs, we did not detect a single error.
\end{itemize}
This should not be surprising, in light of the realizability of flip-flops by self-attention (and, more generally, the existence of shortcuts functionally identical to RNNs \citep{liu2023transformers}), and corroborates similar conclusions from \citep{zhang2021pointer}.
We also find that weaker improvements emerge by straightforwardly increasing scale parameters in the model and training pipelines:
\begin{itemize}
    \item [(R5)] \textbf{Resource scaling (in-distribution data, training steps, network size) helps.} However, the improvements are orders of magnitude smaller than those in (R4), and we observe tradeoffs between sparse- and dense-sequence extrapolation; see the \textcolor{blue}{blue violins} in Figure~\ref{fig:violins}.
\end{itemize}

Another class of direct solutions
is to \emph{externalize the chain of thought} (CoT): train (or finetune, or prompt) the model to explicitly output the intermediate reasoning steps
\citep{nye2021show,wei2022chain}.
We do not investigate this strategy in this paper, and note that prior work has provided sufficient evidence to affirm its efficacy in inducing the robust learning of recurrences on long synthetic sequences~\citep{anil2022exploring,zhou2022teaching,liu2023transformers}. Even when applying CoT in practice, we believe attention glitches may still occur, as flip-flops operations may be embedded within a single indivisible reasoning step. Thus, the focus of this work is to isolate and mitigate this intrinsic architectural issue.
We provide additional references and discussion in Appendix~\ref{subsec:app-related}.

\subsection{Indirect algorithmic controls: a bag of regularization tricks}
\label{subsec:algs}

The interventions listed in Section~\ref{subsec:cheats} are all potentially practical, and may shed light on how closed-domain LLM hallucinations will diminish with data quality, scale, and improved inference strategies. However, it is not always \emph{feasible} to implement these fixes under resource constraints (especially data). We next investigate an orthogonal design space, of how to robustify the \emph{internal} memory mechanisms of neural sequence models. Note that the exceptionally strong extrapolative performance of the LSTM provides a ``skyline'', showing the possibility of far more robust architectures than the Transformer (in the flip-flop setting, with this restricted set of considerations).

\if\doublecolumn1
    \begin{figure*}
        \centering
        \if\doublecolumn1
            \includegraphics[width=0.35\linewidth]{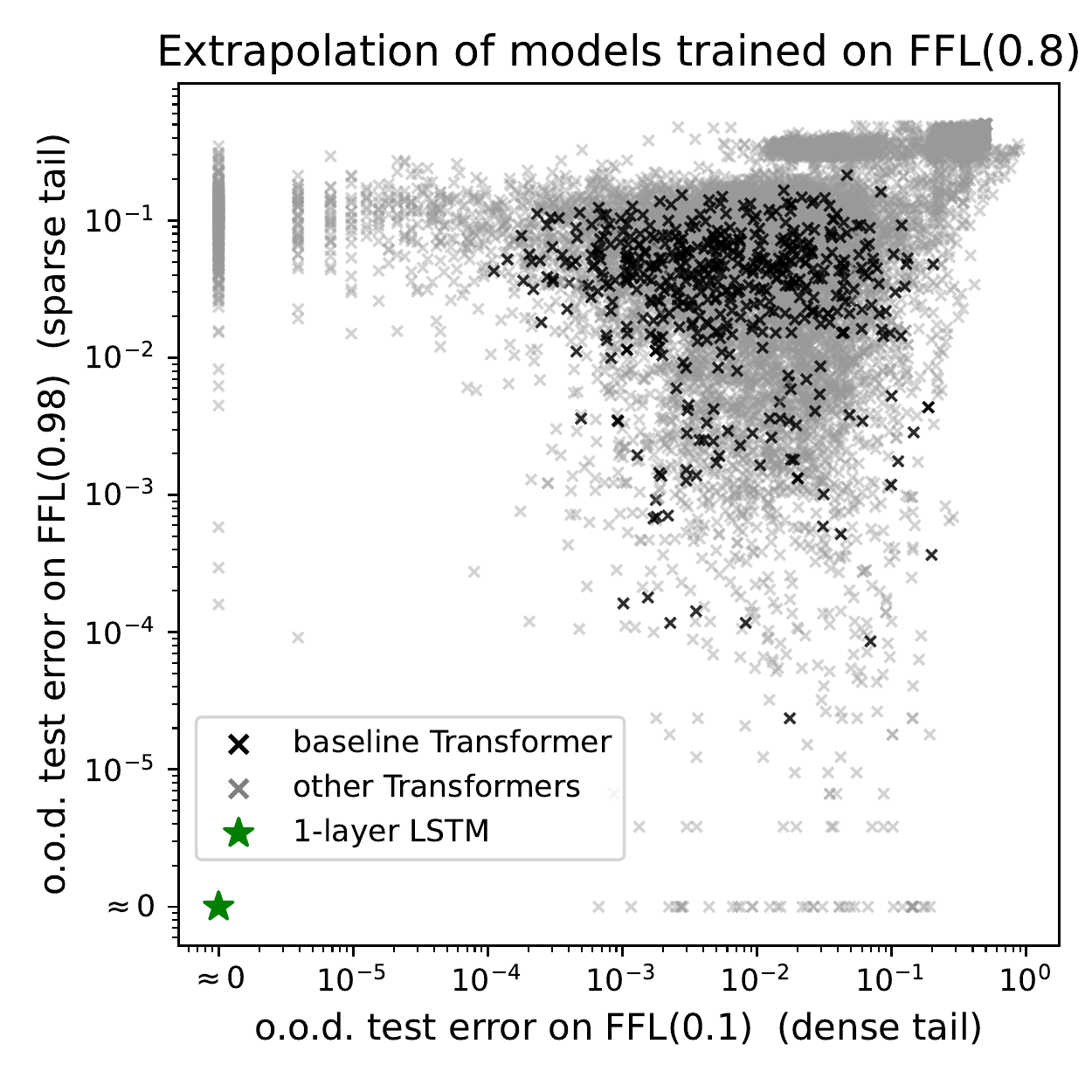}
            \includegraphics[width=0.55\linewidth]{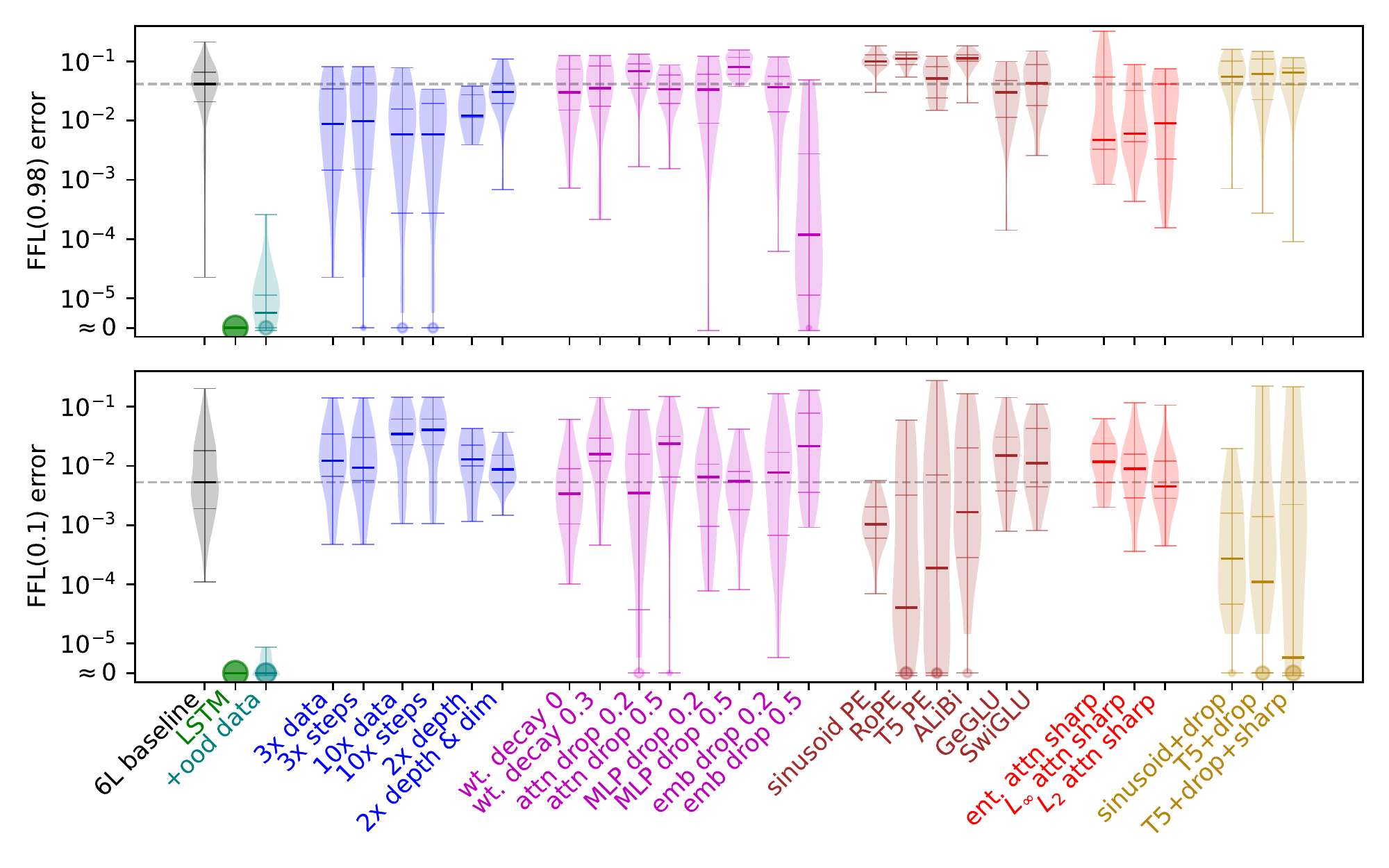}
        \else  
            \includegraphics[width=0.39\linewidth]{figures/5.2-extrap-scatter.pdf}
            \includegraphics[width=0.6\linewidth]{figures/5.2-main-violins.pdf}
        \fi
        \caption{A long tail of flip-flop errors for $10,\!625$ Transformer models. \emph{Left:} Out-of-distribution evaluations for all models; some algorithmic choices help substantially (note the logarithmic axes), but \textbf{nothing we tried, aside from training on o.o.d. data, could fully eliminate attention glitches}. \emph{Right:} Effects of individual architectural and algorithmic choices on both types of extrapolation (sparse and dense sequences). Some configurations reduce attention glitch rates by orders of magnitude. Horizontal marks denote \{min, 25\%, median, 75\%, max\} test errors on $>\! 3\times 10^5$ predictions, over $25$ replicates ($500$ for the baseline model).
        Dots at the bottom indicate runs with 0 error.
        }
        \label{fig:violins}
    \end{figure*}
\else 
    \begin{figure}
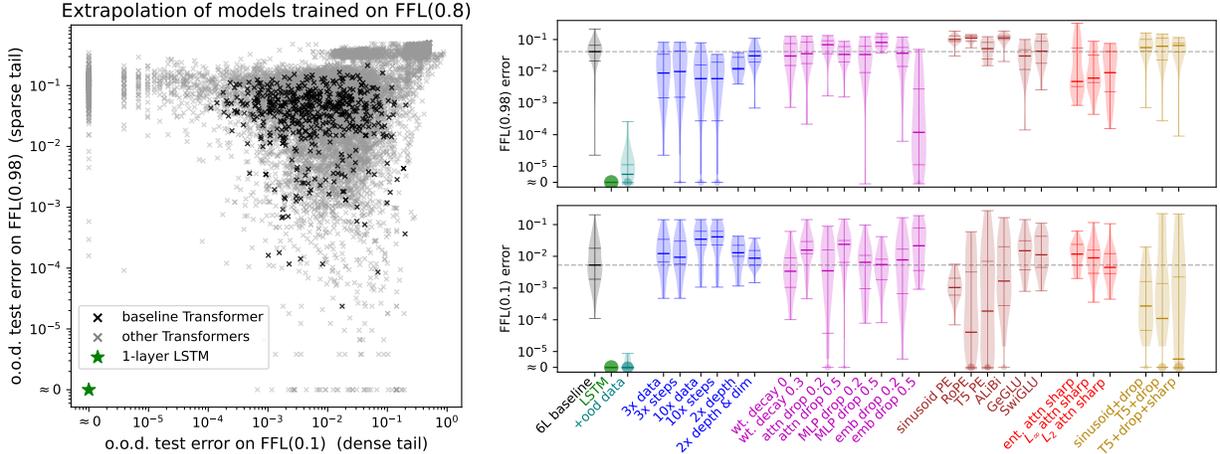

        \centering
        \includegraphics[width=0.39\linewidth]{figures/5.2-extrap-scatter.pdf}
        \includegraphics[width=0.6\linewidth]{figures/5.2-main-violins.pdf}
        \caption{A long tail of flip-flop errors for $10,\!625$ Transformer models. \emph{Left:} Out-of-distribution evaluations for all models; some algorithmic choices help substantially (note the logarithmic axes), but \textbf{nothing we tried, aside from training on o.o.d. data, could fully eliminate attention glitches}. \emph{Right:} Effects of individual architectural and algorithmic choices on both types of extrapolation (sparse and dense sequences). Some configurations reduce attention glitch rates by orders of magnitude. Horizontal marks denote \{min, 25\%, median, 75\%, max\} test errors on $>\! 3\times 10^5$ predictions, over $25$ replicates ($500$ for the baseline model).
        Dots at the bottom indicate runs with 0 error.
        }
        \label{fig:violins}
    \end{figure}
\fi 

\paragraph{Standard regularization heuristics.} There is a large array of not-fully-understood algorithmic tricks for ``smoothing'' the behavior of LLMs. We test the extrapolative behavior of models trained with weight decay and dropout (at the attention, feedforward, and embedding layers), as well as a host of algorithmic choices known to modulate generalization (batch sizes, learning rates, optimizer hyperparameters, position embeddings, activation functions). Due to the extreme variability noted in (R1), we quantify effects on extrapolation by training and evaluating at least 25 replicates for each choice under consideration.

\paragraph{Attention sharpening: a non-standard regularization technique.} Inspired by the ``diluted hard attention'' calculation in Section~\ref{subsec:mechanisms}, and the fact that the attention heads of trained models do not attend sharply (see Figure~\ref{fig:main_attentions}), we train Transformer models with \emph{attention-sharpening regularizers}:\footnote{While less popular, such losses have been used to sparsify dependencies in similar contexts \citep{zhang2018attention,sukhbaatar2021memories}.} during training,
for attention weights $\alpha \in \Delta([T])$, adding differentiable loss terms which encourage sparsity (e.g. the mixture's entropy $H(\alpha)$, or negative $p$-norms $-\!\norm{\alpha}_2$, $-\!\norm{\alpha}_\infty$).

\begin{itemize}
    \item [(R6)] \textbf{Many algorithmic choices influence extrapolative behaviors.} We find that some architectural variants and regularization tricks have orders-of-magnitude effects on the out-of-distribution performance of Transformers; see the \textcolor{mpl_magenta}{purple}, \textcolor{mpl_brown}{brown}, \textcolor{red}{red}, and \textcolor{goldenrod}{gold} violins in Figure~\ref{fig:violins} (right).
    Our strongest improvements on sparse sequences are obtained by large ($0.5$) embedding dropout and attention-sharpening losses; on dense sequences, non-trainable position embeddings are the most helpful.
    \item [(R7)] \textbf{Despite many partial mitigations, nothing eliminates attention glitches entirely.} The scatter plot in Figure~\ref{fig:violins} (left) gives an overview of our entire search over architectures and hyperparameters, showing (dense-sequence error, sparse-sequence error) pairs for \emph{every} model we trained. We found it extremely difficult to find a setting that reliably produces Transformer models with simultaneous improvements over the baseline on sparse and dense sequences. Recall that it is trivial to do so with an LSTM model.
\end{itemize}

\subsection{Preliminary mechanistic study of the trained networks}
\label{subsec:mechanistic}

In this section, we move to a simpler setting to gain finer-grained understanding of how sparsity regularization affects the learned solutions.
Specifically, we look at the task of \emph{simulating the flip-flip automaton} (Definition~\ref{def:flipflop}), whose inputs consist of $\{\sigma_0, \sigma_1, \bot\}$ as two types of \texttt{write} and 1 no-op.
This task (elaborated in Appendix \ref{subsec:flipflop-jargon}) can be solved by a 1-layer Transformer with a single attention head which attends sparsely on the most recent \texttt{write} position.
It also serves as a building block for more complex tasks~\citep{liu2023transformers}, 
hence observations from this simple setup can potentially be useful in broader contexts.

Figure~\ref{fig:main_attentions}
shows examples of attention patterns on the flip-flop simulation task, subselected from 6-layer 8-head models trained with and without attention-sharpening regularization.
It is evident that the attention patterns of the sparse model are less complex and easier to interpret compared to those of the un-regularized model.
For example, we can identify one head in the sparse model that exactly coincide with the attention pattern\footnote{
    While it is well-known that attention patterns can be misleading at times~\citep{jain2019attention,bolukbasi2021interpretability,meister2021sparse},
    they do provide upper bounds on the magnitude of the dependency among tokens.
    These upper bounds are particularly useful in the case of (1-)sparse attention:
    a (near) zero attention weight signifies the absence of dependency, which greatly reduces the set of possible solutions implemented.
}
that an ``ideal'' 1-layer 1-head model implements (Figure \ref{subfig:sparse_ideal}).

\begin{itemize}
    \item [(R8)] \textbf{Attention-sharpening regularizers successfully promote hard attention, but errors persist.}
    As mentioned in (R7), attention-sharpening regularization cannot fully eliminate the sporadic errors,
    which are partially induced by the complexity and redundancy of attention patterns.
    Moreover, sharpened attention can induce additional failure modes, such as confidently attending to incorrect \texttt{write} positions.
    An example is demonstrated in Figure~\ref{subfig:sparse_wrong}, where the attention focuses on an initial \texttt{write}, likely caused by the fact that earlier positions are overemphasized due to the use of causal attention masks.
    Another example occurs in length generalization, where the attention is correct at positions earlier in the sequence, but starts to confidently focus on wrong positions as it moves towards later positions (Proposition~\ref{prop:argmax}).
\end{itemize}

In a similar spirit to concurrent work on generating Dyck languages \citep{wen2023dyck} (a more complex capability which also requires parallel simulation of memory registers), these glitchy solutions point to a concrete obstruction to mechanistic interpretability. Due to factors such as overparameterization, spurious solutions, and the opaqueness of optimization dynamics, \textbf{learned neural implementations of algorithms generally do not coincide with ``ideal'', ``minimal'', or ``natural'' theoretical constructions}.
Details for these experiments and further discussion are provided in Appendix \ref{appendix:interpretability}.

\if\doublecolumn1
    \begin{figure*}
        \centering
        \begin{minipage}[c]{0.85\textwidth}
        \begin{subfigure}[b]{0.48\textwidth}
            \includegraphics[width=\textwidth]{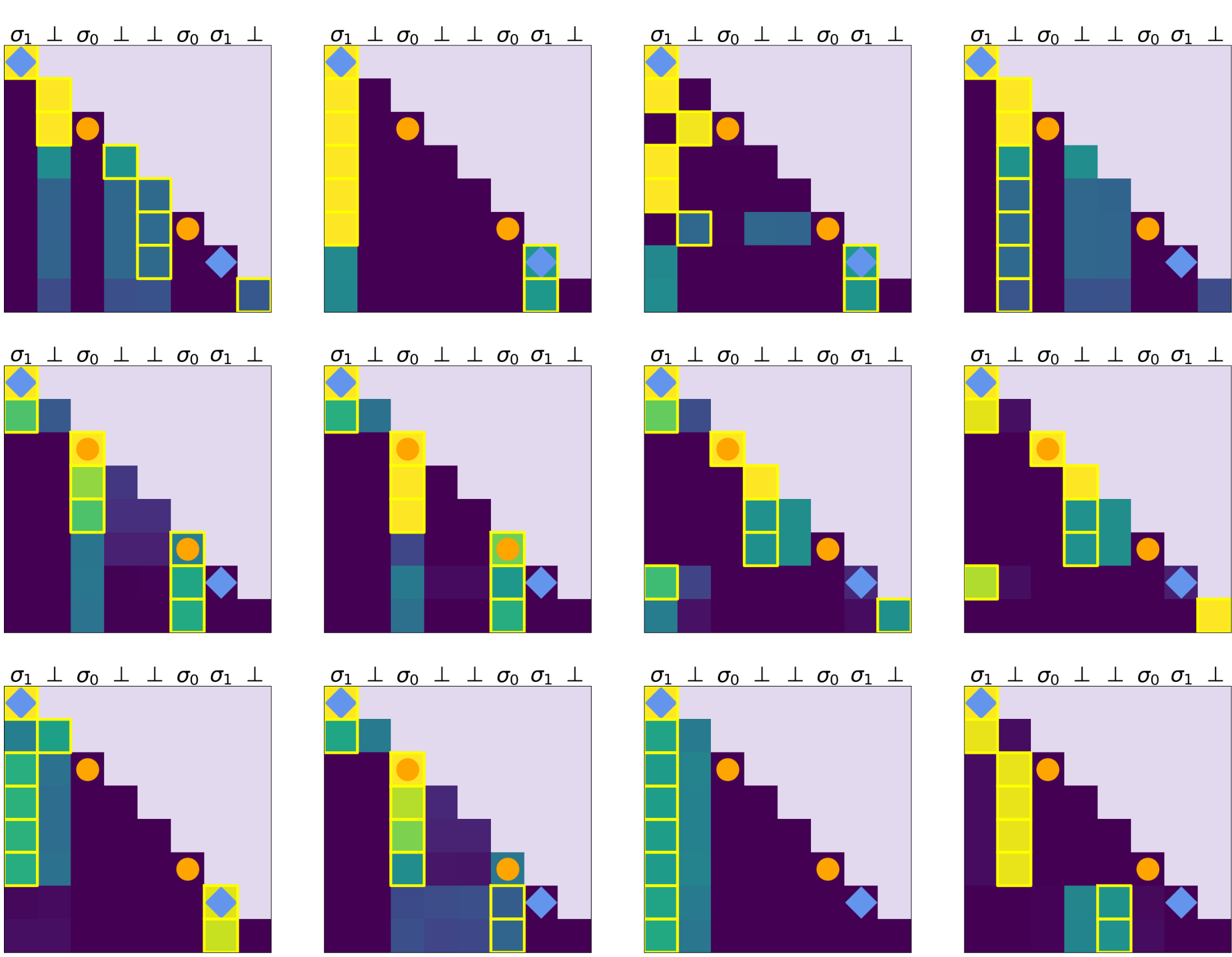}
            \caption{}
        \end{subfigure}
        \hspace{0.6em}
        \begin{subfigure}[b]{0.48\textwidth}
            \includegraphics[width=\textwidth]{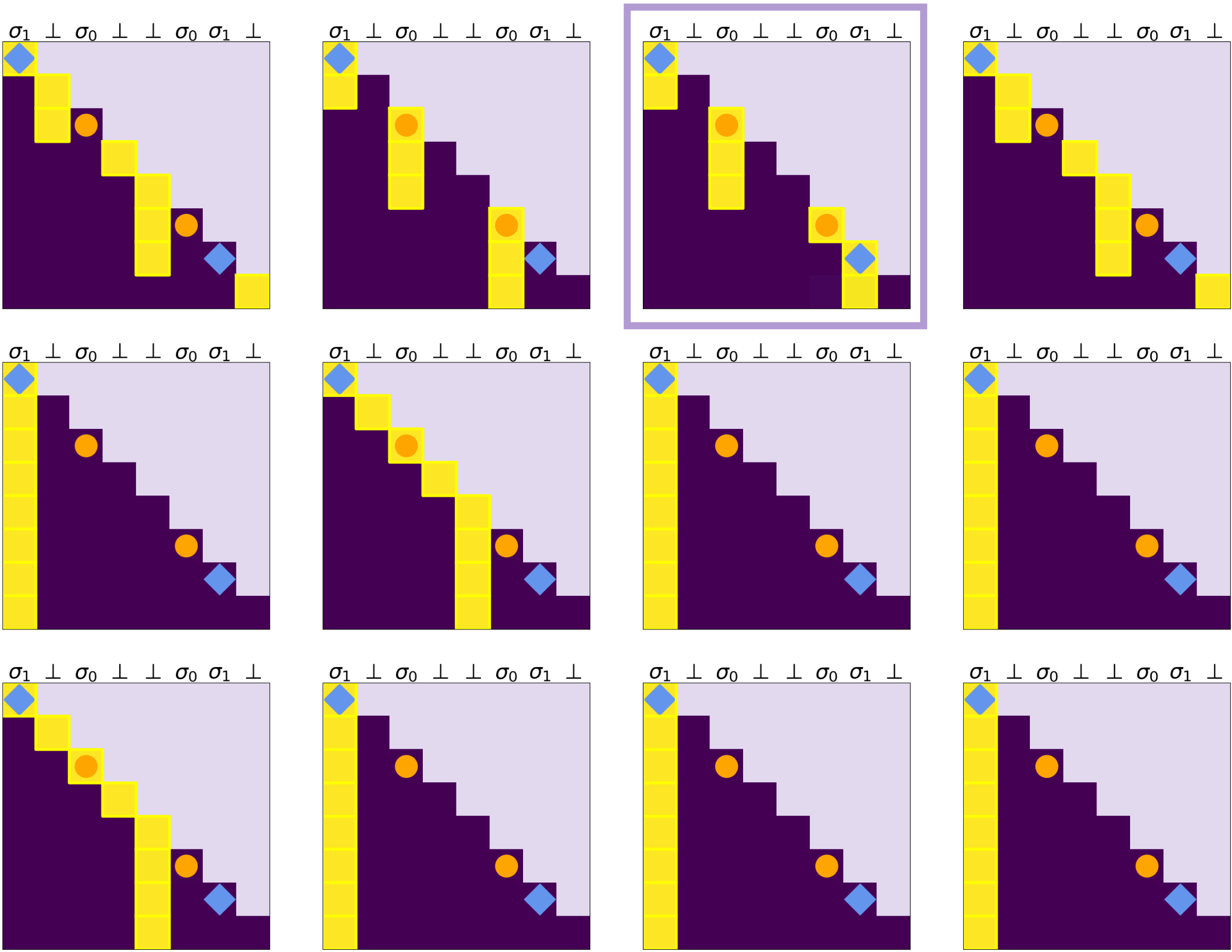}
            \caption{}
        \end{subfigure}
        \end{minipage}
        \hfill
        \begin{minipage}[c]{0.13\textwidth}
            \begin{subfigure}[b]{\textwidth}
            \includegraphics[width=\textwidth]{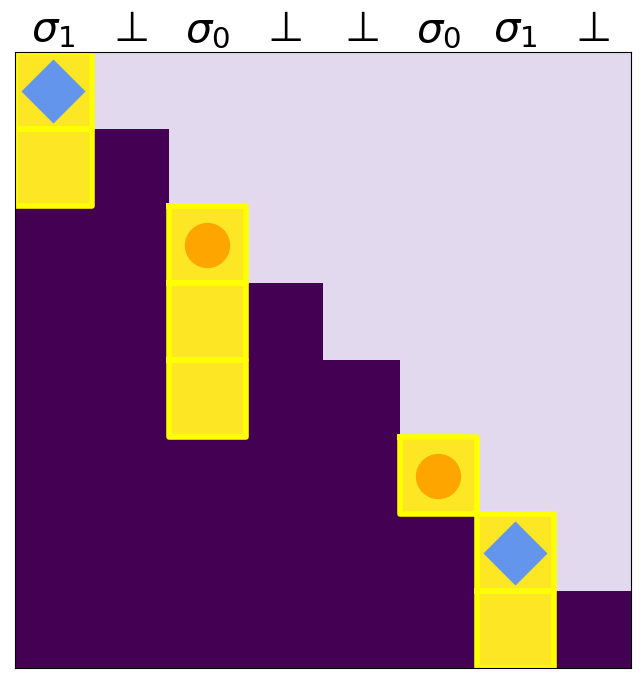}
            \caption{}
            \label{subfig:sparse_ideal}
            \end{subfigure}
            \begin{subfigure}[b]{\textwidth}
            \includegraphics[width=\textwidth]{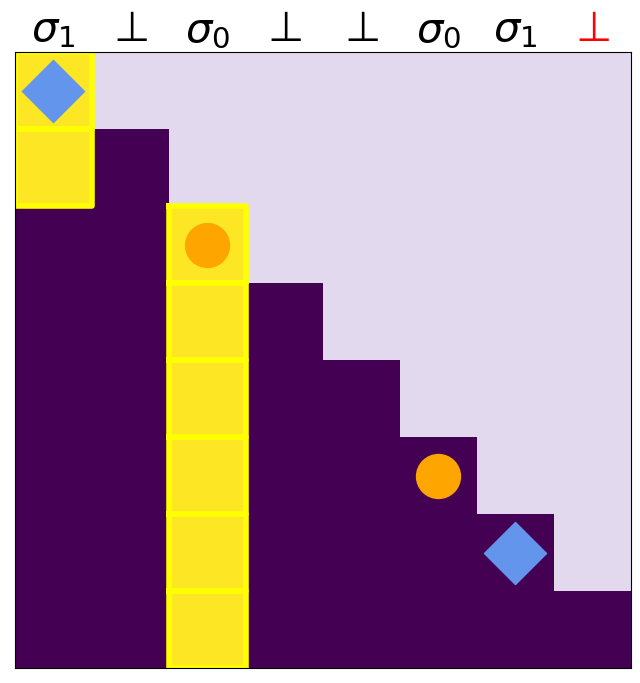}
            \caption{}
            \label{subfig:sparse_wrong}
            \end{subfigure}
        \end{minipage}
        \caption{
        Causal attention patterns for flip-flop simulation (Definition~\ref{def:flipflop});
        orange dots / blue diamonds mark the positions of write tokens $\sigma_0$ / $\sigma_1$.
        (a),(b) are subselected respectively from a regular (non-sparse) and a sparse multi-layer model (details in Appendix~\ref{appendix:interpretability}).
        (c), (d) are from two 1-layer 1-head models.
        The attention pattern highlighted by the purple box in (b) coincides with the ``ideal'' attention pattern in (c).
        However, sparse models can be wrong, as shown in (d) (error marked in red).
        }
        \label{fig:main_attentions}
    \end{figure*}
\else 
    \begin{figure}
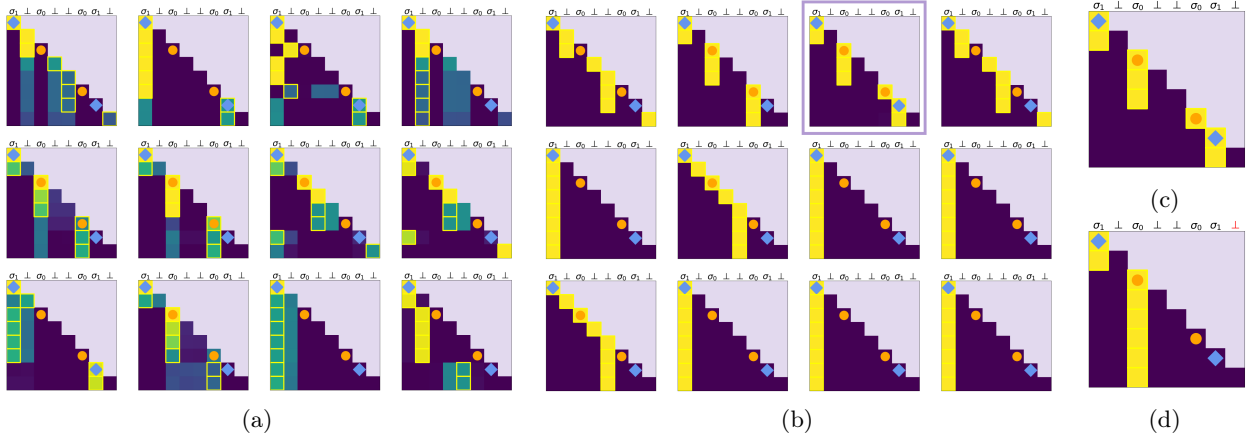

        \centering
        \begin{minipage}[c]{0.85\textwidth}
        \begin{subfigure}[b]{0.48\textwidth}
            \includegraphics[width=\textwidth]{figures/L6H8_20100120_noSparse_r3h4.png}
            \caption{}
        \end{subfigure}
        \hspace{0.6em}
        \begin{subfigure}[b]{0.48\textwidth}
            \includegraphics[width=\textwidth]{figures/L6H8_20100120_sparse0.01_r3h4.png}
            \caption{}
        \end{subfigure}
        \end{minipage}
        \hfill
        \begin{minipage}[c]{0.13\textwidth}
            \begin{subfigure}[b]{\textwidth}
            \includegraphics[width=\textwidth]{figures/L1H1_20100121_sparse.png}
            \caption{}
            \label{subfig:sparse_ideal}
            \end{subfigure}
            \begin{subfigure}[b]{\textwidth}
            \includegraphics[width=\textwidth]{figures/L1H1_20100121_wrong_sparse.png}
            \caption{}
            \label{subfig:sparse_wrong}
            \end{subfigure}
        \end{minipage}
        \caption{
        Causal attention patterns for flip-flop simulation (Definition~\ref{def:flipflop});
        orange dots / blue diamonds mark the positions of write tokens $\sigma_0$ / $\sigma_1$.
        (a),(b) are subselected respectively from a regular (non-sparse) and a sparse multi-layer model (details in Appendix~\ref{appendix:interpretability}).
        (c), (d) are from two 1-layer 1-head models.
        The attention pattern highlighted by the purple box in (b) coincides with the ``ideal'' attention pattern in (c).
        However, sparse models can be wrong, as shown in (d) (error marked in red).
        }
        \label{fig:main_attentions}
    \end{figure}
\fi 

\section{Conclusion and future challenges}
\label{sec:conclusion}

We have introduced \emph{flip-flop language modeling} (FFLM), a synthetic benchmark for probing the fine-grained extrapolative behavior of neural sequence models, based on a one-bit memory operation which forms a fundamental building block of algorithmic reasoning. Despite being able to realize this operation trivially, \textbf{Transformer models do not extrapolate robustly}: they exhibit a long tail of sporadic reasoning errors, which we call \emph{attention glitches}. Through extensive controlled experiments, we find that many algorithmic mitigations can reduce the frequency of attention glitches, but \textbf{only recurrence and training on longer-tailed data work perfectly}. FFLM provides a concrete and minimalistic setting in which Transformers are far inferior to recurrent sequence models, with respect to multiple criteria (efficiency, stability, and extrapolation).

\paragraph{What does this entail about hallucinations in natural LLMs?}
The motivating issue for this work is the phenomenon of ``closed-domain hallucinations'' in non-synthetic LLMs (e.g. the errors demonstrated in Figure~\ref{fig:adder-demo}).
We hypothesize
that attention glitches occur in the internal algorithmic representations of Transformer models of natural language, and that they account for (a non-negligible portion of) the reasoning errors encountered in practice. To our knowledge, this is the first attempt to attribute model hallucinations to a systematic architectural flaw in the Transformer. However, confirming or refuting this hypothesis is far outside the scope of this paper; the opaque indirections and lack of adequate controls on the training data present significant methodological challenges.
Even precisely articulating this hypothesis leaves degrees of freedom which are difficult to resolve; see the discussion in Appendix~\ref{subsec:app-hypothesis}. We therefore leave these topics for future work.

\paragraph{Paths to hallucination-free Transformers?} Our findings suggest that in the near term, there are many mutually-compatible approaches for reducing the frequency of attention glitches: data (particularly with high diversity), scale, and various forms of regularization. Yet, the strikingly outsized benefit of replacing the Transformer with an LSTM network suggests that \emph{architectural} innovations towards the same ends are well worth examining.
Obtaining a practical best-of-both-worlds architecture is a grand open challenge, for which new recurrent designs \citep{katharopoulos2020transformers,dao2022hungry,RWKV,anonymous2023mamba} show great promise.
Note that we do not make the claim that recurrent architectures are the only ones which can extrapolate robustly.\footnote{In particular, for algorithmic reasoning capabilities corresponding to the recurrent execution of deterministic finite-state machines, the results of \citep{liu2023transformers} imply that the Transformer has a ``reparameterized'' recurrent inductive bias, which \emph{parallelizes} and \emph{hierarchizes} any looped recurrence.}

\paragraph{Broader impacts and limitations.}
This work is inherently foundational, and focuses on precise measurements of generalization in an idealized setting; see Appendix~\ref{subsec:app-hypothesis} for a discussion of the limitations this entails. By introducing methodologies to isolate, measure, and control the long tail of reasoning errors in neural sequence models, we hope that this work will contribute to the systematic and principled discovery of LLM pipelines with improved factual reliability. Such improvements may result in unintended downstream consequences, such as higher-fluency malicious content generation.

\bibliography{bib}
\bibliographystyle{apalike}

\renewcommand{\theequation}{\thesection.\arabic{equation}}

\newpage
\appendix

\addcontentsline{toc}{section}{Appendix} %
\part{Appendix} %
\parttoc %
\newpage

\section{Deferred background and discussion}

\subsection{Flip-flop terminology and history}
\label{subsec:flipflop-jargon}

The \emph{flip-flop automaton}\footnote{Sometimes, a distinction is made between a semiautomaton $(Q, \Sigma, \delta)$ and an automaton, which is a semiautomaton equipped with a (not necessarily invertible) mapping from states to output symbols. We do not make such a distinction; we equip a semiautomaton with the output function which simply emits the state $q$, and use ``automaton'' to refer to this dynamical system.} is a two-state machine which remembers a single bit of memory, and enables retrieval of this bit. More precisely, the flip-flop automaton (illustrated in Figure \ref{fig:flipflop-defs}(a)) is defined as: 
\begin{definition}[Flip-flop automaton]
\label{def:flipflop}
    A flip-flop automaton $\gA = \{Q, \Sigma, \delta\}$
    is defined with state space $Q = \{0,1\}$, input alphabet $\Sigma = \{\sigma_0, \sigma_1, \bot\}$, and transition function $\delta: Q \times \Sigma \rightarrow Q$ where
    \begin{align*}
    \begin{cases}
        \delta(q, \sigma_0) &= 0, \\
        \delta(q, \sigma_1) &= 1, \\ 
        \delta(q, \bot) &= q; \\
    \end{cases}
    \qquad\quad \forall q \in \{0, 1\}.
    \end{align*}
\end{definition}
The semantics of the input symbols can be intuitively be identified with ``write $0$'', ``write $1$'', and ``do nothing''. This mathematical object is named after a type of electronic circuit which can store a single bit of state information \citep{eccles1918improvements,eccles1919trigger}; such physical constructions appear ubiquitously in electrical engineering as the building blocks of memory.
The task of interest in \Cref{subsec:mechanistic} is \emph{simulating the flip-flop automaton}: the model takes as input a sequence of $\sigma_1, \sigma_2, \cdots, \sigma_T \in \Sigma$, and learns to output the corresponding states $q_t \in Q$ for each $t \in [T]$ after processing inputs $\sigma_{1:t}$.

Naturally associated with the flip-flop automaton is its \emph{transformation monoid}, the closure\footnote{In this case, the closure is the same as the generator set: no functions distinct from $\sigma_0, \sigma_1, \bot$ can be obtained by composing these three functions. This is not true for a general automaton.} of its \emph{state transformations} $\delta(\,\cdot\,, \sigma) : Q \rightarrow Q$ under function composition. Identifying each symbol with its state transformation map, we can compute the multiplication table of this monoid ($f \circ g$ for every pair of transformations $f,g$):

\begin{center}
\begin{tabular}{l|ccc}
 & $g = \sigma_0$ & $g = \sigma_1$ & $g = \bot$\\
\hline
$f = \sigma_0$ & $\sigma_0$ & $\sigma_0$ & $\sigma_0$ \\
$f = \sigma_1$ & $\sigma_1$ & $\sigma_1$ & $\sigma_1$ \\
$f = \bot$ & $\sigma_0$ & $\sigma_1$ & $\bot$ \\
\end{tabular}
\end{center}

This algebraic object is called the \emph{flip-flop monoid} $\mathcal{F}$. Its binary operation $\circ$ is clearly (1) non-invertible (intuitively: the history of the bit cannot be recovered after a ``memory write'') and (2) non-commutative (the order of ``write'' operations matters); it also has (3) an identity element $\bot$ (which does nothing to the memory bit). By enumeration of smaller objects, it can be seen that $\mathcal{F}$ is the smallest monoid (in terms of order $|\mathcal{F}|$, or fewest number of automaton states $|Q|$) which has properties (1)-(3).

The flip-flop monoid plays a special role in the algebraic theory of automata \citep{rhodes2010applications}: flip-flops can be cascaded to represent more complex functions. In particular, the celebrated Krohn-Rhodes theorem \citep{krohn1965algebraic} gives a ``prime decomposition'' theorem for \emph{all} finite semigroups (associative binary operations), representing them as alternating wreath products of flip-flop monoids and finite simple groups. Further developments \citep{zeiger1967cascade,eilenberg1974automata} have interpreted this theorem as a structural reparameterization of any finite-state automaton into a feedforward hierarchy of simple ``atomic'' machines (namely, flip-flops and permutation semiautomata). Basic quantitative questions (e.g. ``which functions of $n$ variables can $L$ layers of $\poly(n)$ flip-flops represent?'') have proven to be extremely hard to resolve; these are studied by the theories of Krohn-Rhodes complexity and circuit complexity.

It was noted by \citet{barrington1988finite} that these reparameterizations of finite-state automata entail the existence of parallel algorithms (i.e. shallow, polynomial-sized circuits) for sequentially executing finite-state recurrences (thus, processing formal languages) on sequences of length $T$. More recently, \citet{liu2023transformers} establish implications for shallow Transformer neural networks: they show that they can size-efficiently (with depth $O(\log T)$ and parameter count $\Theta(T)$; sometimes both improvable to $O(1)$) realize these parallel algorithms, and that standard gradient-based training can empirically learn $\ll T$-layer solutions on a variety of ``hard'' automata (e.g. composing a sequence of $T$ 5-element permutations; multiplying $T$ unit quaternions). Here, the role of the flip-flop monoid is essential: it provides a natural way to think about the role of a single self-attention head in a hierarchy of indirections, in order to learn a depth-constrained parallel implementation of a sequential algorithm.

\subsection{Additional related work}
\label{subsec:app-related}

\paragraph{Relevant challenges in NLP: hallucinations and long-range dependencies.}
The empirical literature is rife with corroborations that neural language models have trouble with robustly fitting long-range memory and multi-step reasoning~\citep{khandelwal2018sharp,Sun2021Long,sukhbaatar2021memories,MalkinWJ22,saparov2022language,orvieto2023resurrecting,creswell2023selectioninference}.
Such failures can result in ``hallucinations'': incorrect outputs which either directly contradict factual input in the context, or contain information absent in the context~\citep{ji2023survey}.

Hallucination can be attributed to various factors, such as the noisiness in data sources ~\citep{dhingra2019handling,dziri2022origin},
imperfect encoding/decoding~\citep{parikh2020totto,tian2019sticking},
or the discrepancy in training and evaluation setups~\citep{he2019exposure}.
In particular, the most related to our paper are the characteristics inherent to the model itself.
For example, prior work has found that Transformers tend to be biased towards information covered during training~\citep{petroni2019language,longpre2021entitybased}, a potential cause to their poor out-of-distribution performance,
and may over-commit to their earlier errors~\citep{zhang2023language}

In terms of mitigation, various ``external'' methods (i.e. ones which do not modify the internal representations of the neural network) have been proposed to address some of the above factors, or post-processing model generations~\citep{dziri2021neural,chen2021improving}, possibly utilizing several forward passes~\citep{wang2022selfconsistency,zheng2023progressivehint}.
Another line of work that have gained much popularity and success is to incorporate explicit memory mechanisms, which we discuss next.

\paragraph{Explicit memory mechanisms in Transformers.}
Prior work has shown that augmenting the neural network with memory modules or knowledge base helps improve the performance on long-range texts~\citep{khandelwal2019generalization,wu2022memorizing,bertsch2023unlimiformer}.
An approach particularly effective for large-scale Transformers is to ask the model to output immediate reasoning steps to a ``scratchpad'' which the model subsequently processes~\citep{nye2021show,wei2022chain,zhou2022teaching,anil2022exploring,shao2023synthetic}, similar to writing to and reading from a memory tape.
A particular way to interact with the scratchpad is to interlace every other token with an annotation of ``as a reminder, this is the state''~\citep{liu2023transformers,lanchantin2023learning}, so that there are no more explicit long-range dependencies.
However, this strategy is the same as the recurrent solution implementable by RNNs, and it does not always exist, especially when attention glitches occur in an internal component of the model.

\paragraph{Transformers and algorithmic tasks.}
Compared to real-world language datasets, synthetic tasks provide a cleaner and more controlled setup for probing the abilities and limitations of Transformers.
Specific to algorithmic reasoning, \cite{liu2023transformers} puts a unifying perspective on the ability of small Transformers to succeed at tasks corresponding to algorithmic primitives.
Specific tasks of interest include modular prefix sums \citep{hahn2020theoretical,anil2022exploring}, adders \citep{nogueira2021investigating,nanda2022mechanistic,lee2023teaching}, regular languages \citep{Bhattamishra20,poel2023mlregtest}, hierarchical languages \citep{yao2021self,zhao2023transformers}, and following a chain of entailment \cite{zhang2022unveiling}.

\paragraph{Comparison with \textit{Transformers Learn Shortcuts to Automata}.} \citet{liu2023transformers} study the parallel circuits efficiently realizable by low-depth Transformers. The authors identify \emph{shortcut solutions}, which exactly replicate length-$T$ recurrent computations (``chains of algorithmic reasoning'') in the absence of recurrence, with very few ($O(\log T)$; sometimes $O(1)$) layers.
Their results contain a general structure theorem of \emph{representability}, and preliminary positive empirics for \emph{generalization} and \emph{optimization}, demonstrating that Transformers can learn these shallow solutions via gradient-based training on samples.
In contrast, the present work is a fine-grained study of the issue of generalization.
Our main empirical contributions are a minimally sufficient setup (FFLM) and a set of large-scale\footnote{$\sim\!10^4$ 19M-parameter Transformers were trained in the making of this paper; see Appendix~\ref{subsec:infra}.} controlled experiments, towards providing reasonable scientific foundations for addressing the unpredictable reasoning errors of LLMs.

\subsection{Why \emph{this} flip-flop language?}
\label{subsec:why-wri01}

\cite{liu2023transformers} (as well as our mechanistic interpretability experiments) use a purer instantiation of flip-flop sequence processing, in which the sequence-to-sequence network is tasked with \emph{non-autoregressive transduction}: given the sequence of input symbols $\sigma_1, \ldots, \sigma_T$, output the sequence of states $q_1, \ldots, q_T$. This is most natural when studying the Transformer architecture's algebraic representations in their most isolated form.

Our autoregressive sequence modeling setting is a slight departure from this setting; we discuss its properties and rationale below.

\begin{itemize}[leftmargin=1.5em]
    \item The autoregressive setting ``type-checks'' exactly with standard state-of-the-art autoregressive (a.k.a. causal, forward, or next-token-prediction) language modeling. This makes it more convenient and intuitive as a plug-and-play benchmark.
    \item The cost is a layer of indirection: the model needs to associate ``instruction'' tokens with their adjacent ``data'' tokens. This is a natural challenge for representation learning, and is certainly a necessary cursor for robust extrapolation on natural sequences that embed similar tasks (like those considered in Figure~\ref{fig:flipflop-defs}c). It is straightforward to remove this challenge: simply tokenize at a coarser granularity (i.e. treat (instruction, data) pair as a distinct vocabulary item).
    \item The multi-symbol (and variable-length-symbol, etc.) generalizations of the binary flip-flop language are more parsimonious. If there are $n$ instead of $2$ tokens, this language can be encoded with $n+3$ commands. Without the decoupling of ``instruction'' tokens from ``data'', the vocabulary size would scale suboptimally with $n$. In Figure~\ref{fig:app-misc-violins}, we provide a quick demonstration that attention glitches persist in the presence of larger vocabularies.
    \item The conclusions do not change: in smaller-scale experiments, we observe the same extrapolation failures between the autoregressive and non-autoregressive task formulations.
\end{itemize}

\subsection{Attention glitches in natural LLMs}
\label{subsec:app-hypothesis}

In this section, we expand on the brief discussion from Section~\ref{sec:conclusion}. At a high level, \textit{we hypothesize that attention glitches cause (some) closed-domain hallucinations in Transformer models of more complex languages}. However, due to the fact that neural networks' internal representations evade simplistic mechanistic characterization, it is a significant challenge to formulate a rigorous, testable version of this hypothesis. We discuss the subtleties below.

First, we discuss a more general notion of attention glitches, of which the flip-flop errors considered in this papers are a special case.
We define attention glitches as \textit{failures of trained attention-based networks to implement a hard retrieval functionality perfectly}. To formalize this notion, there are several inherent ambiguities---namely, the notions of ``hard retrieval'' and ``perfectly'', as well as the granularity of ``subnetwork'' at which an attention glitch can be defined non-vacuously. The FFLM reasoning errors considered in this work provide a minimal and concrete resolution of these ambiguities. We discuss each of these points below:
\begin{itemize}[leftmargin=1.5em]
\item \textbf{Hard retrieval:} To succeed at FFLM, a network's internal representations must correctly implement the functionality of retrieving a single bit (from a sequence of bits, encoded unambiguously by the network), selected via the criterion of ``most recent \texttt{write} position''. This can be expanded into a richer functional formulation of hard attention, by generalizing the set of possible \emph{retrieved contents} (a discrete set of larger cardinality, or, even more generally, a continuous set), as well as more complex \emph{selection criteria} (e.g. ``least recent position'').

\item \textbf{Ground truth:} Of course, to define ``errors'' or ``hallucinations'' in reasoning, there must be a well-defined \emph{ideal} functionality. For FFLM, the notion of ``closed-domain'' reasoning and hallucinations is evident: the ideal behavior is for a model's outputs to coincide with that of the flip-flop machine on all input sequences. This straightforwardly generalizes to all formal languages, where the model is expected to correctly produce the deterministic outputs of automata which parse these languages. By considering expanded notions of ``ground truth'', it is possible to capture other notions of model hallucinations (such as incorrectly memorized facts). Our work does not address open-domain hallucinations (i.e.~where the model output contradicts real-world knowledge~\citep{openai2023gpt4}), which may be unrelated to attention glitches.

\item \textbf{Submodules:} Towards attributing implementations and errors to localized components of a network, it is impossible to provide a single all-encompassing notion of ``localized component''. This is a perennial challenge faced in the mechanistic interpretability literature. Our work considers two extremes: the entire network (in the main experiments, where we probe end-to-end behavior), and a single self-attention head (in Sections~\ref{subsec:mechanisms}, \ref{subsec:mechanistic} and \Cref{appendix:interpretability}, in which we probe whether a single attention head can learn multiplication in the flip-flop monoid). Even when considering the same functionality, attention glitches can be considered for different choices of ``submodule''.\footnote{Beyond the two extremes considered in this work, some examples include ``a subset of attention heads'', ``a subset of layers'', and ``a subspace of the entire network's embedding space''.} Our results reveal a key subtlety: in the presence of overparameterization (more layers and parallel heads than necessary according to the theoretical constructions), Transformers learn to process flip-flop languages via soft attention.
\end{itemize}

We expect that to effectively debug the full scope of LLM hallucinations, all of the above choices will need to be revisited, perhaps in tandem.

We hypothesize that the algorithmic reasoning capabilities of real LLMs (i.e. their ability to recognize, parse, and transduce formal symbolic languages) are implemented by \emph{internal} subnetworks whose functionalities can be identified with generalizations of the flip-flop machine. To the extent that such modules exist, attention glitches (the failure of these modules to represent the flip-flop operations perfectly, due to insufficient training data coverage) cause sporadic end-to-end errors (``closed-domain hallucinations''). In this work, we have treated the case of \emph{external} attention (where the task is to learn the flip-flop directly).

\section{Full experimental results}

\subsection{Details for LLM addition prompts (Figure~\ref{fig:adder-demo})}
\label{subsec:app-adder-demo}

These addition problem queries serve as a quick demonstration of (1) non-trivial algorithmic generalization capabilities of Transformer-based LLMs; (2) the brittleness of such capabilities: we directly address this type of reasoning error in this work. Table~\ref{tab:gpt_examples_fig1},\ref{tab:gpt_examples_more} show these queries and results in detail.

We emphasize that these examples were selected in an adversarial, ad-hoc manner; we do not attempt to formalize or investigate any claim that the errors made by larger models are at longer sequence lengths. We also cannot rule out the possibility that some choice of prompt elicits robust algorithmic reasoning (e.g. the prompting strategies explored in \citep{zhou2022teaching}). The only rigorous conclusion to draw from Figure~\ref{fig:adder-demo} is that of non-robustness: even LLMs exhibiting state-of-the-art reasoning continue to make these elementary errors for some unambiguous queries with deterministic answers. It was last verified on May 8, 2023 that GPT-4 (in its ChatGPT Plus manifestation) demonstrates the claimed failure mode.

\renewcommand{\arraystretch}{2.1}

\begin{table}[]
    \centering
\footnotesize{
\begin{tabular}{r|ccc}
\normalsize{\textbf{Input}} & \normalsize{\textbf{GPT-3.5}} & \normalsize{\textbf{GPT-4}} & \normalsize{\textbf{Answer}}
\\
\hline
        \begin{minipage}{3cm}
        \begin{flushright}
        8493\\
        + 2357
        \end{flushright}
        \end{minipage}
        & 10850 \cmark & 10850 \cmark & 10850\\
        \begin{minipage}{3cm}
        \begin{flushright}
        84935834\\
        + 23572898
        \end{flushright}
        \end{minipage}
& 108\textcolor{red}{0}08732 \xmark & 108508732 \cmark & 108508732\\
\begin{minipage}{3cm}
\begin{flushright}
9991999919909993\\
+ 6109199190990097
\end{flushright}
\end{minipage}
&
\begin{minipage}{3.6cm}
\centering
161\textcolor{red}{1}11991\textcolor{red}{0}08\textcolor{red}{1}0090  \xmark
\end{minipage}
 &
\begin{minipage}{3cm}
\centering
161011991\textcolor{red}{0}0\textcolor{red}{89}0090 \xmark
\end{minipage} 
& 16101199110900090\\
\end{tabular}
}
\vspace{2ex}
     \caption{Examples (in Figure~\ref{fig:adder-demo}) of GPT variants on addition:
     While models tend to succeed at additions with a small number of digits, they (nondeterministically) fail at longer additions.
     }
     \label{tab:gpt_examples_fig1}
\end{table}

\begin{table}[]
    \centering
\footnotesize{
\begin{tabular}{r|ccc}
\normalsize{\textbf{Input}} & \normalsize{\textbf{GPT-3.5}} & \normalsize{\textbf{GPT-4}} & \normalsize{\textbf{Answer}}
\\
\hline
        \begin{minipage}{3cm}
        \begin{flushright}
        4491\\
        + 8759
        \end{flushright}
        \end{minipage}
        & 13250 \cmark & 13250 \cmark & 13250\\
        \begin{minipage}{3cm}
        \begin{flushright}
        80087394\\
        + 63457948
        \end{flushright}
        \end{minipage}
& 143\textcolor{red}{0}45342 \xmark & 143545342 \cmark & 143545342\\
\begin{minipage}{3cm}
\begin{flushright}
5101611078665398\\
+ 8969499832688802
\end{flushright}
\end{minipage}
&
\begin{minipage}{3.6cm}
\centering
1.407111091135420\textcolor{red}{2e+16} \xmark
\end{minipage}
 &
\begin{minipage}{3cm}
\centering
14071110911354\textcolor{red}{196} \xmark
\end{minipage} 
& 14071110911354200\\
\end{tabular}
}
\vspace{2ex}
     \caption{More examples of GPT variants on addition:
     While models tend to succeed at additions with a small number of digits, they (nondeterministically) fail at longer additions.
     }
     \label{tab:gpt_examples_more}
\end{table}

\subsection{Extrapolation failures of standard Transformers (Section~\ref{sec:glitches})}
\label{subsec:app-eval}

This section provides full details for our empirical findings (R1) through (R3).

\paragraph{Architecture size sweep.} We consider a sweep over Transformer architecture dimensionalities, varying the three main size parameters. We emphasize that these are somewhat larger than ``toy'' models: the parameters go up to ranges encountered in natural sequence modeling (though, of course, far short of state-of-the-art LLMs).
\begin{itemize}
    \item The \emph{number of layers} (depth) $L \in \{2,4,6,8\}$.
    \item The \emph{embedding dimension} $d \in \{128,256,512,1024\}$.
    \item The \emph{number of parallel attention heads} per layer $H \in \{2,4,8,16\}$. In accordance with standard scaling rules-of-thumb, each head's dimension is selected to be $d/H$.
\end{itemize}

\paragraph{Other hyperparameter choices.} We use a sequence length of $T=512$, again to reflect a typical length of dependencies considered by nontrivial Transformer models. 
We use a canonical set of training hyperparameters for this sweep: the AdamW~\citep{loshchilov2017decoupled} optimizer, with $(\beta_1, \beta_2) = (0.9, 0.999)$, learning rate $3 \times 10^{-4}$, weight decay $0.1$, $50$ steps of linear learning rate warmup, and linear learning rate decay (setting the would-be $10001$th step to 0). We train for $10000$ steps on freshly sampled data, and choose a minibatch size of $16$; consequently, the models in this setup train on $81,\!920,\!000$ tokens.

\paragraph{Training and evaluation data.}
Unless otherwise stated, we train our models on online samples (fresh i.i.d. batches) containing moderate-length dependencies ($T = 512, p_\texttt{i}=0.8, p_\texttt{w}=p_\texttt{r}=0.1$, or $\mathsf{FFL}(0.8)$ for short). We evaluate on the following held-out test sets, which are held the same across all training runs:
\begin{enumerate}
\item[(i)] \emph{In-distribution:} $10^3$ sequences from the same distribution $\mathsf{FFL}(0.8)$, containing $26508$ \texttt{read} instructions.
\item[(ii)] \emph{Sparse tail:} $10^5$ sequences from $\mathsf{FFL}(0.98)$, containing $353875$ \texttt{read} instructions.
\item[(iii)] \emph{Dense tail:} $3000$ sequences from $\mathsf{FFL}(0.1)$, containing $345781$ \texttt{read} instructions.
\end{enumerate}

When showing training curves, we evaluate errors only on (i) and the first $1\%$ of (ii) every $100$ steps; the purpose of these is only to give a qualitative sense of instability, and affirm the presence of errors.
For the main suite of experiments, we focus on reporting the high-precision glitch rate measurements on (ii) and (iii).
Final errors on (i) are exactly 0 except for a small number of non-converged runs (2-layer architectures and badly tuned hyperparameters), so we omit these evaluations in favor of the more informative measurements which focus on the sparse and dense tails.

\begin{itemize}
    \item [(R1)] \textbf{Transformers exhibit a long, irregular tail of errors.} Figure~\ref{fig:full-transformer-baselines} shows training curves for 3 replicates (random seeds) in each setting, while the scatter plot in the main paper shows variability of out-of-distribution accuracy across random seeds for the baseline setup.
    We find that Transformers sometimes succeed at extrapolation, but erratically.
    \item [(R2)] \textbf{1-layer LSTM extrapolates perfectly.} We train a 1-layer LSTM \citep{hochreiter1997long} network, with hidden state dimension $128$ (for a total of $133$K parameters), for $500$ steps with the same optimizer hyperparameters as above. The LSTM model achieves exactly $0$ final-iterate o.o.d. error, over $100$ out of $100$ replicates. In smaller experiments, we found that larger LSTMs (width up to $8192$, depth up to $4$) also extrapolate perfectly.
\end{itemize}

\begin{figure}
    \centering
    \includegraphics[width=\linewidth]{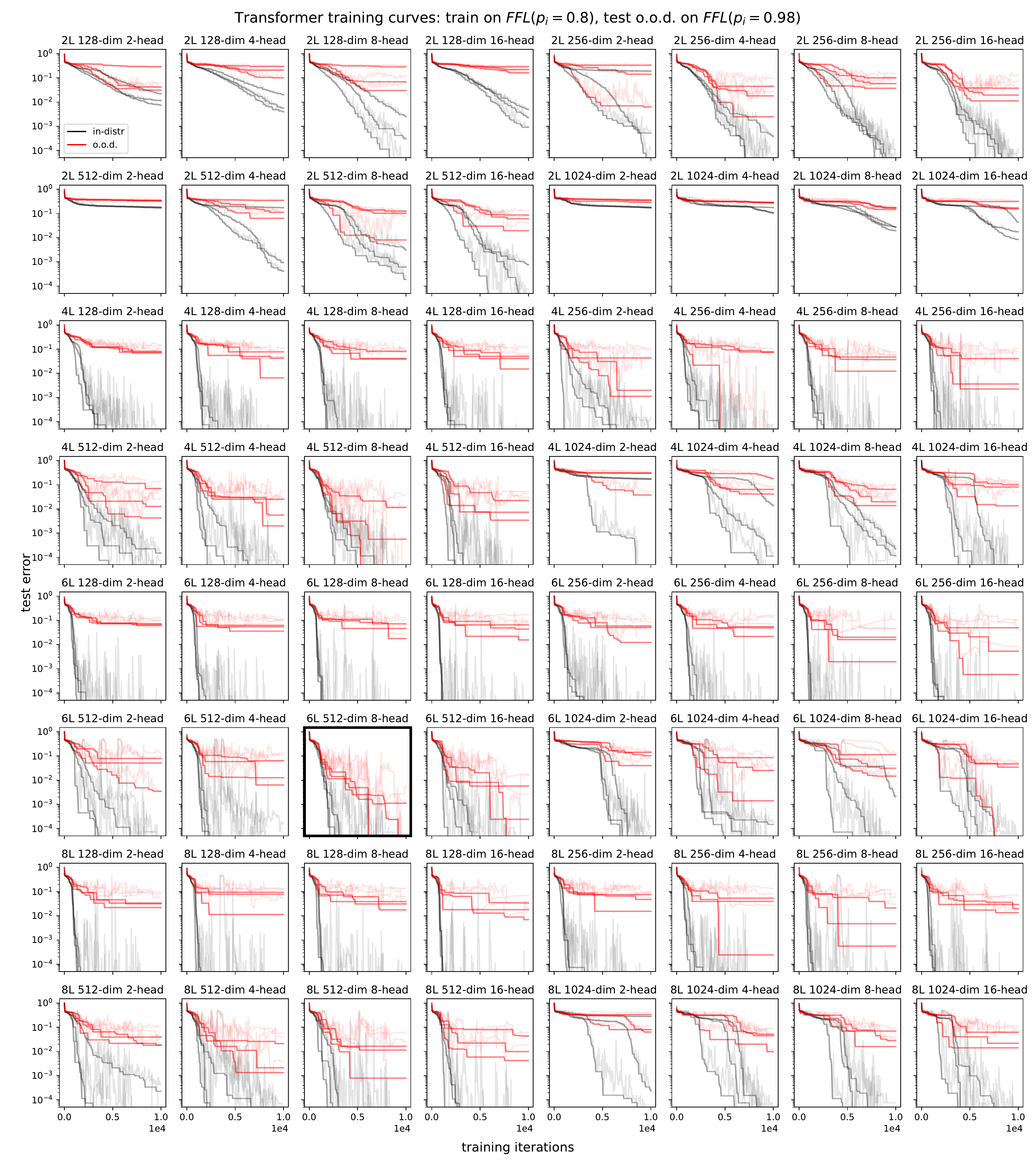}
    \caption{Examples of training curves over various Transformer architectures, ranging from 46K to 101M trainable parameters. We exhibit 3 (randomly selected) random seeds for each architecture. Lighter curves show raw error percentages, while solid curves denote the lowest error so far in each run. Notice the following: (1) \textbf{non-convergence of shallow models} (despite representability) (2) \textbf{failure of most runs to extrapolate} (i.e. reach 0\% out-of-distribution error); (3) \textbf{high variability} between runs; (4) erratic, \textbf{non-monotonic progress} on out-of-distribution data, even when the in-distribution training curves appear flat; (5) \textbf{a small LSTM outperforms all of these Transformers} (see Figure~\ref{fig:pure-flipflop-baselines}). The \smash{\fbox{bolded box}} represents our 19M-parameter baseline model.}
    \label{fig:full-transformer-baselines}
\end{figure}

\paragraph{Canonical baseline.} We select the \textbf{6-layer, 512-dimensional, 8-head} architecture (with 19M trainable parameters) as our \emph{canonical baseline} model: it is large in relevant dimensions\footnote{Except the vocabulary size. In preliminary experiments, we obtained similar findings in the case of token spaces larger than $\{0, 1\}$.} to real Transformers, while being small enough to allow for thousands of training runs at a reasonable cost. To fully understand the variability of this single architectural and algorithmic setup, we train and evaluate $500$ replicates in this setting.

\paragraph{Random data vs. random initialization.} Recent synthetic probes on the surprising behavior of deep neural nets on hard synthetic tasks \citep{barak2022hidden,garg2022can} obtain additional insights by disentangling the effects of \emph{data randomness} (i.e. the precise sequence of minibatches) vs. \emph{model randomness} (e.g. random initialization and dropout). We provide a quick demonstration in Figure~\ref{fig:extra-curves} (left) that \textit{both sources of stochasticity matter}. We do not perform a more detailed investigation of their precise influence and roles.

\begin{figure}
    \centering
    \includegraphics[width=0.635\linewidth]{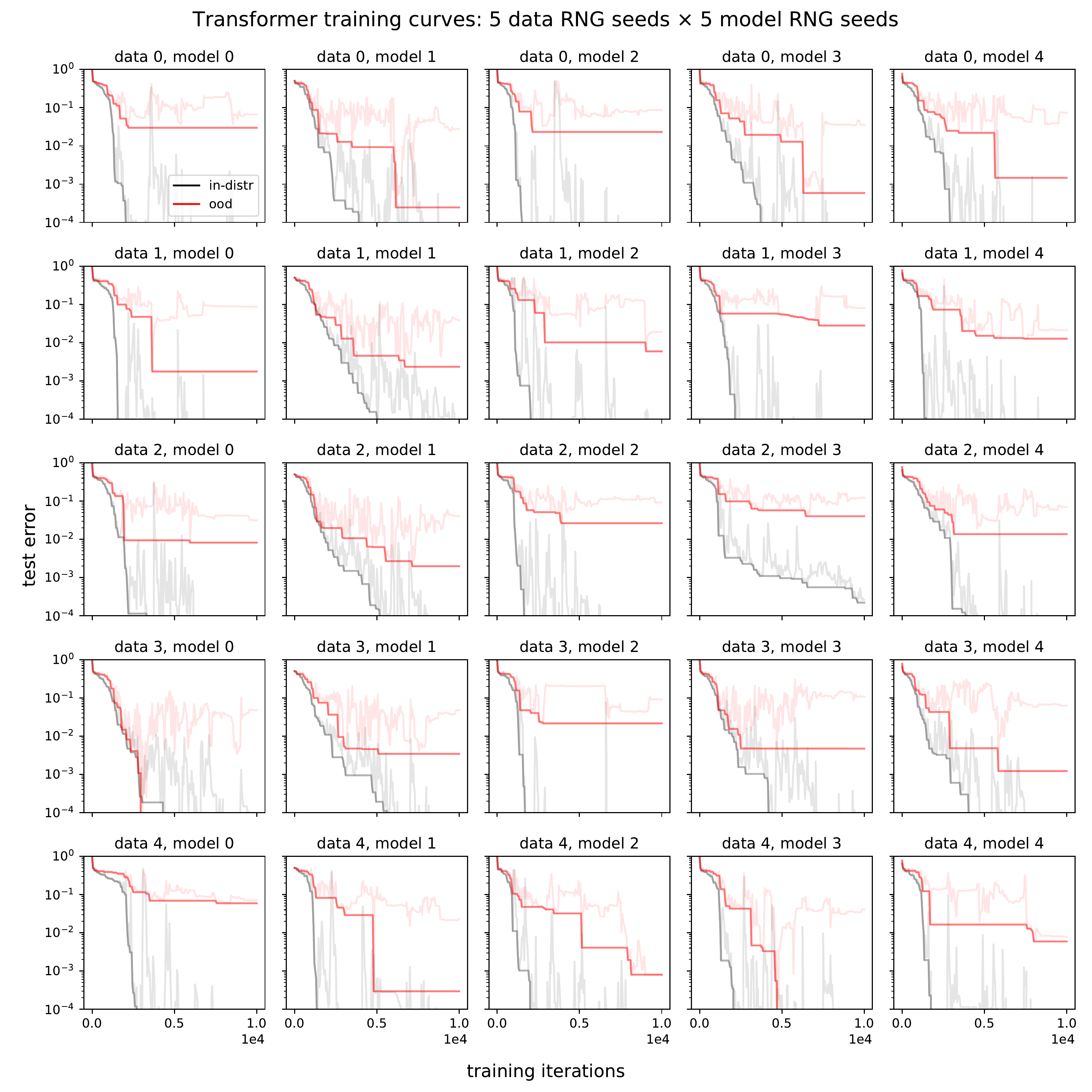}
    \raisebox{1.3em}{\includegraphics[width=0.353\linewidth]{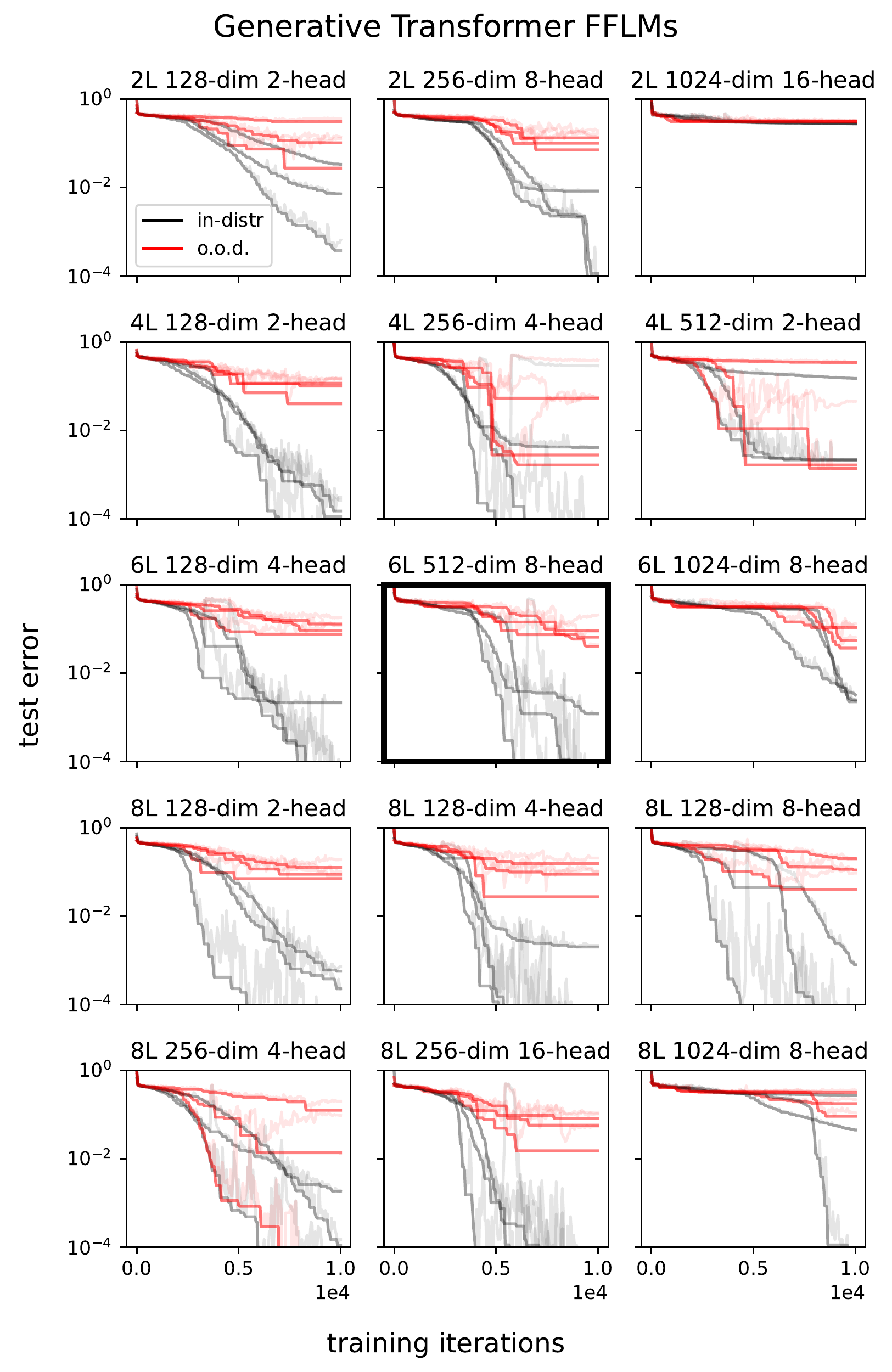}}
    \caption{Additional training curves. \emph{Left:} Identical baseline architecture, varying the $5$ data seeds and $5$ model seeds: models in the same row encounter the same sequence of data, while models in the same column start from identical initializations. \textbf{Both sources of randomness affect training dynamics and extrapolation}, and it is not clear which is more important. \emph{Right:} Similar findings for models trained in ``fully generative'' mode (scoring on all tokens); baseline architecture is in the \smash{\fbox{bolded box}}.}
    \label{fig:extra-curves}
\end{figure}

\paragraph{Fully generative setting: similar negative results.}
As mentioned in Section~\ref{subsec:flipflop-def}, to capture a setting closer to standard autoregressive (sometimes called GPT-style) language modeling, we find a similar failure to extrapolate when models are trained to predict all tokens, rather than only the deterministic ones ($x_{t+1}$ such that $x_t = \texttt{r}$). Figure~\ref{fig:extra-curves} (right) exhibits some training curves for this setting, showing non-extrapolation, variability, and instability. We observe that training (to in-distribution convergence) takes slightly longer in this setting, and usually succeeds with the baseline architecture. We do not perform further controlled experiments in this setting.

\subsubsection{Evaluating real LLMs on flip-flops}
\label{subsubsec:app-natural-eval}

We provide a quick corroboration that while LLMs in practice can perform in-context reasoning when the sequences are unambiguously isomorphic to a flip-flop language. We use the natural language example from Figure~\ref{fig:flipflop-defs} (top right), and evaluate the capability of popular pretrained LLMs to correctly remember the state of a light switch. Specifically, \texttt{write} instructions in the FFLM task are either ``Alice turns the light off'' or ``Alice turns the light on''. The \texttt{ignore} instructions are either ``Bob eats a banana'' or ``Bob eats an apple''. All models are prompted with a translated, length-16 FFLM task that's been translated to English in this way before evaluation.

We measure this accuracy as a function of the sequence length for several well-known LLMs: GPT-2, GPT-2-large, GPT-2-xl, Pythia-12C, and GPT-NeoX-20B.
\Cref{fig:pure-flipflop-baselines}
shows how well these models perform on this task (i.e. the correctness of the model when prompted with ``The light is turned '') as the sequence length is varied.
Consistent with the findings of this paper, larger models tend to perform best at this task, and the quality of all models deteriorates with increased sequence length.
Each point on the plot considers 500 sequences of the indicated length.
All models were prompted with a randomly generated, length 16 flip flop sequence to allow the model to learn the task in context. Accuracy is measured according to the frequency with which the model correctly predicts the current state of the light switch, as described in Section~\ref{subsubsec:app-natural-eval}.

\begin{itemize}
    \item [(R3)] \textbf{10B-scale natural LMs can correctly process flip-flop languages, but not robustly.}
\end{itemize}

Note that it is impossible to quantify the degree to which these sequences are ``in-distribution'' (it is unlikely that any sequences of this form occur in the training distributions for these LLMs). Much like linguistic reasoning evaluations in the style of BIG-bench \citep{srivastava2022beyond}, we rely on the emergent capability of in-context inference \citep{brown2020language} of the task's syntax and semantics. As discussed in Appendix~\ref{subsec:app-hypothesis}, this layer of indirection, which is impossible to avoid in the finetuning-free regime, can cause additional (and unrelated) failure modes to those studied in our synthetic experiments. Fully reconciling our findings between the synthetic and non-synthetic settings (e.g. by training or finetuning on sequences of this form, or via mechanistic interpretation of non-synthetic language models) is outside the scope of this paper, and yields an interesting direction for future work.

\textit{Direct in-context induction of flip-flop languages?}
Given the conversational abilities of LLMs, another way to interact with an existing pretrained model is to explain the definition of FFLM in natural language, and ask the model to output the correct state for \texttt{r}.
We test this using ChatGPT (with GPT-4), as demonstrated in \Cref{fig:fflm_chatgpt}.
ChatGPT seems to understand the rules and can get short sequences correct (up to sequence length 400), but makes errors with unexpected connections on longer sequences.

\begin{figure}
    \centering
    \begin{minipage}[b]{0.37\textwidth}
        \begin{subfigure}[b]{\textwidth}
        \includegraphics[width=\textwidth]{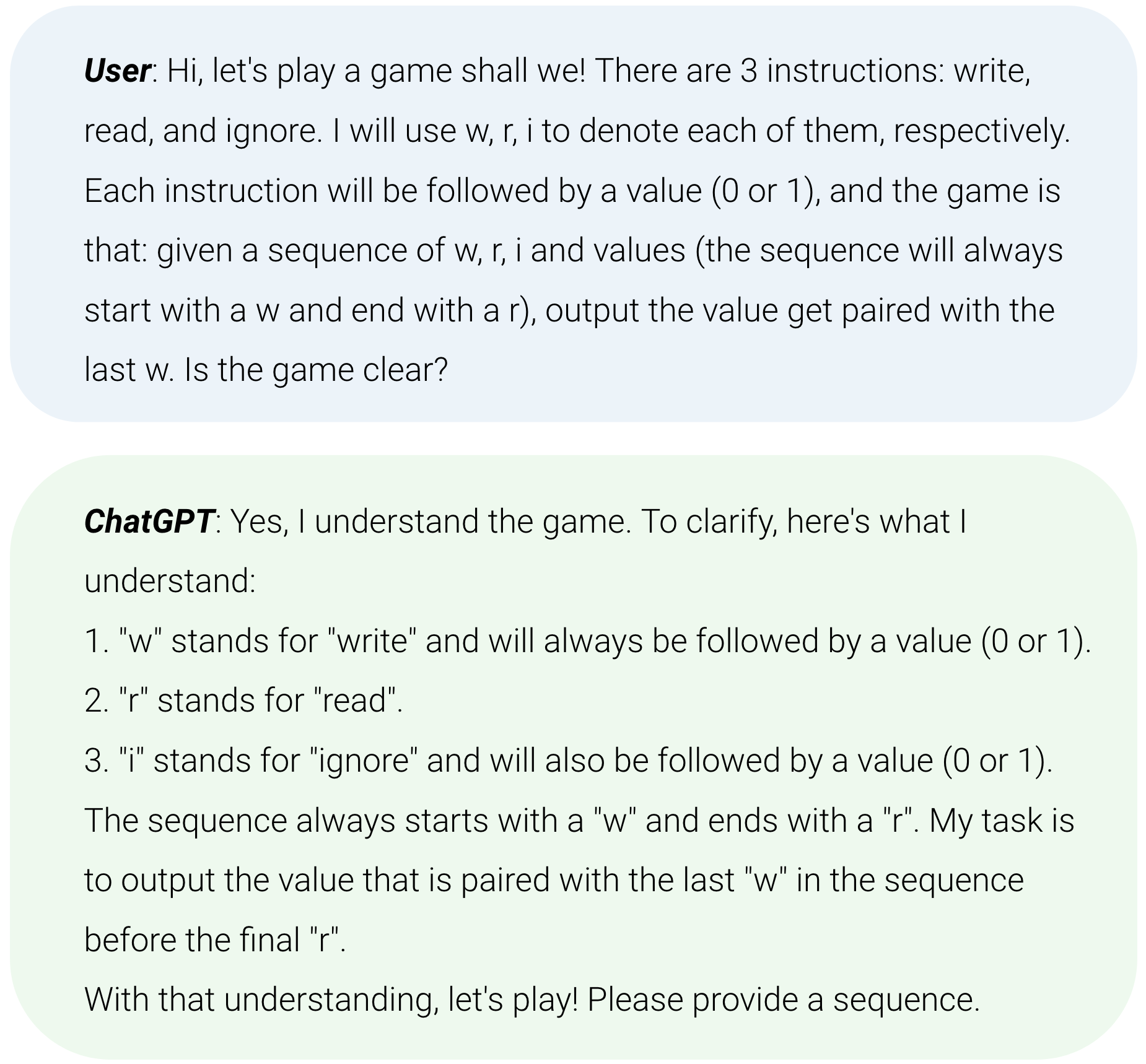}
        \caption{Explaining FFLM to ChatGPT.}
        \end{subfigure}
    \end{minipage}
    \begin{minipage}[b]{0.62\textwidth}
        \begin{subfigure}[b]{\textwidth}
        \includegraphics[width=\textwidth]{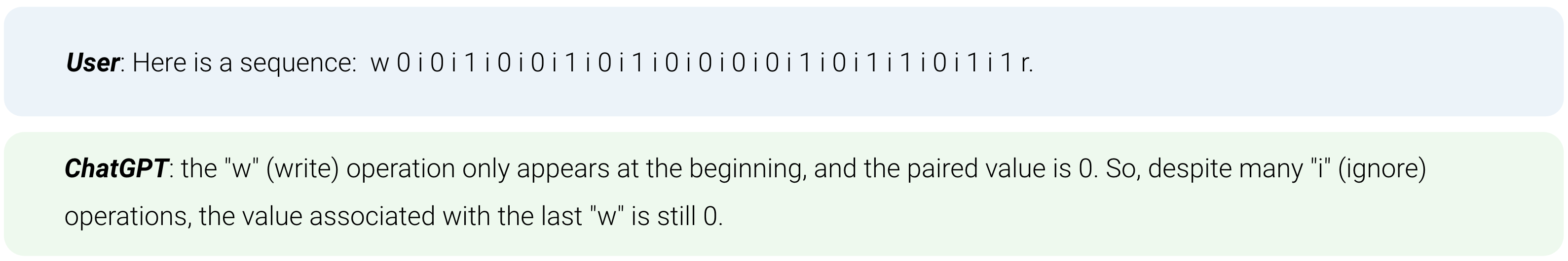}
        \caption{Correct on short sequences.}
        \end{subfigure}        \begin{subfigure}[b]{\textwidth}
        \includegraphics[width=\textwidth]{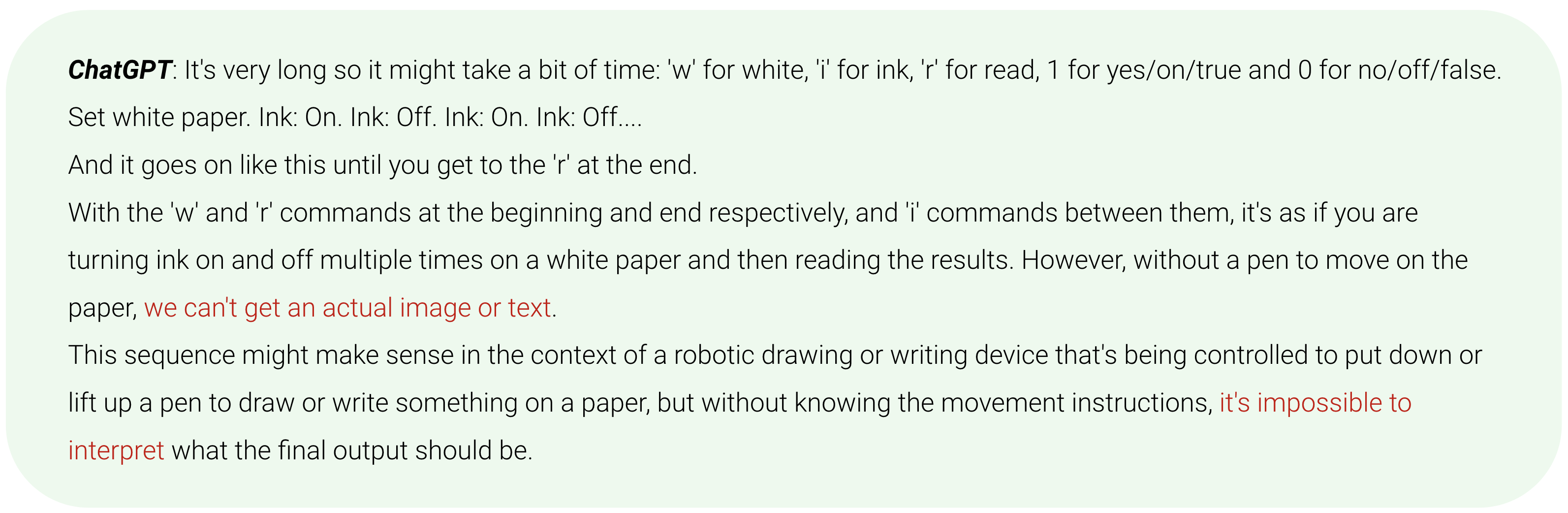}
        \caption{Wrong on long sequences (input length 1000).}
        \end{subfigure}
    \end{minipage}
    \caption{Examples of interacting with ChatGPT-4 (as of 05/22/2023) by explaining FFLM to it.}
    \label{fig:fflm_chatgpt}
\end{figure}

\subsection{Effects of training data and scale (Section~\ref{subsec:cheats})}

Here, we provide more details for the interventions outlined in \Cref{subsec:cheats}, which directly modify the training distributions and resources.

\textbf{Long-tailed data.} For direct training on long-tailed sequences, we train on an equal mixture of $\mathsf{FFL}(0.8)$, $\mathsf{FFL}(0.98)$, and $\mathsf{FFL}(0.1)$, ensuring that the the sparse and dense tails are both adequately represented in training.

\textbf{Scaling up data and compute.} For both the default and long-tailed training distributions, we consider increasing the number of training iterations by a factor of $3$ or $10$, either with freshly generated samples (``$N\times$ data'') or looping over the same training dataset for multiple epochs (``$N\times$ steps'').

\textbf{Scaling up the model.} We also perform these exhaustive tail error evaluations on all 64 of the architecture scale settings shown in Figure~\label{fig:full-transformer-baselines}, as well as a limited number of larger architectures (shown in Figure~\ref{fig:app-scale-violins}).

\begin{itemize}
    \item [(R4)] \textbf{Training on rare sequences works best, by a wide margin.} See the teal violins in Figure~\ref{fig:app-scale-violins} \emph{(left)}; training for longer (either with fresh data, or for more epochs on the same data) reduces the fraction of unlucky error-prone outliers. Recall that extrapolation is possible without such favorable coverage, via using a recurrent model.
    \item [(R5)] \textbf{Resource scaling (in-distribution data, training steps, network size) helps.} Training on more data from the same distribution, as well as for more steps on the same examples, both improve sparse-sequence performance, at the expense of dense-sequence performance (blue violins in Figure~\ref{fig:app-scale-violins} \emph{(left}). As best seen through Figure~\ref{fig:app-scale-violins} \emph{(right)}, there is no discernible monotonic relationship between any of the Transformer's standard architecture size parameters (i.e. number of layers, embedding dimension, and number of parallel self-attention heads per layer) and extrapolative performance (navy violins).
\end{itemize}

\begin{figure}
    \centering
    \includegraphics[width=0.63\linewidth]{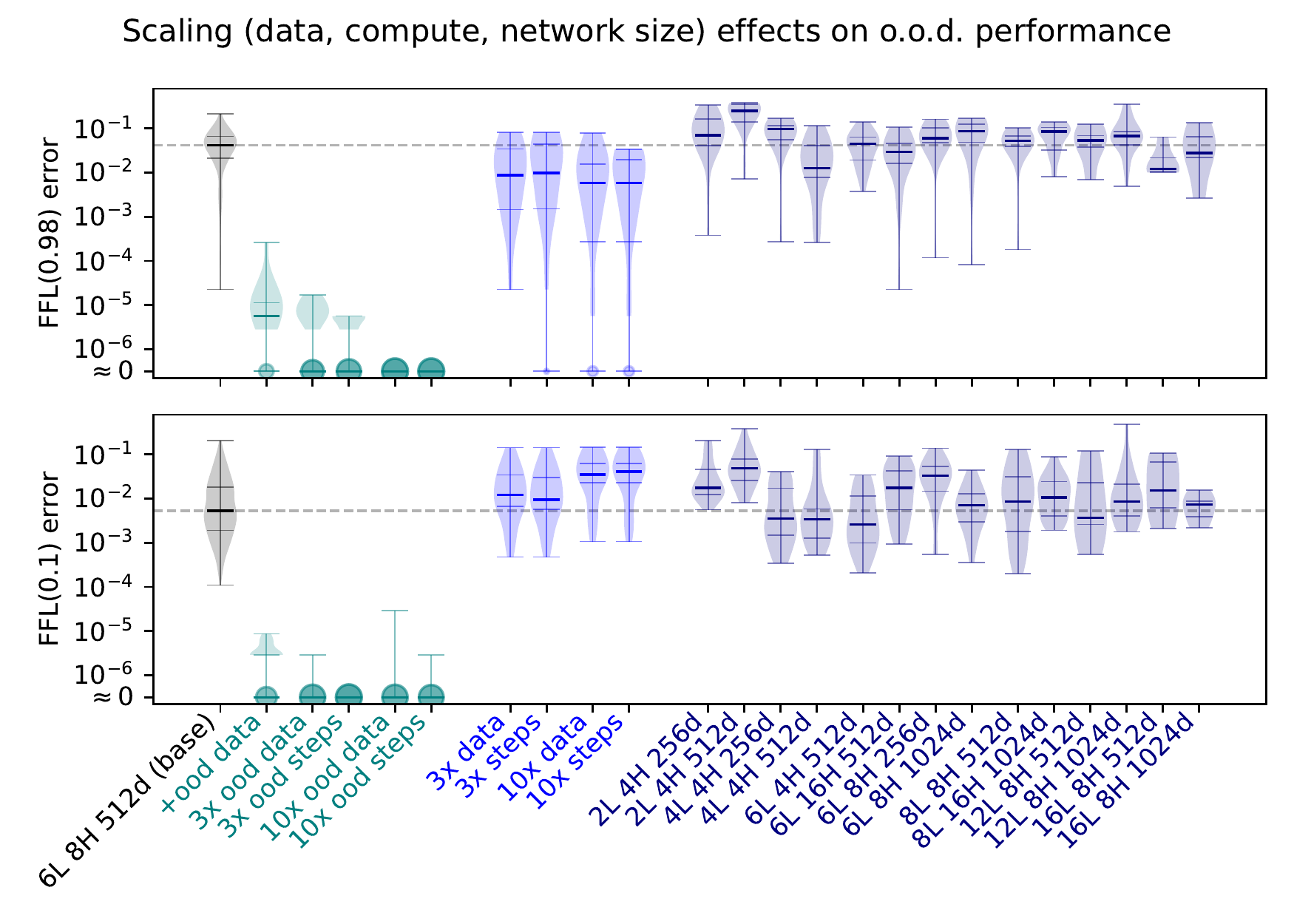}
    \raisebox{3em}{\includegraphics[width=0.35\linewidth]{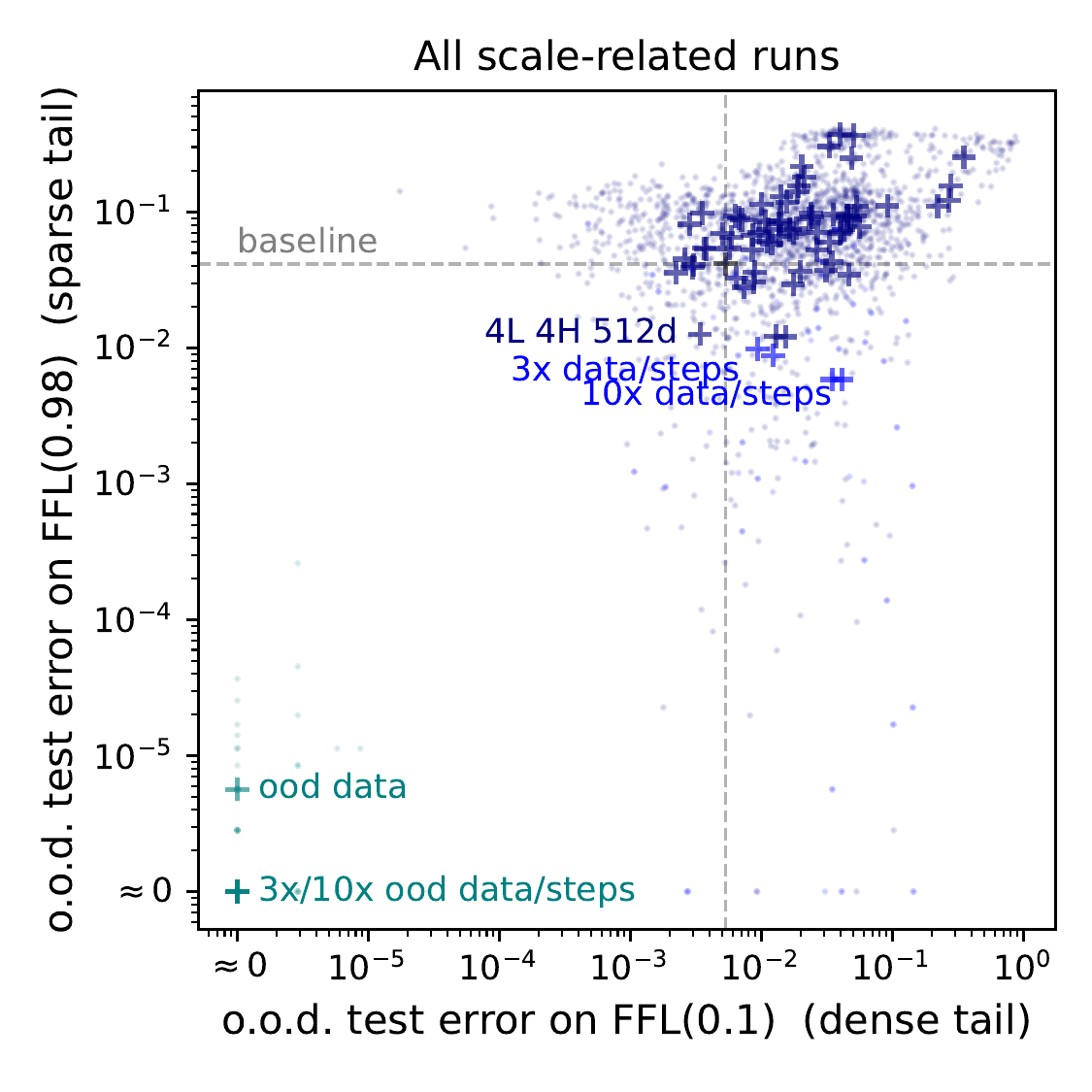}}
    \caption{Detailed comparisons of various scaling axes, showing violin plots for selected runs \emph{(left)} and an annotated scatter plot for all runs \emph{(right)}; $+$ markers show medians for particular configurations. \textbf{Increasing training data diversity is by far the most effective way to mitigate attention glitches} in FFLM. The other scaling axes (increasing the amount of fresh data, increasing the number of optimization steps on the same dataset, and changing the model size) have mixed effects on rare-sequence performance. In particular, we wish to highlight that \textbf{neither overparameterization nor underparameterization strongly modulates the rate of glitches}.}
    \label{fig:app-scale-violins}
\end{figure}

\subsection{Indirect algorithmic controls for extrapolation (Section~\ref{subsec:algs})}

\begin{figure}
    \centering
    \includegraphics[width=\linewidth]{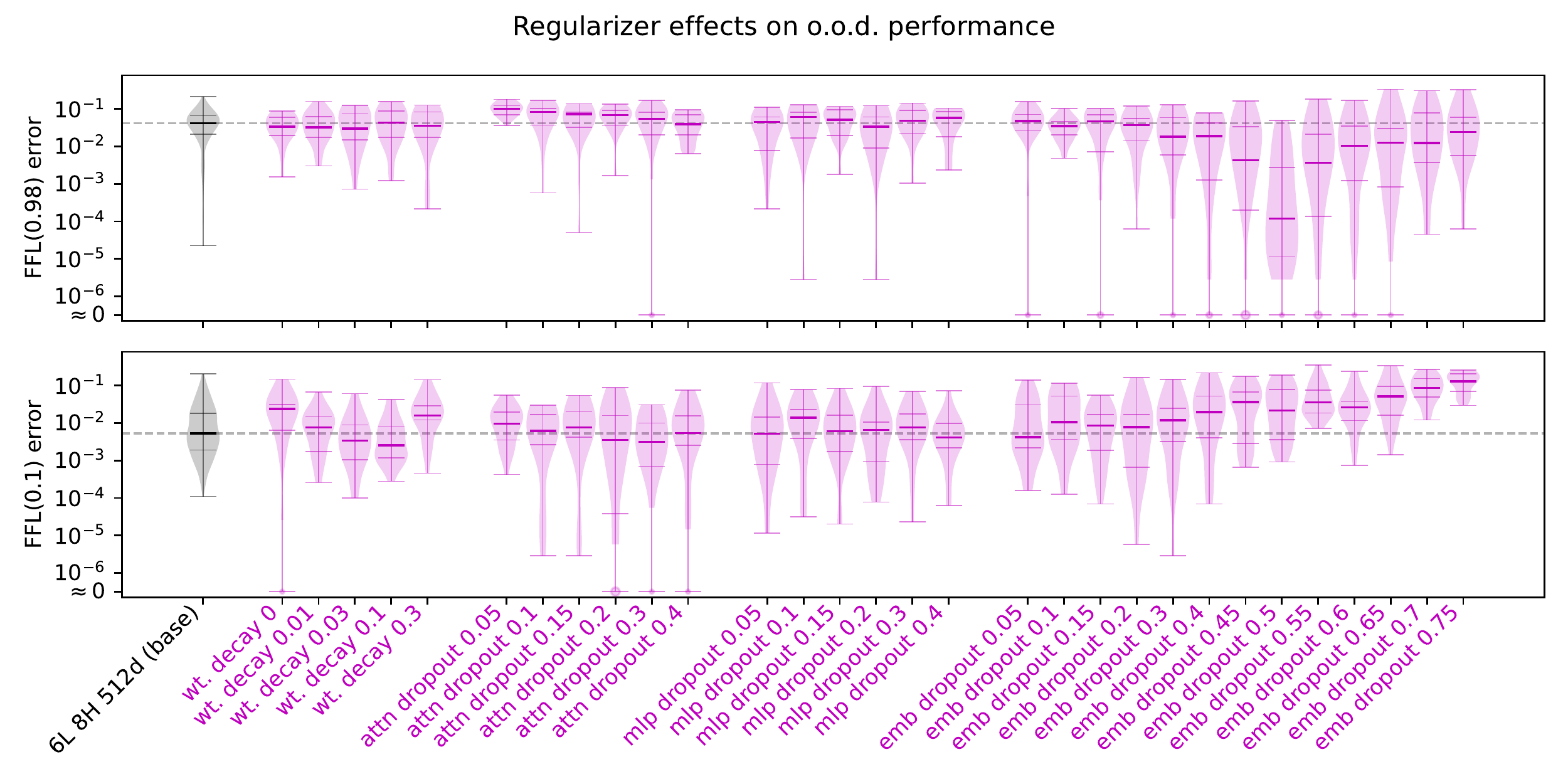}
    \caption{Detailed comparisons of standard regularizers (weight decay, and 3 forms of dropout). While some regularizer choices reduce rare-sequence error rates (in particular, large embedding dropout reduces sparse-sequence errors by 2 orders of magnitude), \textbf{nothing eliminates the glitches entirely}.}
    \label{fig:app-reg-violins}
\end{figure}

\begin{figure}
    \centering
    \includegraphics[width=0.65\linewidth]{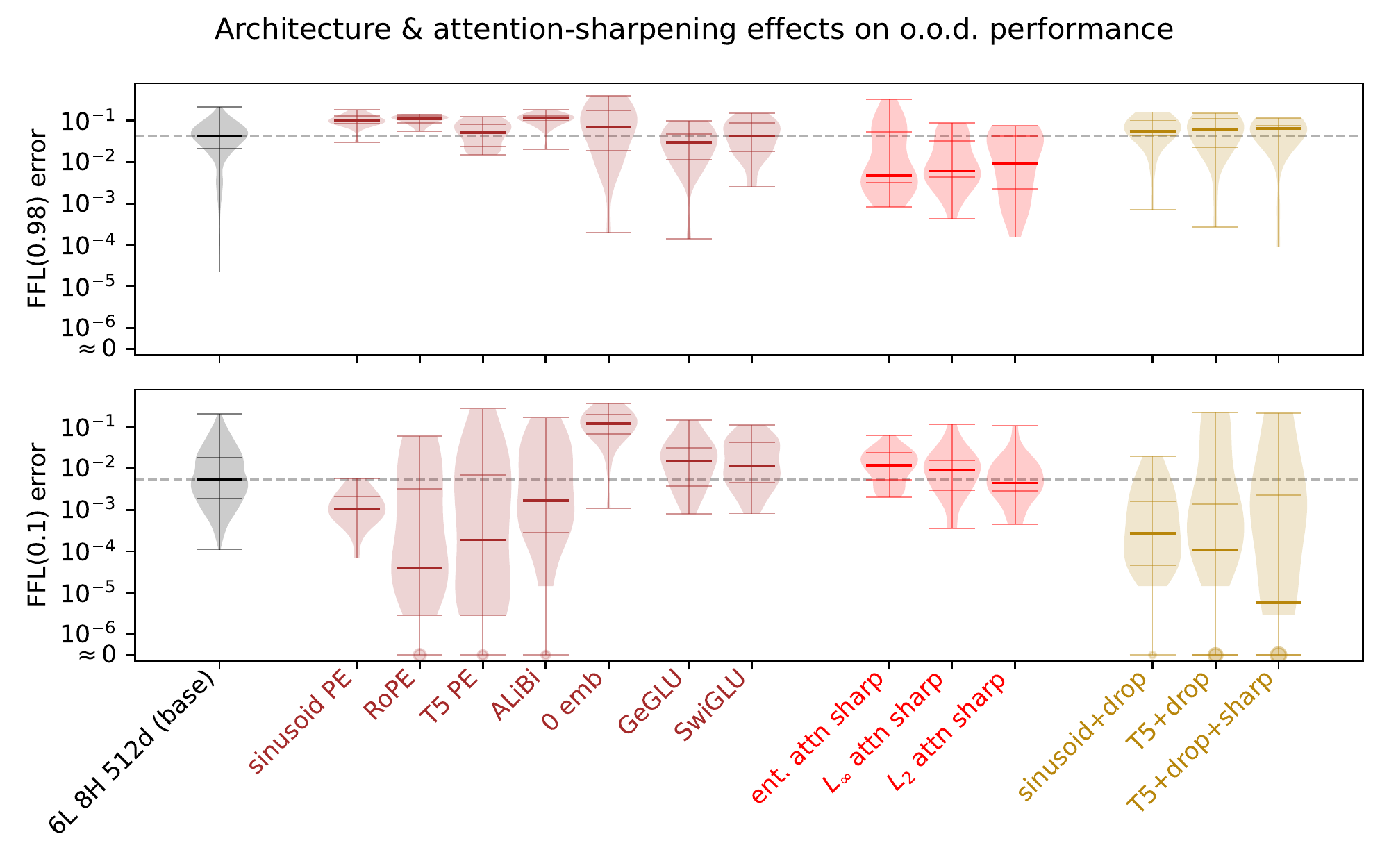}
    \caption{Detailed comparisons of a selection of architectural changes, attention-sharpening losses, and combinations of indirect algorithmic controls. \textbf{Our best runs come from jointly tuning these interventions}, including an annealing schedule for the attention-sharpening loss; however, \textbf{even the best models have nonzero glitch rates}. Figure~\ref{fig:big-scatter} provides an exhaustive view of these results.}
    \label{fig:app-arch-violins}
\end{figure}

As shown in \Cref{fig:violins} in the main paper, various architectural, algorithmic and regularization choices can help improve error rates compared to the baseline Transformer. The various settings of weight decay, \{attention, MLP, embedding\} dropout, position embedding, activation function, attention-sharpening penalty are all found in Figures~\ref{fig:app-reg-violins} and \ref{fig:app-arch-violins}.

\paragraph{Details for architecture variants.} There is no clear consensus on the advantages and drawbacks of various positional encodings, but it has been known~\cite{dai2019transformerxl} that the choice of positional symmetry-breaking scheme modulates long-sequence performance on natural tasks.
We evaluate various choices which appear in high-profile LLMs: sinusoidal, learned, ALiBi \citep{press2021train}, and RoPE \citep{su2021roformer}.
We also try the \emph{zero} positional encoding (which breaks symmetry via the causal attention mask; see \citet{haviv2022transformer}.
We find that non-trainable position encodings help on dense sequences ($\mathsf{FFL(0.1)}$), but have no clear benefit on sparse ones ($\mathsf{FFL(0.98)}$) which require more handling of long-term dependency. We also try the gated activation units considered by 
\citep{shazeer2020glu}.

\paragraph{Details for attention sharpening.} There are many possible choices of continuous regularization terms which can promote sparsity in an attention head's weights---we consider entropy, negative $L_2$ loss, and negative $L_\infty$ loss. These terms are averaged across every attention head in the Transformer, and added as a surrogate objective during training. We perform a large grid sweep over coefficients $\{0.01, 0.03, ..., 0.1, 0.3, 1, 10, 30\}$, annealing schedules (linear and oscillating, starting from 0, 2000, and 5000 steps), and display in Figure~\ref{fig:app-arch-violins} the 3 choices which appear on the Pareto front.

\paragraph{Optimizer hyperparameters.} We also varied the optimizer parameters for AdamW ($\beta_1 \in \{0.85, 0.9, 0.95\}$, $\beta_2 \in \{0.95, 0.99, 0.999\}$, learning rate $\eta \in \{0.0001, 0.0003, 0.001, 0.003\}$) and found no significant improvements to extrapolation performance.

We restate the main findings:
\begin{itemize}
    \item [(R6)] \textbf{Many algorithmic choices influence extrapolative behavior.} We sweep over various forms of implicit and explicit regularizers; see Figures~\ref{fig:app-reg-violins} and \ref{fig:app-arch-violins}. Details are provided below.
    \item [(R7)] \textbf{Despite many partial mitigations, nothing eliminates attention glitches entirely.} Refer to the scatter plot in Figure~\ref{fig:violins} (left) for a visualization of \emph{every} training run.
\end{itemize}

Figure~\ref{fig:app-misc-violins} provides a few supplementary experiments, verifying that attention glitches persist: scaling the attention softmax temperature, and larger vocabularies for the memory register. We also provide a quick verification that errors are not concentrated at the same dependency lengths (i.e.~distances between \texttt{write} and \texttt{read}).

\begin{figure}
    \centering
    \includegraphics[width=0.55\linewidth]{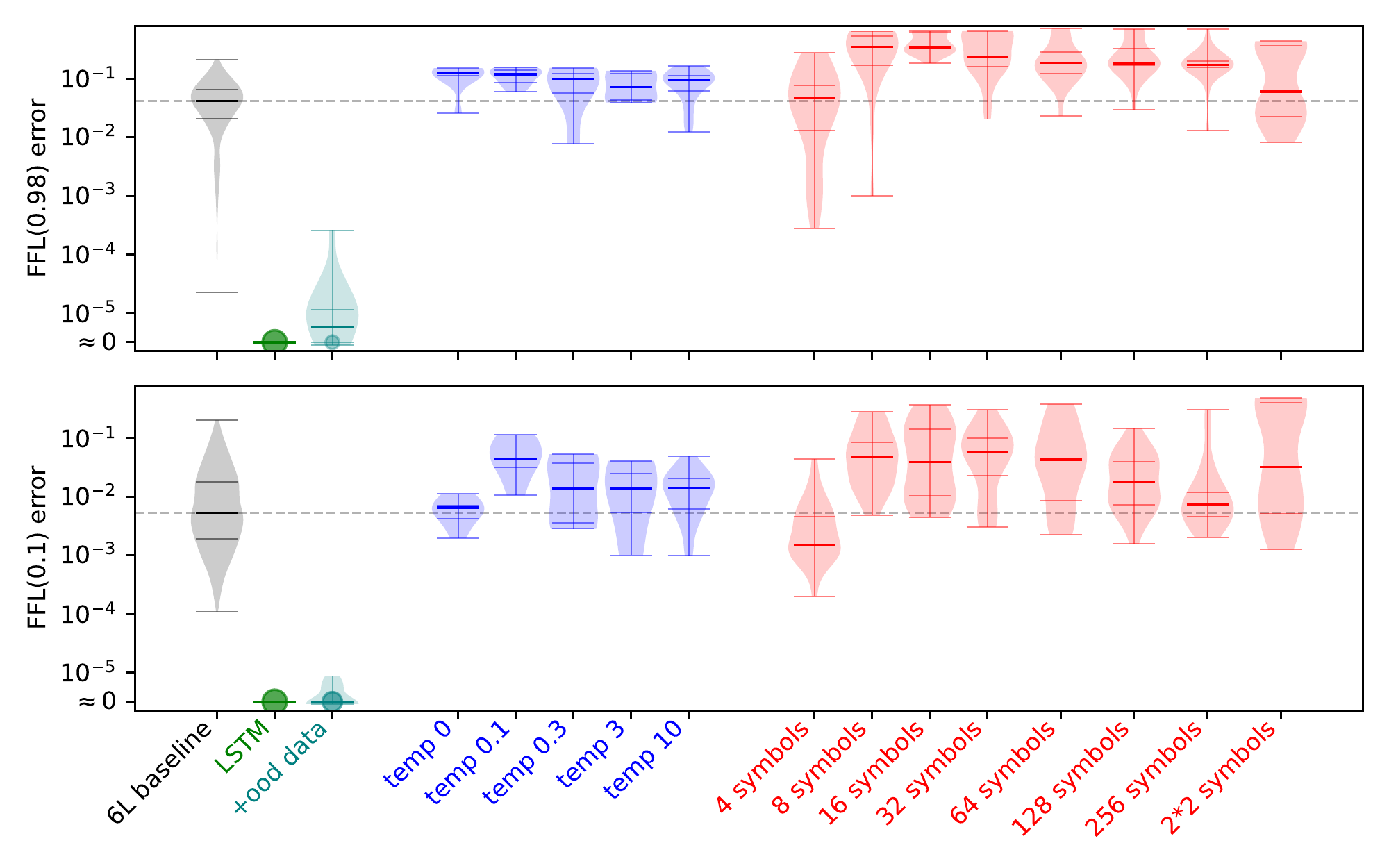}
    \raisebox{2em}{\includegraphics[width=0.4\linewidth]{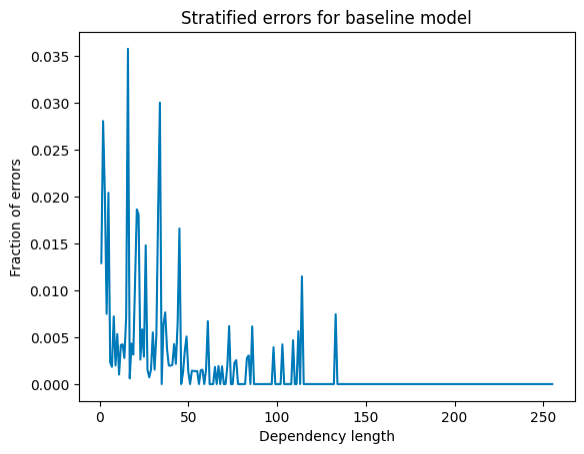}}
    \caption{Supplementary plots. \emph{Left:} Additional comparisons: modifying the softmax temperature $\beta$ inside the attention heads (multiplying the attention alignment scores by $1/\beta$), and generalizations of the FFLM task to larger vocabularies (i.e. the ``data'' tokens are uniformly drawn from $\{0, 1, \ldots, M-1\}$). In the rightmost violin, ``2*2'' refers to a token space consisting of length-2 bitstrings. In all of these settings, \textbf{attention glitches remain} (and mostly worsen). \emph{Right:} Distribution of dependency lengths (number \texttt{ignore} operations since last \texttt{read} or \texttt{write} operation) of \textbf{wrong answers} on the union of the o.o.d. test sets, for a canonical baseline model. This serves as a quick check that \textbf{the errors are diverse}.}
    \label{fig:app-misc-violins}
\end{figure}

Finally, to provide a bird's-eye view of all of our experiments, Figure~\ref{fig:big-scatter} provides a scatter plot of o.o.d. error rates for all models trained in this paper, color coded by intervention category.

\begin{figure}
    \centering
    \raisebox{2em}{\includegraphics[width=0.635\linewidth]{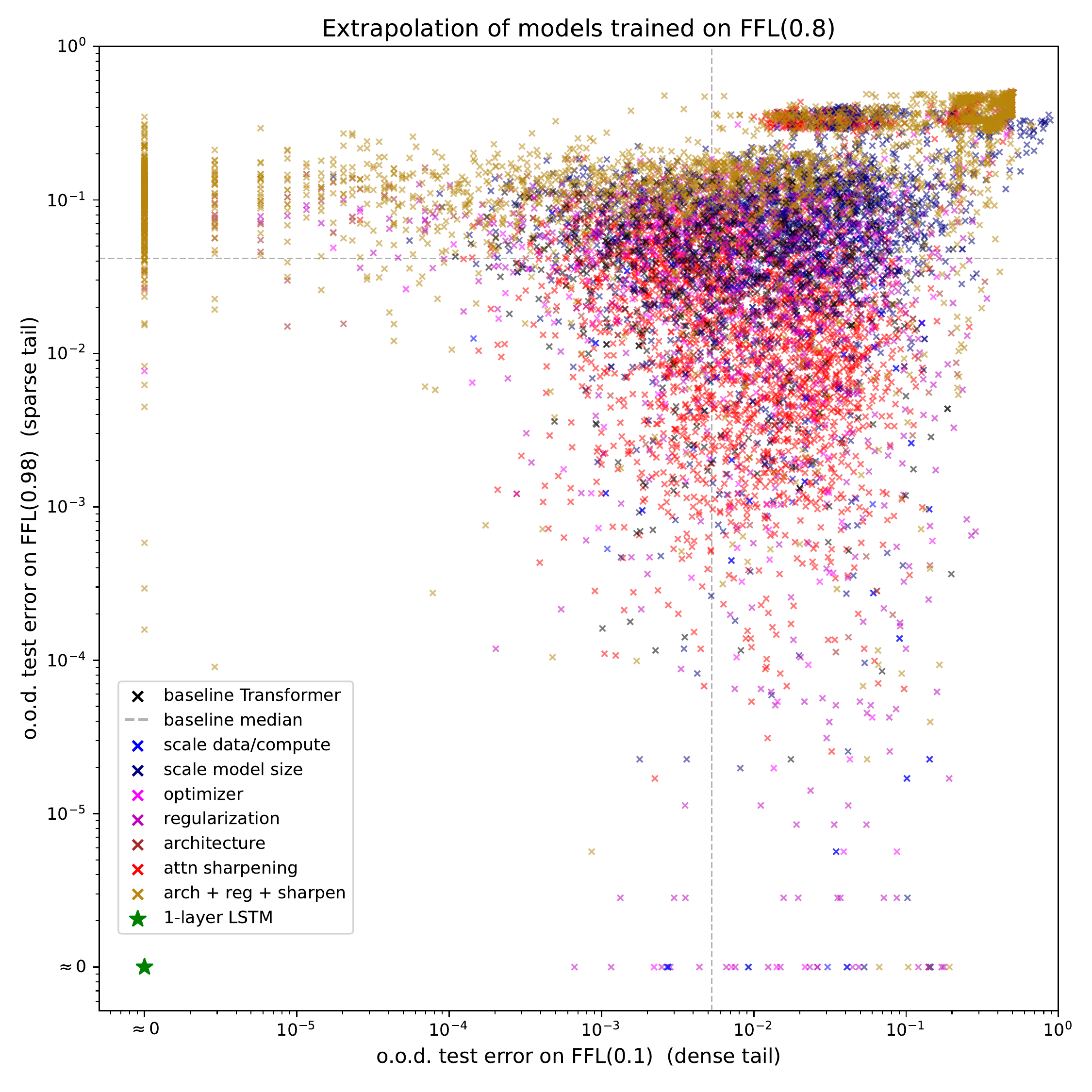}}
    \includegraphics[width=0.353\linewidth]{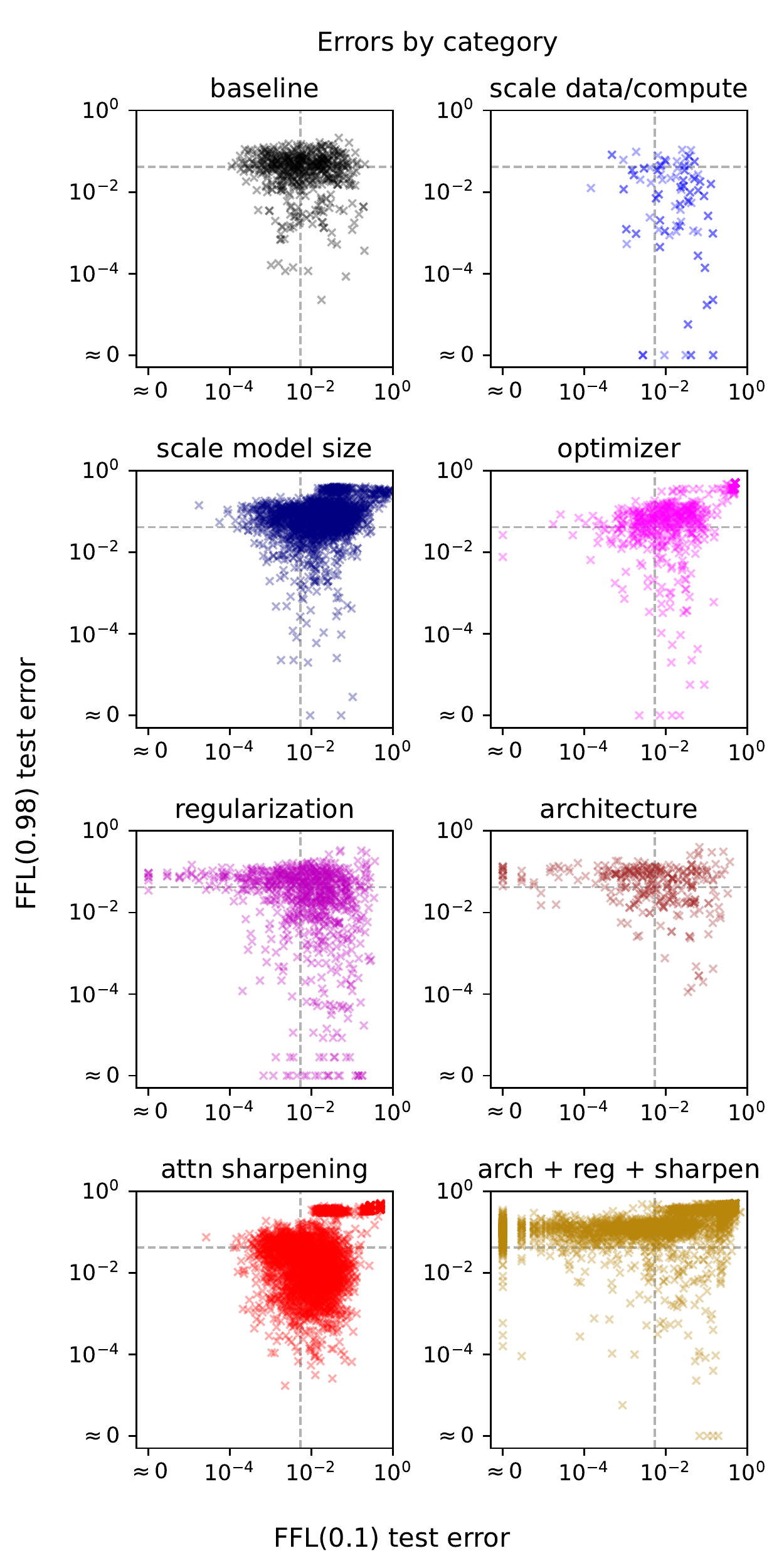}
    \caption{Error scatter plots for all models evaluated in this work (a larger version of Figure~\ref{fig:violins} \emph{(left)} from the main paper), color-coded by intervention category. \emph{Left:} All runs on the same plot. \emph{Right:} Segmented by intervention type. The best runs result from an exhaustive joint hyperparameter search over dropout, position embeddings, and annealed attention-sharpening regularizers. }
    \label{fig:big-scatter}
\end{figure}

\subsection{Preliminary mechanistic study and challenges (Section~\ref{subsec:mechanistic})}
\label{appendix:interpretability}

We continue the discussions in Section~\ref{subsec:mechanistic} and provide preliminary mechanistic interpretability results on \emph{simulating the flip-flop automaton} (\Cref{def:flipflop}).
Recall the main takeaway:
\begin{itemize}
    \item [(R8)] \textbf{Attention-sharpening regularizers successfully promote hard attention, but errors persist.}
\end{itemize}

\paragraph{Sparsity regularization helps sharpen the attention.}
\Cref{fig:1L1H-compare-dense,fig:1L1H-compare-sparse} compare the attention patterns of 1-layer 1-head models with or without attention-sharpening regularization.
While both types of models give correct results,
the attention-sharpened model puts all attention weights to the most recently \texttt{write} position, which is the solution given according to the definition of the task, whereas the attention patterns of the non-regularized model (\Cref{fig:1L1H-compare-dense}) are much less clean.

\textit{Are there solutions other than the ``ideal'' solution?}
There is a solution naturally associated with the definition of the flip-flop automaton (i.e. the sparse pattern shown in \Cref{fig:1L1H-compare-sparse}), but it is not necessarily the \textit{only} solution.
For example, an equally valid (dense) solution is for the model to attend to every \texttt{write} token of the correct type.

\textit{Are attention patterns reliable for interpretability?}
Prior work has pointed out the limitations of interpretations based solely on attention patterns~\citep{jain2019attention,bolukbasi2021interpretability,wen2023dyck}.
The intuition is that attention patterns can interact with other components of the network in various ways; for example, $\mW_V$ can project out certain dimensions even though they may have contributed to a large attention score.
Hence, for multi-layer multi-head non-sparse models, the magnitude of attention weights may not have an intuitive interpretation of ``importance''~\citep{meister2021sparse}.
This is observed in our experiments as well;
for instance,
Figure~\ref{fig:cross_argmax} shows examples where the attention on an incorrect token may be higher than that of the correct token.
\footnote{However, if we consider the ``importance / influence'' as measured by the norms of the attention-weighted value vectors, then the max norm still corresponds to the correct token, which helps explain why the final output is correct.}
However, in a 1-layer 1-head model,
1-sparse attention (\Cref{fig:1L1H-compare-sparse}) indeed offers interpretability, since if zero attention weight
\footnote{By ``zero'' we mean an attention score on the magnitude of 1e-8 empirically.}
necessarily means the absence of dependency,
which greatly reduces the set of possible solutions implemented.

\paragraph{Sporadic errors persist.}
Section \Cref{subsec:cheats} (R5) showed that none of the mitigations was successful at making Transformers reach 100\% accuracy.
One common failure mode is long-range dependency, where the input sequences contain very few \texttt{write}s.
The failure could be attributed to multiple factors;
we will explore one aspect related to attention patterns, demonstrated with a 1-layer 1-head Transformers with linear position encoding, on a length-834 sequence with 2 \texttt{write}s.
As shown in \Cref{fig:attn_drif},
the attentions for positions early in the sequence correctly attend to the most recent \texttt{write}.
However, attention starts to ``drift'' as we move to later positions, and the positions at the end of the sequence attend entirely
\footnote{The attention weights that are not on the most recent \texttt{write} sum up to around 1e-7.}
to the recent \texttt{read} tokens, which contains no information for solving the task.
This may be because the attention weights are undesirably affected by the position encodings, as discussed in Proposition \ref{prop:argmax}.

\begin{figure}
    \centering
    \includegraphics[width=0.4\textwidth]{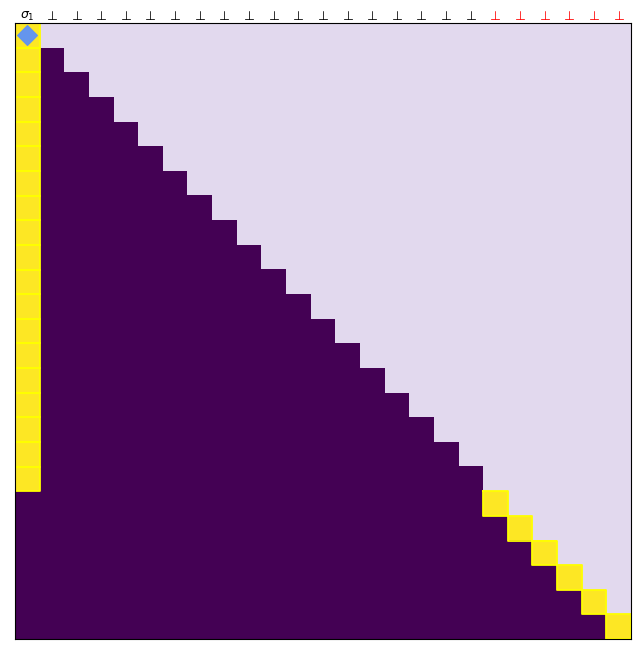}
    \hspace{2em}
    \includegraphics[width=0.4\textwidth]{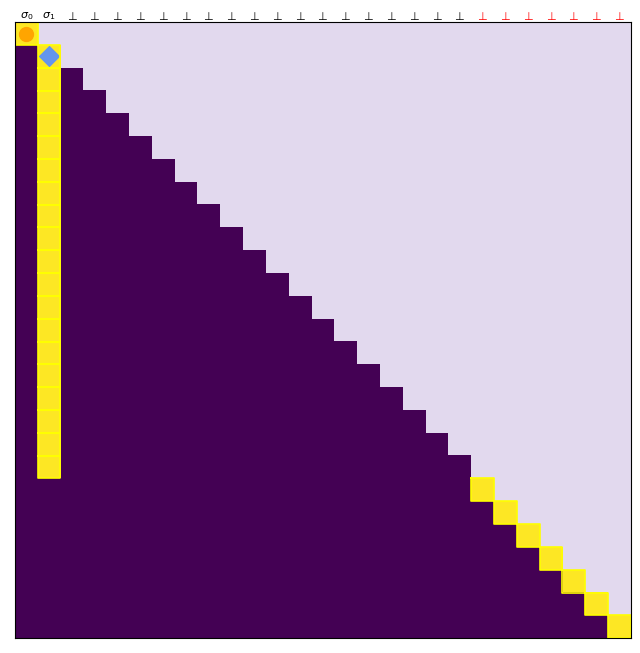}
    \caption{Attention drifts as the length increases.
    The model is trained on length-500 sequences with $p(\sigma \neq \bot)=0.5$.
    The testing sequences are
    (a) $[2, \underbrace{0\cdots,0}_{800}]$,
    and (b) $[1, \underbrace{0\cdots,0}_{32}, 2, \underbrace{0\cdots,0}_{800}]$.
    We sample every 32 positions for visualization.
    }
    \label{fig:attn_drif}
\end{figure}

\paragraph{Optimization hurdles.}
While sparse solutions may be preferred for various reasons, sparsity itself is not sufficient to guarantee good performance:
As shown in \Cref{fig:1L1H-compare-sparse-wrong}, sparsity regularization can lead to bad local minima, where the model tends to (incorrectly) rely on earlier positions.
This is observed across different types of sparsity regularization.
While we do not yet have a full explanation of the phenomenon, a possible explanation for this bias is that earlier positions show up more often during training, due to the use of the causal attention:
a valid flip-flop solution is for the model to attend to every \texttt{write} token of the correct type;
positions earlier in the sequence get observed in more subsequences because of the causal mask, and are hence more likely to be attended to.
We also observe that the phenomenon seems to be closely related to the training distribution.
For example, the model is much more likely to get stuck at a bad local minima when $p(\bot) = 0.5$ (denser sequences) compared to $p(\bot) = 0.9$ (sparse sequences).
We leave a more thorough study on the training dynamics~\citep{JelassiSL22,li2023transformers,ahn2023linear} for future work.

\paragraph{Effect of sparsity regularization on training dynamics}
An interesting future direction is to understand the learning dynamics of flip-flop tasks with attention-sharpening regularization,
as suggested by the (quantitively and qualitatively) different results and optimization challenges.
As some initial empirical evidence that the regularization indeed have a large impact on the dynamics,
we found that sharpened attention seems to have a regularization effect on the weight norms (\Cref{fig:L1H1_weight_norm}),
and also lead to different behaviors of the attention heads (\Cref{fig:mechanistic-plots-heads}).

\begin{figure}
    \centering
    \includegraphics[width=\textwidth]{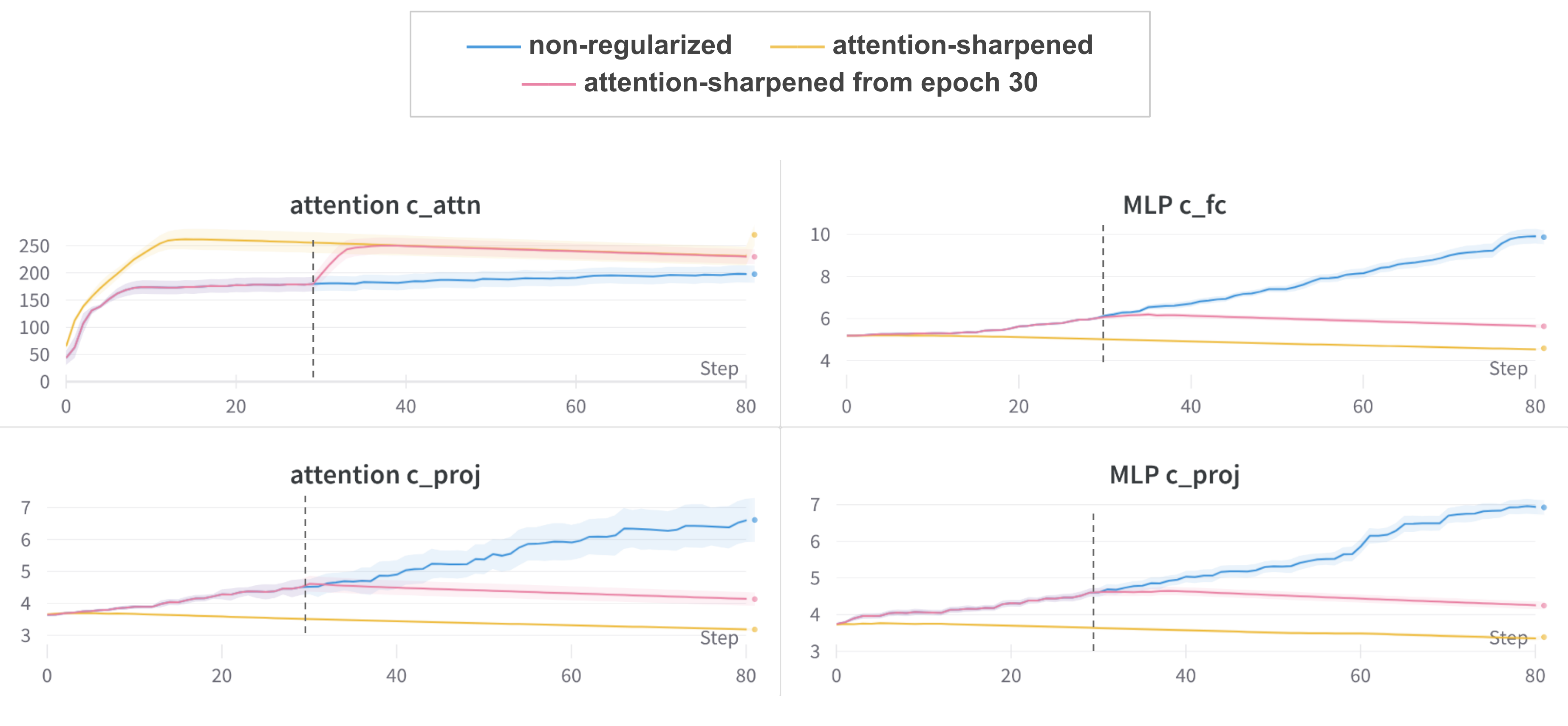}
    \caption{Frobenius norms of weight matrices in 1-layer 1-head models,
    trained without regularization (blue),
    with attention-sharpening regularization (yellow),
    or first without regularization and then adding regularization from epoch 30 (red; epoch 30 marked by the dashed lines).
    The solid curve and the shadow shows the median and the standard deviation calculated on 8 models.
    }
    \label{fig:L1H1_weight_norm}
\end{figure}

\paragraph{More examples of attention patterns}
\Cref{fig:L6H8_attention_compare} shows the full set of attention patterns of two 6-layer 8-head models trained with and without attention-sharpening regularization, corresponding to \Cref{fig:main_attentions} (a,b).
Attention-sharpening regularization can be applied in different ways; for example, \Cref{fig:L6H8_attention_sublayer0} shows results of a model for which only the first layer is regularized.
The attention patterns of subsequent layers remain sharp, even though there is no explicit regularization.

\begin{figure}
    \centering
    \begin{subfigure}[b]{0.24\textwidth}
        \includegraphics[width=\textwidth]{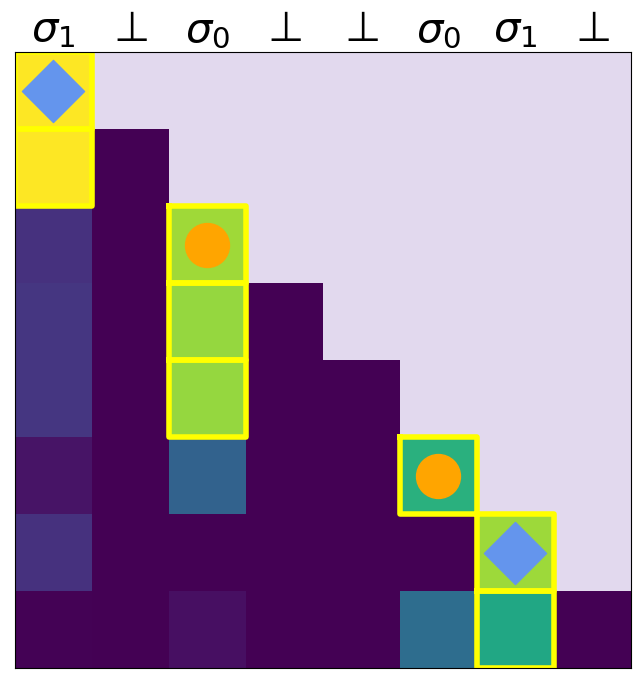}
        \caption{}
        \label{fig:1L1H-compare-dense}
    \end{subfigure}
    \begin{subfigure}[b]{0.24\textwidth}
        \includegraphics[width=\textwidth]{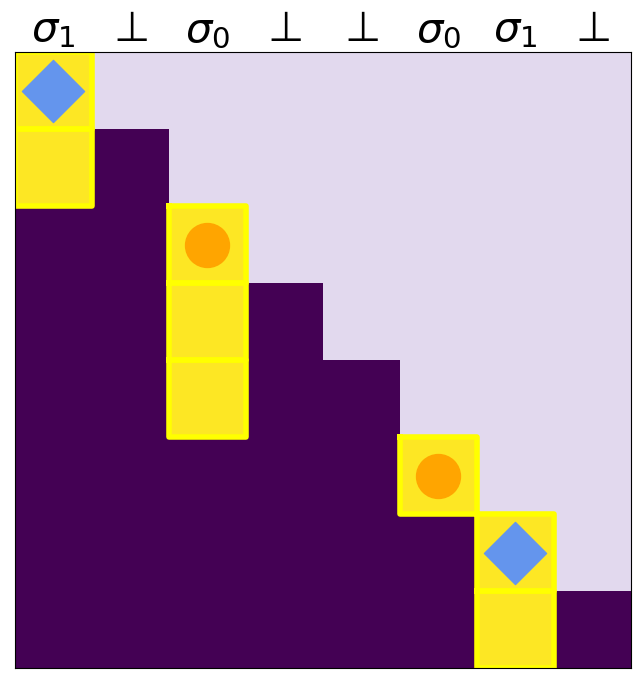}
        \caption{}
        \label{fig:1L1H-compare-sparse}
    \end{subfigure}    
    \begin{subfigure}[b]{0.24\textwidth}
        \includegraphics[width=\textwidth]{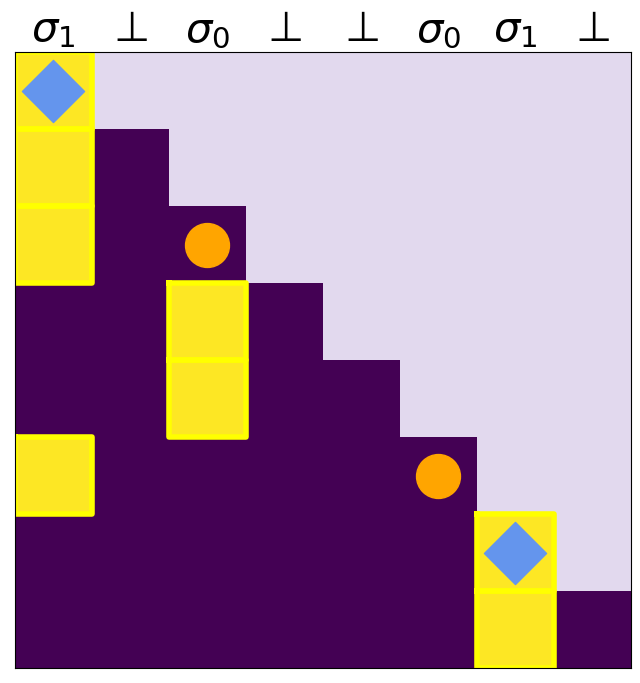}
        \caption{}
        \label{fig:1L1H-compare-sparse-bound5}
    \end{subfigure}
    \begin{subfigure}[b]{0.24\textwidth}
        \includegraphics[width=\textwidth]{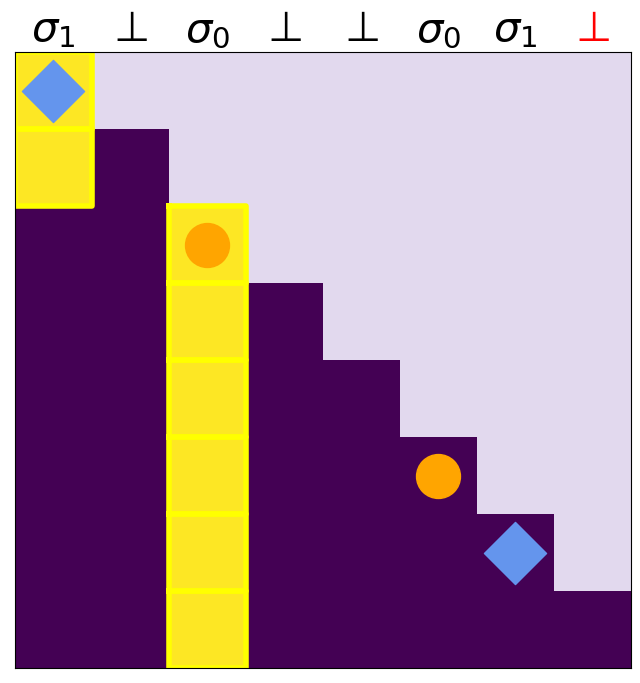}
        \caption{}
        \label{fig:1L1H-compare-sparse-wrong}
    \end{subfigure}    
    \caption{Attention-sharpening regularization on 1-layer 1-head models.
    Compared to a non-regularized model (\ref{fig:1L1H-compare-dense}), the sparsity-regularized model (\ref{fig:1L1H-compare-sparse}) shows clear attention at the last write position.
    However, sparse attention does not have to align with the ``ideal'' pattern (\ref{fig:1L1H-compare-sparse-bound5}),
    and can even be wrong (\ref{fig:1L1H-compare-sparse-wrong}).
    Positions with yellow borders are where the max attention in each row occur;
    errors are marked in red.}
    \label{fig:mechanistic-plots-compare-sparse}
\end{figure}

\begin{figure}
    \centering
    \begin{minipage}[t]{0.3\textwidth}
        \includegraphics[width=0.9\textwidth]{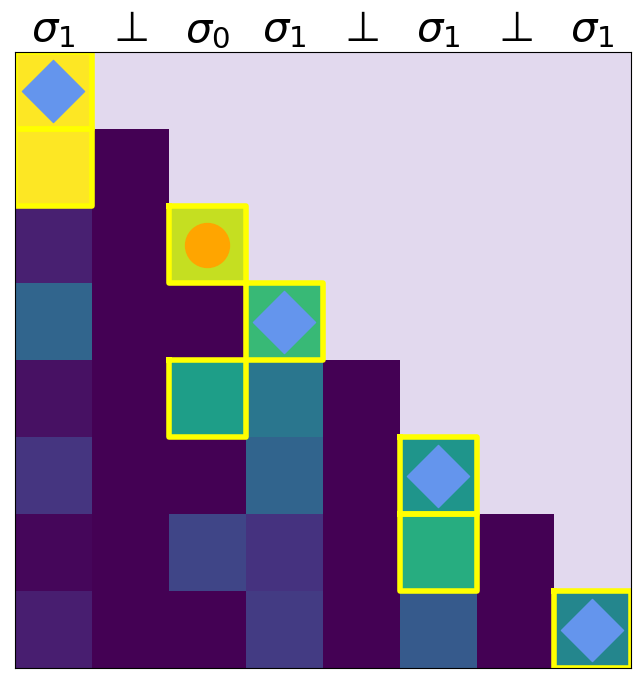}
    \end{minipage}
    \begin{minipage}[t]{0.3\textwidth}
        \includegraphics[width=0.9\textwidth]{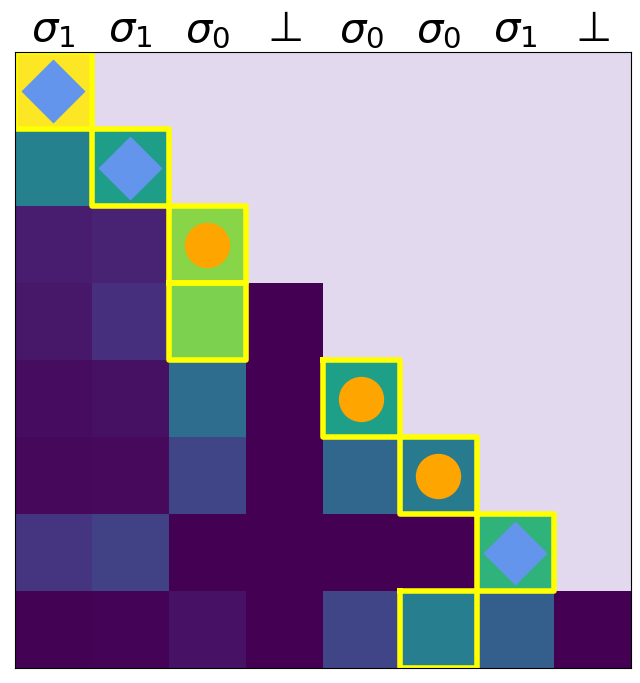}
    \end{minipage}
    \begin{minipage}[b]{0.3\textwidth}
    \caption{Non-sparse attention pattern can be misleading: a non-sparse model may put more attention on an incorrect token (i.e. a token that is not the \texttt{write} with the right type), while making the correct predictions.
    Yellow boxes mark the position of the max attention of each row.
    \label{fig:cross_argmax}
    }
    \end{minipage}
\end{figure}

\begin{figure}
    \centering
    \begin{subfigure}[b]{\textwidth}
        \includegraphics[width=\textwidth]{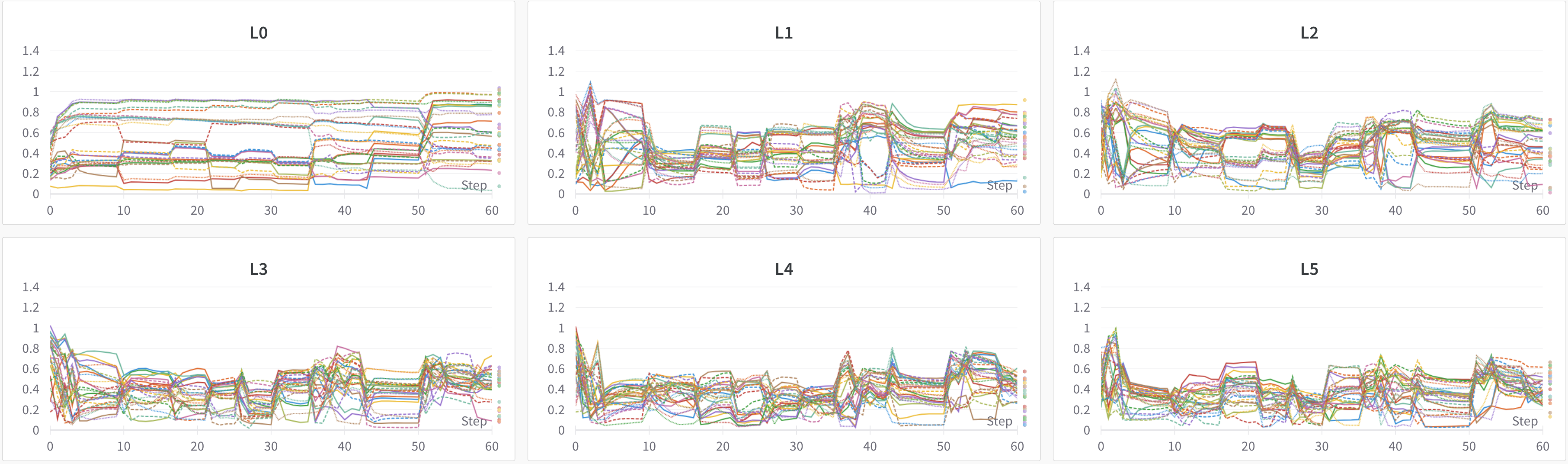}
        \caption{Model trained without sparsity regularization.}
    \end{subfigure}
    \begin{subfigure}[b]{\textwidth}
        \includegraphics[width=\textwidth]{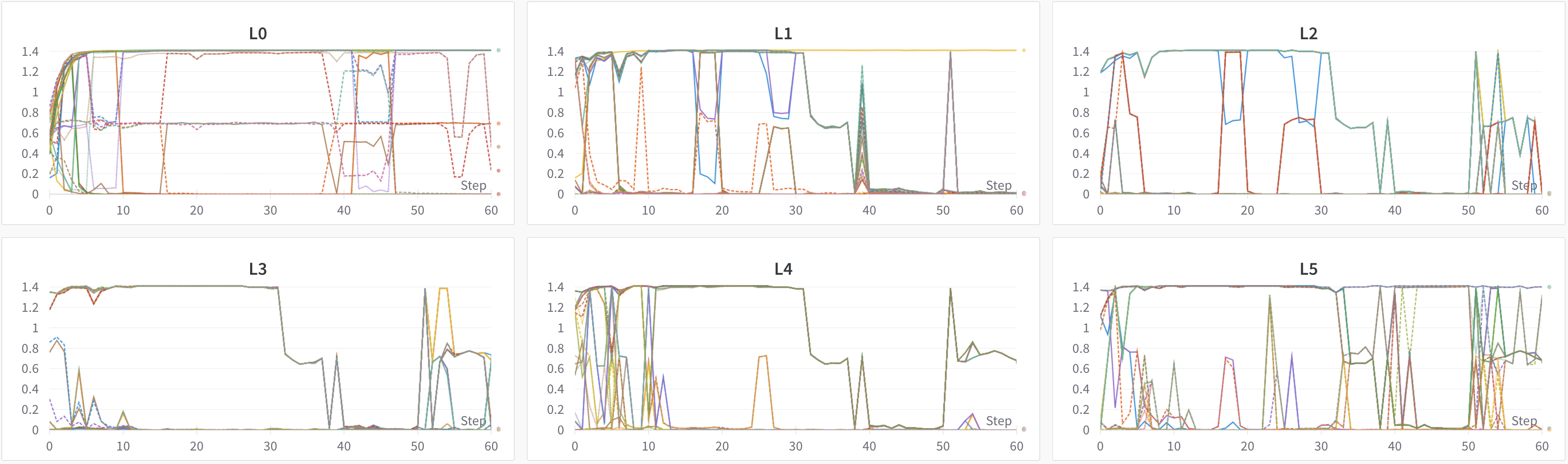}
        \caption{Model trained with entropy sparsity regularization with $\lambda=0.01$.}
    \end{subfigure}
    \caption{Examples of the $\ell_2$ difference in attention patterns from two 6-layer 8-head 512-dimension models.
    Differences are calculated between all pairs of heads in the same layer.
    }
    \label{fig:mechanistic-plots-heads}
\end{figure}

\begin{figure}
    \centering
    \begin{subfigure}[b]{0.48\textwidth}
        \includegraphics[width=\textwidth]{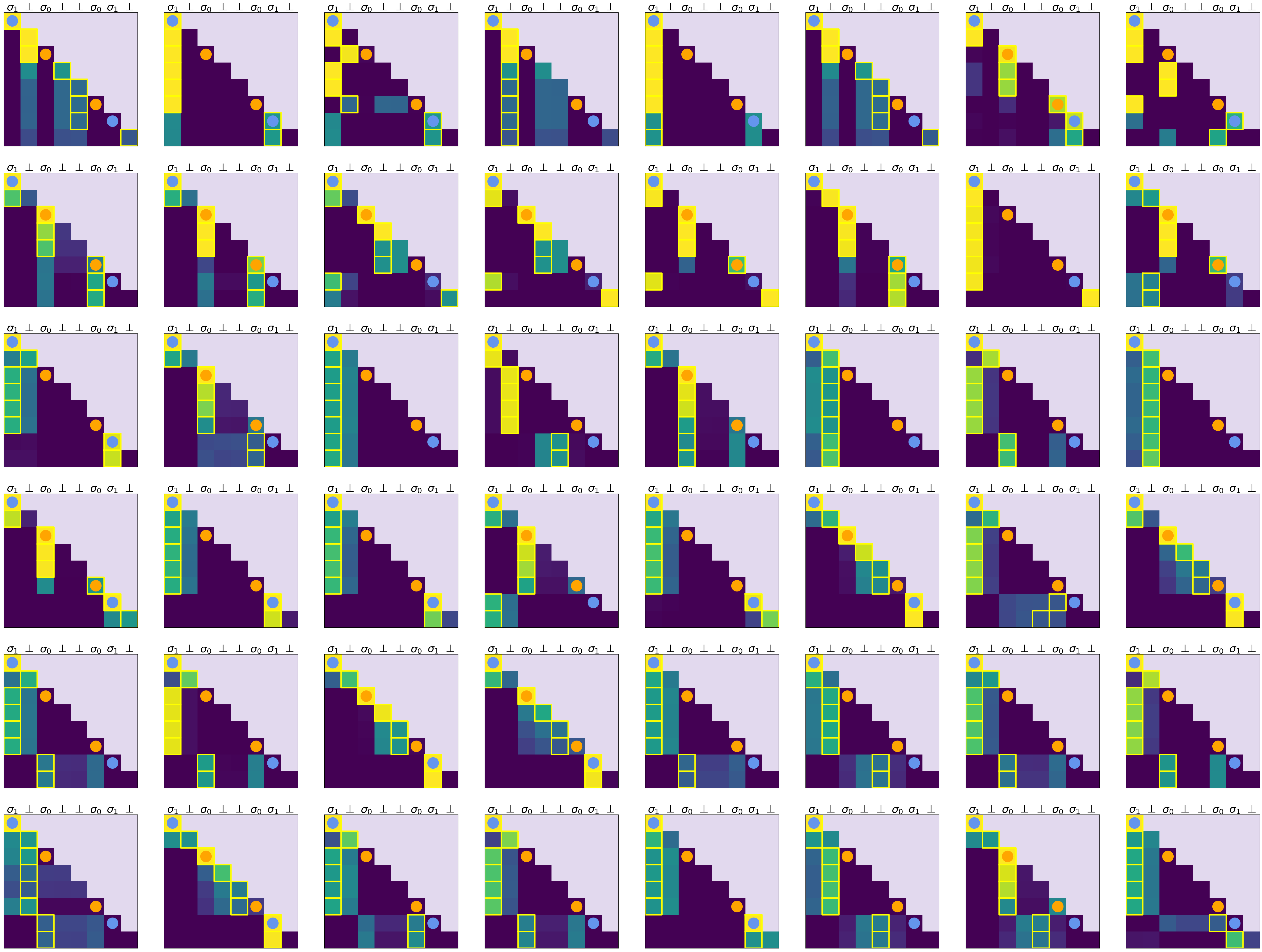 }
        \caption{Without regularization.}
    \end{subfigure}
    \hfill
    \begin{subfigure}[b]{0.48\textwidth}
        \includegraphics[width=\textwidth]{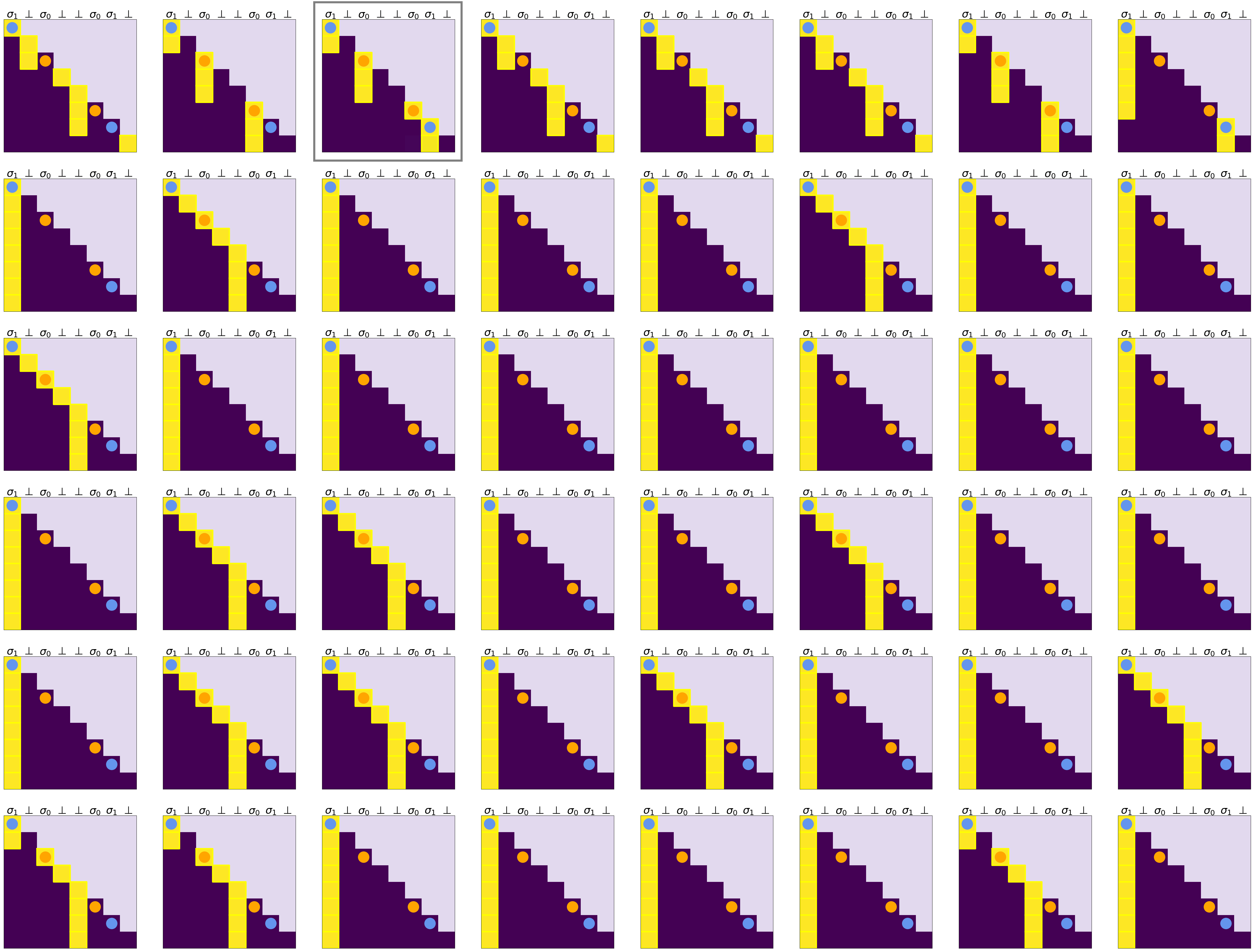}
        \caption{With attention-sharpening regularization.}
    \end{subfigure}
    \caption{
    Attention patterns for 6-layer 8-head 512-dimension models on the input sequence $[\sigma_1, \bot, \sigma_0, \bot,$ $ \bot, \sigma_0, \sigma_1, \bot]$:
    attention-sharpening regularization lead to cleaner attention patterns.
    1 attention head in the first layer of the regularized model (marked by the purple box) matches the ``ideal''  attention pattern~\Cref{subfig:sparse_ideal}.
    }
    \label{fig:L6H8_attention_compare}
\end{figure}

\begin{figure}
    \centering
    \begin{subfigure}[b]{0.54\textwidth}
        \includegraphics[width=\textwidth]{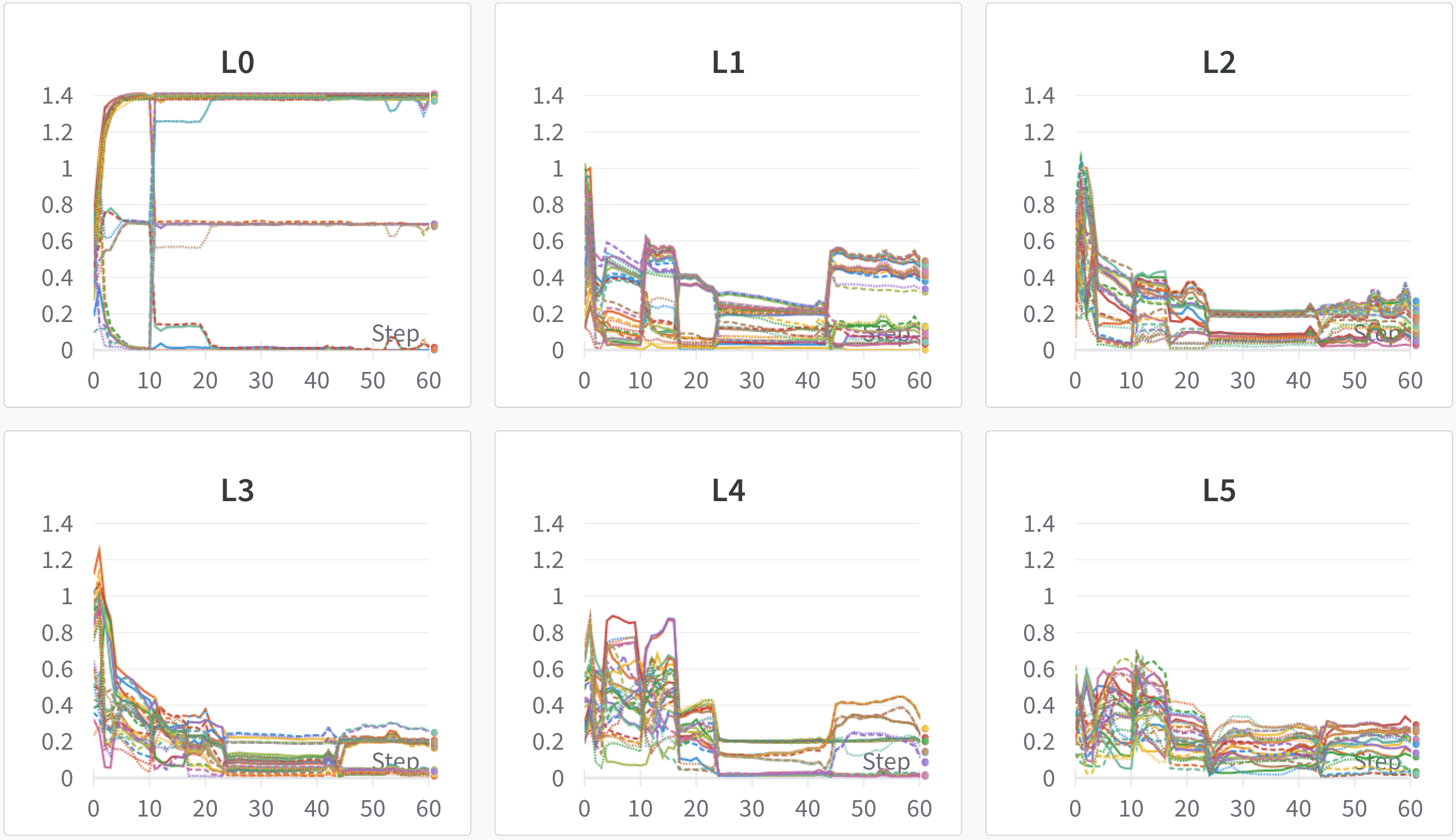 }
        \caption{$\ell_2$ differences between pairs of attention heads in the same layer, throughout training ($x$-axis).}
    \end{subfigure}
    \hfill
    \begin{subfigure}[b]{0.44\textwidth}
        \includegraphics[width=\textwidth]{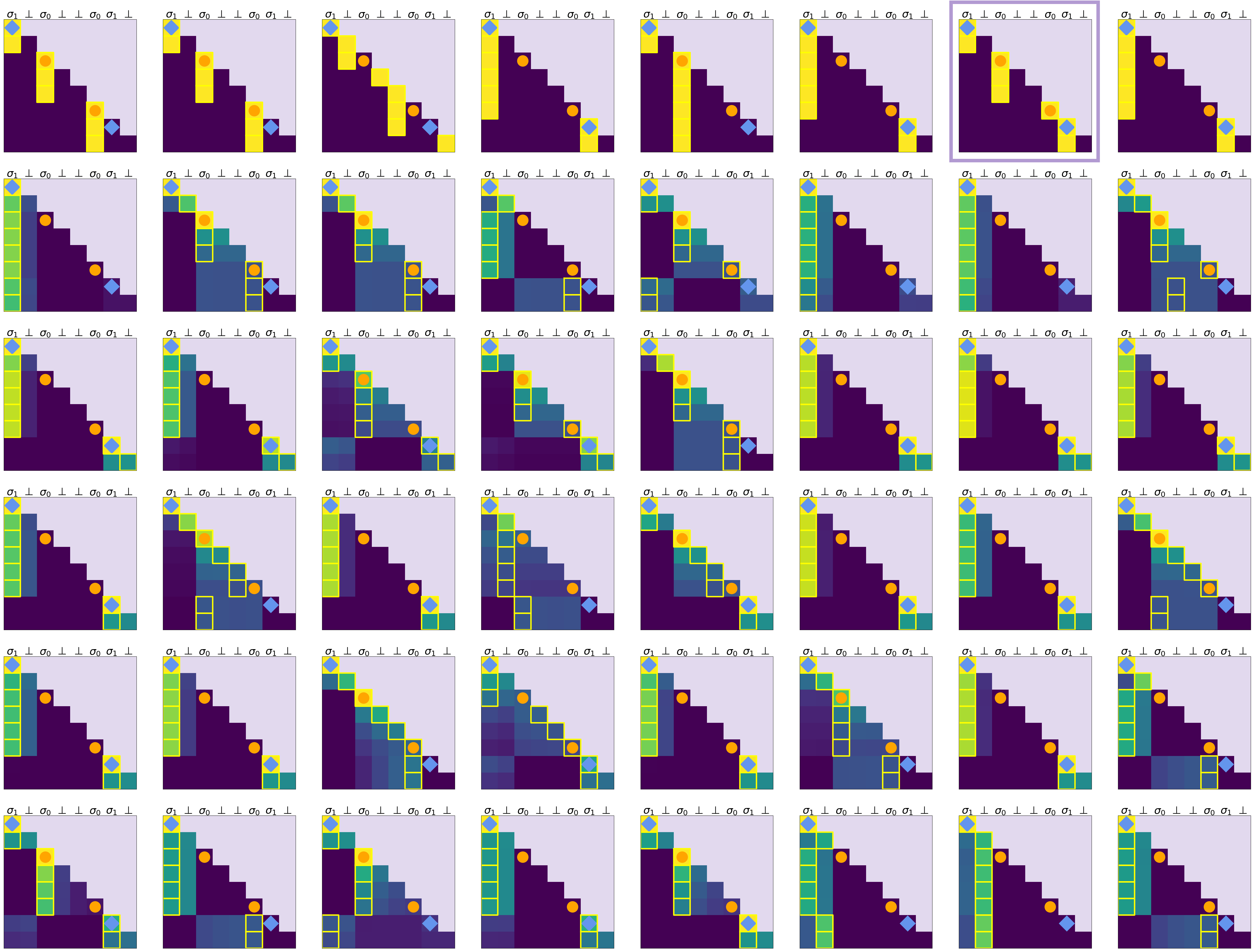}
        \caption{Attention patterns on the input sequence $[\sigma_1, \bot, \sigma_0, \bot, \bot, \sigma_0, \sigma_1, \bot]$.}
    \end{subfigure}
    \caption{
    Attention heads and attention patterns for a 6-layer 8-head 512-dimension model, trained with attention-sharpening regularization (entropy regularization with strength 0.01) on the first layer only.
    1 attention head in the first layer (marked by the purple box) matches the ``ideal''  attention pattern~\Cref{subfig:sparse_ideal}.
    }
    \label{fig:L6H8_attention_sublayer0}
\end{figure}

\subsection{Software, compute infrastructure, and resource costs}
\label{subsec:infra}

GPU-accelerated training and evaluation pipelines were implemented in PyTorch~\cite{paszke2017automatic}. For the FFLM experiments, we used the \texttt{x-transformers}\footnote{ \texttt{https://github.com/lucidrains/x-transformers} } implementations of the Transformer architecture and variants. For the fine-grained mechanistic interpretability experiments on the pure flip-flops, we used the ``vanilla, GPT-2''-like Transformer implementation published by HuggingFace \citep{wolf2019huggingface}.
Our benchmarks have been released at~\url{https://huggingface.co/datasets/synthseq/flipflop}.

Each training run was performed on one GPU in an internal cluster, with NVIDIA P40, P100, V100, and RTX A6000 GPUs, with at least 16GB of VRAM. Each (6-layer, 512-dimensional, 8-head) baseline model took $\sim$10 minutes to train (and evaluate online) for $10^4$ steps. A nontrivial fraction of the compute time ($\sim 20\%$) was spent on fine-grained evaluation through the course of training. The vast majority of training runs are close to these specifications; consequently, one set of replicates under identical conditions (i.e. each violin plot in each figure) is the product of $\sim$4 GPU-hours of training time.

We hope that this computational investment will aid in understanding how to build robust Transformer models and training pipelines at much larger scales.
\section{Proofs for Section~\ref{subsec:mechanisms}}
\label{app-sec:proofs}

\paragraph{Transformer recap.}
A Transformer~\citep{vaswani2017attention} consists of multiple self-attention layers.
Given $d$-dimensional embeddings of a length-$T$ sequence, denoted as $\mX \in \R^{T \times d}$, a self-attention layer $f$ computes
\begin{align}
\label{eq:transformer}
    f(\mX) = \phi(\mW_V \softmax(\mX \mW_Q \mW_K^\top \mX^\top)\mX \mW_V \mW_C).
\end{align}
where $\mW_Q, \mW_K \in \R^{d \times k}$ for $k \leq d$ are the query and key matrix;
$\mW_V, \mW_C^\top \in \R^{d \times k}$ project the representations from and back to $\R^d$.
$\softmax$ calculates row-wise softmax.
$\phi: \R^{d} \rightarrow \R^d$ is a 2-layer fully-connected network.
Residual links and layer norm can be optionally included at different places of a self-attention layer.

\subsection{Realizability of FFL by small Transformers}

\begin{proposition}
\label{prop:realizability}
A 2-layer 1-head Transformer with residual connections can represent "deterministic" FFL.
\end{proposition}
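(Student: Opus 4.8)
The plan is to exhibit the 2-layer network explicitly, giving the first layer the job of \emph{binding} each instruction token to the data bit that immediately follows it, and the second layer the job of \emph{retrieving} the bit written by the most recent \texttt{w}. Write a valid flip-flop string $x_{1:T}$ as alternating pairs, with instructions from $\{\texttt{w},\texttt{r},\texttt{i}\}$ at odd positions and bits from $\{0,1\}$ at even positions; the deterministic task is, at each $t$ with $x_t=\texttt{r}$, to output the bit that sits immediately after the most recent \texttt{w}. I would pick the token embedding $E$ so that the identity of $x_t$ is linearly readable off a fixed subspace of the residual stream, and a position encoding $p_i$ that supports a ``previous-token head'' (e.g.\ sinusoidal encodings, for which a fixed linear map on the query realizes the shift $p_t\mapsto p_{t-1}$; alternatively take $p_i$ to include the index $i$ and one low-frequency coordinate that is monotone on $\{1,\dots,T\}$). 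Residual connections then keep $E(x_t)$ and $p_t$ available at the input of layer 2. The reason one layer does not suffice (unlike the pure-automaton lookup noted earlier) is exactly this instruction/data indirection.

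In layer 1 I would set the single head to place essentially all of its weight from position $t$ onto position $t-1$ (the standard previous-token head); after this, the residual stream at each position holds $E(x_t)$ together with a linear image of $E(x_{t-1})$ in complementary coordinates. The layer-1 MLP $\phi$, a two-layer ReLU net evaluating a Boolean function of the bounded discrete pair $(x_{t-1},x_t)$, then writes two new features at position $t$: an indicator $w_t=\ind[\,t\text{ even},\ x_{t-1}=\texttt{w}\,]$ marking ``write-data'' positions, and a signed bit $v_t\in\{+1,-1,0\}$ equal to $+1$ (resp.\ $-1$) when $w_t=1$ and $x_t=1$ (resp.\ $x_t=0$), and $0$ otherwise.

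In layer 2, at a position $t$ with $x_t=\texttt{r}$ I would form a query that scores key $i$ by (approximately) $C\,w_i+m(i)$, with $C$ a large constant and $m$ a strictly increasing function of the position index recovered from $p_i$. The causal mask forces $i\le t$; every valid string begins with \texttt{w} at position $1$, so a write-data position always lies in the window, and $x_t=\texttt{r}$ forces $w_t=0$ so the head cannot attend to $t$ itself. For $C$ large relative to the range of $m$, the softmax concentrates on $i^\star$, the largest index with $w_{i^\star}=1$ — precisely the data token after the most recent \texttt{w} — so the value map copies $v_{i^\star}$ to position $t$, and the classifier $W$ (or the layer-2 MLP) sends $+1\mapsto$ ``$1$'' and $-1\mapsto$ ``$0$''. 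Outputs at positions with $x_t\ne\texttt{r}$ are irrelevant.

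The step I expect to be the main (really the only) delicate point is arguing that the two softmax attentions, which are never exactly hard argmaxes, nonetheless produce the correct \emph{argmax at the classifier} for \emph{every} one of the finitely many inputs of length $\le T$. This should follow because the alphabet and $T$ are finite, so the relevant score gaps (the distance-$1$ preference in layer 1, and the $C\,w_i$ gap in layer 2) are bounded below by a positive constant; hence one fixed, finite key/query scale makes both softmaxes sharp enough simultaneously, and with genuinely hard attention the construction is exact. A secondary bookkeeping item is to check that layer norm, if present, does not collapse the feature subspaces used above — routine, and avoidable either by omitting layer norm (permitted by the statement) or by placing the relevant features on a sphere.
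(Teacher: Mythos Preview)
Your proposal is correct and follows essentially the same two-layer decomposition as the paper: layer~1 binds each data token to its preceding instruction to mark ``write-data'' positions, and layer~2 attends to the most recent such mark to retrieve the bit, with the soft-to-hard attention gap handled by scaling the logits (the paper uses $c=O(T\log T)$ and an MLP rounding step). The only notable difference is the layer-1 mechanism: you use a pure previous-token head and let the MLP compute the write-data indicator from the pair $(x_{t-1},x_t)$, whereas the paper's layer-1 head mixes content and position in the key so that the attention itself selects $t-1$ exactly when $x_{t-1}=\texttt{w}$ (and $t$ otherwise), yielding the marker $\mathbbm{1}[x_{t-1}=\texttt{w}\vee x_t=\texttt{w}]$ directly in the attention output; both are valid and lead to the same layer-2 retrieval.
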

\begin{proof}
Let us consider predicting in the deterministic mode (Section \ref{subsec:flipflop-def}).
Then we need to predict $x_{t+1}$ given $x_{1:t}$ with $x_{t} = \texttt{r}$. In order to do this, we need to find the largest $\tau < t$ such that $x_\tau = \texttt{w}$ and output $x_{\tau + 1}$.
There are multiple ways to implement this, we will consider the following:
(1) layer 1 converts FFL to the flip-flop automaton (Definition \ref{def:flipflop}),
(2) layer 2 implements the flip-flop construction.
For layer 2, we can use the construction described in \cite{liu2023transformers}. Here we present the full construction for completeness.

We will consider a two-layer Transformer with one head in each layer followed by a 2-layer MLP and a residual connection. In particular, for $x \in \{\texttt{w}, \texttt{r}, \texttt{i}, 0, 1\}^T$:
\begin{align*}
    f(x) &= \phi_2(\mW^{(2)}_V \softmax(f_1(x) \mW^{(2)}_Q {\mW^{(2)}_K}^\top f_1(x)^\top)f_1(x) \mW^{(2)}_V \mW^{(2)}_C)\\
    \text{where } f_1(x) &= E(x) + \phi_1(\mW^{(1)}_V \softmax(E(x) \mW^{(1)}_Q {\mW^{(1)}_K}^\top E(x)^\top)E(x) \mW^{(1)}_V \mW^{(1)}_C)
\end{align*}
where $E(x) \in \R^{T \times d}$ is the encoding for the input sequence $x$ given some encoding function $E$.

Our construction is as follows:
\begin{itemize}[leftmargin=2em]
    \item Select $d=7, k=2, H=1$ (recall from Equation \ref{eq:transformer}that $d,k$ are the dimensions of $\mW_Q, \mW_K$).
    Among the $d=7$ embedding dimension,
    two dimensions are for the operations (\texttt{w} versus \texttt{r,i}),
    two for the two \texttt{write} values,
    one for the positional embedding, one for padding,
    and the final dimension is for storing whether the previous position is the most recent \texttt{write}, as calculated by the first layer.

    \item Select input symbol encodings such that for the token at position $t$, denoted as $x_t$,
    \[
    E(x_t) :=  \mathbbm{1}[x_t = \texttt{w}] e_1+ \mathbbm{1}[x_t = \texttt{r} \vee x_t = \texttt{i}]e_2 + \mathbbm{1}[x_t = 0] e_3 + \mathbbm{1}[x_t = 1] e_4 + e_5 + P_t
    \in \R^7,
    \]    
    where $P_t$ is the positional encoding.
    We use the linear positional encoding $P_{t}:= (t/\Tmax) \cdot e_6$, for some (large) constant $\Tmax$.
    For a fixed sequence length $T$, we can set $\Tmax = T$.
    \item $\mW^{(1)}_Q:= \begin{bmatrix}
         e_5 & e_5
    \end{bmatrix} \in \R^{7 \times 2}$, $\mW^{(1)}_K:= \begin{bmatrix}
        3c\frac{e_1}{2T} & ce_6
    \end{bmatrix} \in \R^{7 \times 2}$ for $c = O(T\log(T))$, $\mW^{(1)}_V:= \begin{bmatrix}
        e_1 & 0
    \end{bmatrix} \in \R^{7 \times 2}$, and ${\mW^{(1)}_C}^\top := \begin{bmatrix}
        e_7 & 0
    \end{bmatrix} \in \R^{7 \times 2}$.

    \item $\mW^{(2)}_Q:= \begin{bmatrix}
         e_5 & e_5
    \end{bmatrix} \in \R^{7 \times 2}$,
    $\mW^{(2)}_K:= \begin{bmatrix}
        c e_7 & ce_6
    \end{bmatrix} \in \R^{7 \times 2}$ for $c = O(T\log(T))$,
    $\mW^{(2)}_V:= \begin{bmatrix}
        e_4 & 0
    \end{bmatrix} \in \R^{7 \times 2}$, and ${\mW^{(2}_C}^\top := \begin{bmatrix}
        e_1 & 0
    \end{bmatrix} \in \R^{7 \times 2}$.
\end{itemize}
In layer 1, the unnormalized attention score for query position $i$ to key position $j$ is
\begin{align*}
    \left\langle {\mW_Q^{(1)}}^\top x_i, {\mW_K^{(1)}}^\top x_{j} \right\rangle
    = \left\langle \frac{c}{T} \cdot \left[\frac{3}{2} \cdot \mathbbm{1}[x_j=\texttt{w}], j\right], [1,1] \right\rangle
    = \frac{c}{T} \cdot \left(\frac{3}{2}\mathbbm{1}[x_j = \texttt{w}] + j\right).
\end{align*}

Note that the max attention value for position $i$ is achieved at $i$
if $x_{i-1} \ne \texttt{w}$, else the max is achieved at position $i-1$.

In the setting of hard attention, the output for the $i_{th}$ token after the attention module is
$\mathbbm{1}[x_{i-1} = \texttt{w} \vee x_i = \texttt{w}]e_7$.
Now similar to the constructions in \cite{liu2023transformers} (Lemma 6), with a appropriate choice of $c = O(T\log T)$, we can approximate hard attention by soft attention, and subsequently use the MLP to round the coordinate corresponding to $e_7$.
The MLP otherwise serves as the identity function.
Together with the residual link, the first layer output (i.e. the second layer input) at position $i$ takes the form
\[
f_1(x_i) = E(x_i) + \mathbbm{1}[x_{i-1} = \texttt{w} \vee x_i = \texttt{w}]e_7.
\]
In layer 2, the unnormalized attention score computed for position $i$ attending to $j$ is
\begin{align*}
    \left\langle {\mW_Q^{(2)}}^\top f_1(x_i), {\mW_K^{(2)}}^\top f_1(x_j) \right\rangle
    =& \frac{c}{T} \left\langle [1, 1],
        \left[\mathbbm{1}[x_{j-1} = \texttt{w} \vee x_j = \texttt{w}], \frac{j}{T}\right] \right\rangle 
    \\
    =& c \cdot \left(\mathbbm{1}[x_{j-1} = \texttt{w} \vee x_j = \texttt{w}] + \frac{j}{T}\right).
\end{align*}

Note that the max attention value is achieved at the position right after the closest $\texttt{w}$ to $x_i$.
Let us denote this position by $\tau \leq i$, then with hard attention, the output at the $i_{th}$ position is $x_\tau e_1$, as desired.
Now similar to before, we can approximate this with soft attention and use the MLP to do the appropriate rounding to get our final construction.
\end{proof}

\textbf{Remark}: The construction in Proposition~\ref{prop:realizability} is \textit{a} construction, but it is not the \textit{only} construction.
For example, for the second layer implementation for the flip-flop automaton, there could be an equally valid \textit{dense} solution, where the model uniformly attends to all \texttt{write} tokens of the correct type.

\subsection{Failure of soft attention: attention dilution with bounded Lipschitzness}
\label{subsec:app-dilution}

Consider any attention layer with weight matrices $\mW_Q, \mW_K \in \R^{k \times d}$.
If $\|\mW_K^\top \mW_Q\|_2$ is bounded, then the attention cannot be sparse as the sequence length increases:

\begin{proposition}[Leaky soft attention]
\label{prop:soft-dilution}
    Assume the latent variables have bounded norm, i.e. $\|\vv\|_2 \leq 1$ for any latent vector $\vv \in \R^d$,
    and let $\sigma_{\max}$ denote the max singular value of $\mW_K^\top \mW_Q$.
    Then for $T = \Omega(\exp(2\sigma_{\max}))$, any sequences of latent vectors $\{\vv_\tau\}_{\tau\in [T]}$,
    $\|\softmax(\{\vv_\tau\}_{\tau\in [T]})\|_\infty = 1 - \Omega(1)$. 
\end{proposition}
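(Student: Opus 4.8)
The plan is a two-line argument: bound every attention logit uniformly, then observe that a softmax over $T$ nearly-equal logits is necessarily flat.

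\emph{Step 1: a uniform bound on the logits.} Fix a query position with latent vector $\vu$ (so $\|\vu\|_2 \le 1$) and consider its attention score against the key at position $\tau$, which is the bilinear form $\langle \mW_Q \vu, \mW_K \vv_\tau \rangle = \vu^\top \mW_Q^\top \mW_K \vv_\tau$. By Cauchy--Schwarz and the definition of the operator norm,
\[
  \bigl| \vu^\top \mW_Q^\top \mW_K \vv_\tau \bigr| \;\le\; \|\vu\|_2 \, \bigl\| \mW_Q^\top \mW_K \bigr\|_2 \, \|\vv_\tau\|_2 \;\le\; \sigma_{\max},
\]
using $\|\vu\|_2, \|\vv_\tau\|_2 \le 1$ and the fact that $\mW_Q^\top \mW_K$ and $\mW_K^\top \mW_Q$ are mutual transposes, hence have the same largest singular value $\sigma_{\max}$. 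So all $T$ logits lie in $[-\sigma_{\max}, \sigma_{\max}]$, regardless of the particular latent vectors.

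\emph{Step 2: a bounded-range softmax cannot concentrate.} Writing $\ell_1, \dots, \ell_T$ for these logits, for every index $\tau$
\[
  \softmax(\ell)_\tau \;=\; \frac{e^{\ell_\tau}}{\sum_{j=1}^{T} e^{\ell_j}} \;\le\; \frac{e^{\sigma_{\max}}}{T\, e^{-\sigma_{\max}}} \;=\; \frac{e^{2\sigma_{\max}}}{T},
\]
since the numerator is at most $e^{\sigma_{\max}}$ while each of the $T$ terms in the denominator is at least $e^{-\sigma_{\max}}$. Hence $\| \softmax(\{\vv_\tau\}_{\tau\in[T]}) \|_\infty \le e^{2\sigma_{\max}}/T$. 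Instantiating the hypothesis $T = \Omega(\exp(2\sigma_{\max}))$ with the explicit constant $T \ge 2\, e^{2\sigma_{\max}}$ gives $\| \softmax \|_\infty \le \tfrac12$, i.e.\ the maximum attention weight is $1 - \Omega(1)$ (in fact it decays like $O(1/T)$), so the head provably cannot realize a hard selection once the context is long enough.

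\emph{Where the (minimal) difficulty sits.} The computation is routine; essentially all the content of the proposition is packed into the hypothesis that the latent vectors have norm at most $1$. For a statement about a full Transformer block I would additionally track how this norm bound is propagated --- e.g.\ it follows immediately from layer normalization, or from assuming bounded token embeddings together with a bounded-growth residual stream --- after which the two displays apply verbatim with $\sigma_{\max}$ replaced by a bound on the effective query--key norm at that layer. I would also append a one-line tightness remark: taking all $\vv_\tau$ equal makes the softmax exactly uniform, so the $e^{2\sigma_{\max}}/T$ rate is the right order and the dilution is genuine rather than an artifact of loose bounding.
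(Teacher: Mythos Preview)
Your proof is correct and follows essentially the same two-step argument as the paper: bound all logits in $[-\sigma_{\max},\sigma_{\max}]$ via the operator norm of $\mW_K^\top\mW_Q$, then upper-bound the softmax. The only cosmetic difference is that the paper keeps the target term in the denominator (getting $\tfrac{e^{\sigma_{\max}}}{e^{\sigma_{\max}}+(T-1)e^{-\sigma_{\max}}}$) while you lower-bound all $T$ terms uniformly, yielding the slightly stronger $e^{2\sigma_{\max}}/T$ rate; both immediately give $1-\Omega(1)$ under the stated hypothesis.
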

\begin{proof}
    The proof follows directly from a simple rewriting.
    
    For any $\vu, \vv$ with $\|\vu\|_2, \|\vv\|_2 \leq 1$,
    the pre-softmax attention score is bounded by $\vu^\top \mW_K^\top \mW_Q \vv \in [-\sigma_{\max}, \sigma_{\max}]$.
    \begin{align*}
        \frac{\exp(\vv_t^\top \mW_K^\top \mW_Q \vv_T)}{\sum_{\tau \in [T]} \exp(\vv_\tau^\top \mW_K^\top \mW_Q \vv_T)}
        \leq \frac{\exp(\sigma_{\max})}{\exp(\sigma_{\max}) + (T-1) \exp(-\sigma_{\max})}
        = 1 - \frac{T-1}{T-1 + \exp(2\sigma_{\max})},
    \end{align*}
    where the last term is $\Omega(1)$ when $T = \Omega(\exp(2\sigma))$.
\end{proof}

\paragraph{Attention dilution and failure on dense sequences}
Strictly speaking, attention dilution caused by an increased sequence length does not necessarily affect the output of the layer.
For example, if \texttt{ignore} gets mapped to a subspace orthogonal to that of \texttt{write}, then $\mW_V$ can project out the \texttt{ignore} subspace, making the weighted averaged depending only on the number of \texttt{write}s.
Hence with the presence of layer norm, attention dilution won't be a problem for the final prediction if the number of \texttt{write} is upper bounded regardless of the sequence length.

For the experiments in \Cref{subsec:cheats}, denser sequences (i.e. larger $p(\texttt{write})$) do increase the number of \texttt{write} compared to the training distribution, hence attention dilution can be a potential cause for the decrease in performance.

\subsection{Failure of hard attention: bad margin for positional embeddings}
\label{subsec:app-tiebreaking}

In this section, we look at a failure mode that a 1-layer 1-head Transformer has on the flip-flop automaton simulation task.
Why do we care about this setup?
Simulating the automaton is in fact a sub-task of FFLM.
For example, the second layer of the construction in Proposition~\ref{prop:realizability} reduces to the simulation task.

Consider a 1-layer 1-head Transformer with parameters $\mW_Q, \mW_K \in \R^{k \times d}$.
Write the attention query matrix $\mW_Q$ as $\mW_Q = [\mW_{Qe}, \mW_{Qp}]$, where $\mW_{Qe} \in \R^{k \times (d-1)}$ corresponds to the embedding dimensions, and $\mW_{Qp} \R^{k}$ corresponds to the dimension for the linear positional encoding.
Write $\mW_K = [\mW_{Ke}, \mW_{Kp}]$ similarly.

Then, we claim that the following must be true, regardless of the choice of the token embedding:
\begin{proposition} %
\label{prop:argmax}
    Consider linear positional encoding, i.e. $p_i = i/\Tmax$ for some (large) constant $\Tmax$.
    Then, perfect length generalization to arbitrary length requires $\mW_{Qp}^\top \mW_{Kp} = 0$.
\end{proposition}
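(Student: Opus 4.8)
The plan is to derive a contradiction from $\mW_{Qp}^\top \mW_{Kp} \neq 0$ by exhibiting, for each nonzero value of this scalar, a sufficiently long input sequence on which the self-attention head attends to the wrong position. Write the inner product of the query at position $i$ with the key at position $j$ as a sum of four terms coming from the block decomposition $\mW_Q=[\mW_{Qe},\mW_{Qp}]$, $\mW_K=[\mW_{Ke},\mW_{Kp}]$: a term depending only on the token embeddings $E(x_i),E(x_j)$, two cross terms linear in the positions $p_i=i/\Tmax$ or $p_j=j/\Tmax$, and the pure-position term $(\mW_{Qp}^\top \mW_{Kp})\, p_i p_j$. For a fixed query position $i$, the attention is an $\argmax$ over $j$ of this expression, and correctness on the simulation task requires this $\argmax$ to land on the most recent \texttt{write} position $\tau(i)$ for every valid input.

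First I would fix the query position $i$ and observe that, holding $i$ fixed, the score as a function of $j$ is \emph{affine in $p_j = j/\Tmax$} with slope $s_i := \mW_{Qe}^\top(\text{stuff}) \cdot \text{?}$ — more precisely slope equal to $\langle \mW_{Ke}^\top(\text{something depending on }x_j), \cdot\rangle$ plus $(\mW_{Qp}^\top\mW_{Kp})\,p_i$ times $p_j$; the key point is that the $j$-dependence splits into (a) a token-embedding contribution that takes only finitely many values (since $x_j$ ranges over the finite alphabet $\{\sigma_0,\sigma_1,\bot\}$) and (b) a contribution linear in $j$ whose coefficient contains the term $(\mW_{Qp}^\top \mW_{Kp})\, p_i/\Tmax$. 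Now I would specialize to an input of the form: a single \texttt{write} $\sigma_b$ at some early position $\tau$, followed by a long run of $\bot$'s, and let the query be at a late position $i \gg \tau$. Among the key positions, all the $\bot$ positions share the same token-embedding contribution, so restricted to them the score is exactly affine in $j$ with coefficient proportional to the \emph{position-slope} $c_i := (\mW_{Qp}^\top \mW_{Kp})\, p_i + (\text{term from }\mW_{Qe}^\top\mW_{Kp}\text{-type cross pieces that don't depend on }j)$. If $\mW_{Qp}^\top \mW_{Kp} \neq 0$, then as $i$ (and hence $p_i = i/\Tmax$) grows without bound — allowed since we demand generalization to arbitrary length — the coefficient $c_i$ eventually has a definite sign and grows in magnitude; whichever sign it has, the affine-in-$j$ score on the $\bot$-positions is maximized at one of the two extremes, either the earliest $\bot$ (just after $\tau$) or the latest $\bot$ (just before $i$). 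I would then argue that for $i$ large enough, the gap that $c_i$ opens up between the winning $\bot$-position and the $\texttt{write}$ position $\tau$ exceeds the bounded token-embedding advantage that $\tau$ enjoys (that advantage is a fixed constant, independent of $T$, since the embeddings have fixed norm), so the $\argmax$ is pulled off $\tau$ onto a $\bot$-position: the head attends to a position with no information, contradicting perfect length generalization. Hence $\mW_{Qp}^\top \mW_{Kp}$ must be $0$.

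The main obstacle I anticipate is handling the cross terms $\mW_{Qe}^\top \mW_{Kp}$ and $\mW_{Qp}^\top \mW_{Ke}$ cleanly: the former contributes a term $\propto p_j$ with a coefficient depending on $E(x_i)$ (fixed once $i$'s token is fixed), which combines additively with the $(\mW_{Qp}^\top\mW_{Kp})p_i p_j$ term, so one must be careful that this cross term cannot conspire to cancel the offending quadratic-in-position term for \emph{all} query positions $i$ simultaneously — it cannot, because $p_i$ ranges over an unbounded set while $E(x_i)$ takes finitely many values, so the coefficient of $p_j$ cannot be forced to vanish for all large $i$ unless $\mW_{Qp}^\top\mW_{Kp}=0$. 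The latter cross term, $\mW_{Qp}^\top \mW_{Ke}$, contributes to the score something depending only on $x_j$ (through a $p_i$-scaled coefficient) but \emph{not} on $j$ once we restrict to $\bot$-positions, so it is absorbed into the "token-embedding contribution" bucket and does not affect the $\argmax$ among $\bot$'s. I would want to state precisely, probably as a short lemma, that a family of affine functions $j \mapsto a + c_i\, j$ on an interval $[\tau+2,\, i-1]$ of $\bot$-positions, perturbed by an $O(1)$ additive bonus at the single point $j=\tau$, has its maximum at $j=\tau$ only if $|c_i|$ is $O(1/(i-\tau))$ — which fails once $|c_i| = \Omega(i/\Tmax^2)$ and $i\to\infty$ with $\Tmax$ fixed — and then the body of the proof is just instantiating this with the explicit $c_i$.
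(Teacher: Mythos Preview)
Your decomposition of the attention score into the four block pieces and the overall strategy of exhibiting an adversarial sequence are exactly the paper's approach. The case $\mW_{Qp}^\top\mW_{Kp}>0$ goes through as you describe: on a sequence with one early \texttt{write} at $\tau$ followed by $\bot$'s, the slope $c_i$ is eventually positive, the winning $\bot$ is near $j=i$, and the quadratic term $\mW_{Qp}^\top\mW_{Kp}\,p_ip_j\sim i^2/\Tmax^2$ swamps every $O(i)$ contribution, pulling the $\argmax$ off $\tau$.

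However, there is a genuine gap for the other sign. If $\mW_{Qp}^\top\mW_{Kp}<0$, then for large $i$ your $c_i$ is negative, so the affine score over the $\bot$-positions is \emph{decreasing} in $j$ and is maximized at the \emph{earliest} $\bot$, namely $j=\tau+1$. But the write sits at $j=\tau$, which is even earlier; extending the same affine function one more step to the left only \emph{increases} the positional part of the score at $\tau$. Hence the ``gap that $c_i$ opens up'' between the best $\bot$ and the write is in $\tau$'s favor, not against it, and your lemma as stated (``max at $j=\tau$ only if $|c_i|=O(1/(i-\tau))$'') is simply false for $c_i<0$. On this single-write sequence the model may well attend correctly to $\tau$ no matter how negative $\mW_{Qp}^\top\mW_{Kp}$ is, so you cannot derive a contradiction. (Relatedly, your claim that the token-embedding advantage at $\tau$ is a ``fixed constant'' is not quite right: the cross term $p_i\,\mW_{Qp}^\top\mW_{Ke}E(x_j)$ contributes an $O(i)$ piece to the write-vs-$\bot$ comparison, though this is harmless in the positive case because the quadratic still dominates.) The paper handles the negative sign with a \emph{second} adversarial sequence in which the most recent \texttt{write} is placed late (at position $T-1$) and an earlier \texttt{write} sits at position $1$: now a negative $\mW_{Qp}^\top\mW_{Kp}$ biases attention toward small $j$ and forces $s_{T\to 1}>s_{T\to T-1}$, the wrong answer. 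You need an analogous second case.
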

\begin{proof}
Let $\ve^{(i)} \in \R^{d-1}$ denote the embedding vector (without the position encoding) for token $i \in \{0,1,2\}$.
Let $\vv_t = [\ve_t, p_t]^\top \in \R^d$ denote the embedding for the $t_{th}$ token, where $\ve_t \in \{\ezero, \eone, \etwo\} \R^d$ is the embedding of the token itself, and $p_t := i/\Tmax$ is the linear positional encoding.

Let $s_{i \rightarrow j}$ denote the pre-softmax attention score that the $i_{th}$ token puts on the $j_{th}$ token, which is given by 
\begin{align}
     &s_{i \rightarrow j} = \left\langle \mW_Q \vv_i, \mW_K \vv_j \right\rangle
    \\
    =& \ve_i^\top \mW_{Qe} \mW_{Ke} \ve_j + \ve_i^\top \mW_{Qe}^\top \mW_{Kp} \cdot p_j + (\ve_j)^\top \mW_{Ke} \mW_{Qp} \cdot p_i + \mW_{Qp}^\top \mW_{Kp} \cdot p_i p_j
    \\
    =& \ve_i^\top \mW_{Qe} \mW_{Ke} \ve_j + \frac{\ve_i^\top \mW_{Qe}^\top \mW_{Kp}}{\Tmax} \cdot j + \frac{(\ve_j)^\top \mW_{Ke} \mW_{Qp}}{\Tmax} \cdot i + \frac{\mW_{Qp}^\top \mW_{Kp}}{\Tmax^2} \cdot ij.
\end{align}

We will prove the proposition in two cases, which respectively require $\mW_{Qp}^\top \mW_{Kp} \leq 0$ and $\mW_{Qp}^\top \mW_{Kp} \geq 0$.

\paragraph{Case 1: $\mW_{Qp}^\top \mW_{Kp} \leq 0$ required}
Consider the case of long-term dependency, where the input sequence consists of an initial write and a series of reads, i.e. $\sigma_1 = 1$ and $\sigma_t = 0$ for $t > 1$.
Then for the $T_{th}$ position, the score for the first write token is
\begin{align}
    &s_{T \rightarrow 1} = \left\langle \mW_Q \vv_T, \mW_K \vv_1 \right\rangle
    \\
    =& {\ezero}^\top \mW_{Qe} \mW_{Ke} \eone + \frac{{\ezero}^\top \mW_{Qe}^\top \mW_{Kp}}{\Tmax} + \frac{(\eone)^\top \mW_{Ke} \mW_{Qp}}{\Tmax} \cdot T + \frac{\mW_{Qp}^\top \mW_{Kp}}{\Tmax^2} \cdot T
    \\
    =& \left(\frac{(\eone)^\top \mW_{Ke} \mW_{Qp}}{\Tmax} + \frac{\mW_{Qp}^\top \mW_{Kp}}{\Tmax^2}\right) \cdot T + O(1)
    = O(T),
\end{align}
and the score for the last write token is 
\begin{align}
    &s_{T \rightarrow T} = \left\langle \mW_Q \vv_T, \mW_K \vv_T \right\rangle
    \\
    =& {\ezero}^\top \mW_{Qe} \mW_{Ke} \ezero + \frac{{\ezero}^\top \mW_{Qe}^\top \mW_{Kp}}{\Tmax} T + \frac{{\ezero}^\top \mW_{Ke} \mW_{Qp}}{\Tmax} \cdot T + \frac{\mW_{Qp}^\top \mW_{Kp}}{C^2} \cdot T^2
    \\
    =& \frac{\mW_{Qp}^\top \mW_{Kp}}{\Tmax^2} \cdot T^2 + O(T).
\end{align}
Think of $\Tmax$ as going to infinity.
If $\mW_{Qp}^\top \mW_{Kp} > 0$, then there exists a sufficiently large $T$ such that $s_{T \rightarrow T} > s_{T \rightarrow 1}$.
Hence we need $\mW_{Qp}^\top \mW_{Kp} \leq 0$.

\paragraph{Case 2: $\mW_{Qp}^\top \mW_{Kp} \geq 0$ required}
Consider the input sequence where $\sigma_1 = 1$, $\sigma_{T-1} = 2$, and $\sigma_t = 0$ for $t \in [T] \setminus \{1, T-1\}$.
Similar to the above, calculate the pre-softmax attention scores for $\sigma_1, \sigma_{T-1}$ as
\begin{align}
    &s_{T \rightarrow 1} = O(T)
    \\
    &s_{T \rightarrow T-1} = \frac{\mW_{Qp}^\top \mW_{Kp}}{\Tmax^2} \cdot T^2 + O(T).
\end{align}
Since we need $s_{T \rightarrow T-1} > s_{T \rightarrow 1}$, it must be that $\mW_{Qp}^\top \mW_{Kp} \geq 0$.

\end{proof}

\end{document}